\documentclass[letterpaper,11pt, table]{article} % for initial submission
\usepackage[dvipsnames]{xcolor}
\usepackage[margin=0.9in]{geometry}
\usepackage{adjustbox}
\usepackage{makecell}
\usepackage{tikz}
\usetikzlibrary{positioning}
\usepackage{multicol}
\usepackage{url}
\usepackage{fix-cm} % Helps in scaling fonts

% \documentclass[accepted]{uai2024} % after acceptance, for a revised version;
% also before submission to see how the non-anonymous paper would look like

%% There is a class option to choose the math font
% \documentclass[mathfont=ptmx]{uai2024} % ptmx math instead of Computer
                                         % Modern (has noticeable issues)
% \documentclass[mathfont=newtx]{uai2024} % newtx fonts (improves upon
                                          % ptmx; less tested, no support)
% NOTE: Only keep *one* line above as appropriate, as it will be replaced
%       automatically for papers to be published. Do not make any other
%       change above this note for an accepted version.

%% Choose your variant of English; be consistent
\usepackage[american]{babel}
% \usepackage[british]{babel}

%% Some suggested packages, as needed:
\usepackage{natbib} % enables numeric citations
\bibliographystyle{plainnat}
\usepackage{tikz} % nice language for creating drawings and diagrams

\usepackage[inline]{enumitem}
\newlist{mylist}{enumerate*}{1}
\setlist[mylist]{label=\textbf{(\arabic*)}}

%\usepackage[small]{caption}
%\usepackage{bbm}
%\captionsetup[table]{skip=10pt}
%\NewDocumentCommand{\codeword}{v}{
%\texttt{\textcolor{black}{#1}}
%}
\usepackage{amsmath, amssymb, amscd, amsthm, amsfonts}
\usepackage{dutchcal} 
\usepackage{thmtools}

\usepackage{float}
\usepackage{multirow}
\usepackage{caption}
\usepackage{wrapfig}

\newtheorem{theorem}{Theorem}
\newtheorem{assumption}{Assumption}
\newtheorem{lemma}{Lemma}

\newtheorem{proposition}{Proposition}
\newtheorem{vignette}{Vignette}
\newtheorem{definition}{Definition}

% Recommended, but optional, packages for figures and better typesetting:
\usepackage{microtype}
\usepackage{graphicx}
\usepackage{subfigure}
\usepackage{booktabs} % for professional tables

% hyperref makes hyperlinks in the resulting PDF.
% If your build breaks (sometimes temporarily if a hyperlink spans a page)
% please comment out the following usepackage line and replace
% \usepackage{icml2024} with \usepackage[nohyperref]{icml2024} above.
% \usepackage{hyperref}
% \usepackage[bookmarks, colorlinks=true, citecolor=Violet,linkcolor=Mahogany,urlcolor=blue]{hyperref}

% Attempt to make hyperref and algorithmic work together better:

\usepackage{amsmath}
\usepackage{amssymb}
\usepackage{mathtools}
\usepackage{amsthm}

% if you use cleveref..
\usepackage[capitalize,noabbrev]{cleveref}

%%%%%%%%%%%%%%%%%%%%%%%%%%%%%%%%
% THEOREMS
%%%%%%%%%%%%%%%%%%%%%%%%%%%%%%%%
\theoremstyle{plain}

\theoremstyle{remark}

% Todonotes is useful during development; simply uncomment the next line
%    and comment out the line below the next line to turn off comments
%\usepackage[disable,textsize=tiny]{todonotes}

\usepackage{algorithm, algorithmic}
%\usepackage[bookmarks, colorlinks=true, citecolor=Violet,linkcolor=Mahogany,urlcolor=blue]{hyperref}

% Recommended, but optional, packages for figures and better typesetting:
\usepackage{subfigure}

%

%% Provided macros
% \smaller: Because the class footnote size is essentially LaTeX's \small,
%           redefining \footnotesize, we provide the original \footnotesize
%           using this macro.
%           (Use only sparingly, e.g., in drawings, as it is quite small.)

%% Self-defined macros
 % just an example

\title{Efficient Interactive Maximization of BP and Weakly Submodular Objectives}

\author{\\Adhyyan Narang~~~~Omid Sadeghi~~~Lillian J Ratliff~~~~Maryam Fazel~~~~Jeff Bilmes\\
  \{\small {adhyyan, omids, ratliffl, mfazel, bilmes\}@uw.edu}
  \\ \\Department of Electrical and Computer Engineering, University of Washington}

% The standard author block has changed for UAI 2024 to provide
% more space for long author lists and allow for complex affiliations
%
% All author information is authomatically removed by the class for the
% anonymous submission version of your paper, so you can already add your
% information below.
%
% Add authors
%\iffalse
%\author[1]{\href{mailto:<jj@example.edu>?Subject=Your UAI 2024 paper}{Jane~J.~von~O'L\'opez}{}}
%\author[1]{Harry~Q.~Bovik}
%\author[1,2]{Further~Coauthor}
%\author[3]{Further~Coauthor}
%\author[1]{Further~Coauthor}
%\author[3]{Further~Coauthor}
%\author[3,1]{Further~Coauthor}
%% Add affiliations after the authors
%\affil[1]{%
%    Computer Science Dept.\\
%    Cranberry University\\
%    Pittsburgh, Pennsylvania, USA
%}
%\affil[2]{%
%    Second Affiliation\\
%    Address\\
%    …
%}
%\affil[3]{%
%    Another Affiliation\\
%    Address\\
%    …
%  }
%\fi

% The below macros are copied in from \input{../commands.tex}
\makeatletter
\newcommand\notsotinyfortable{\@setfontsize\notsotinyfortable{8.3}{8.8}}
\newcommand\notsotinyforalgorithm{\@setfontsize\notsotinyforalgorithm{9.2}{10.5}}
\makeatother

\newcommand{\tvar}{\ensuremath{t}}
\newcommand{\Tvar}{\ensuremath{T}}
 % \@setfontsize\notsotinyfortable{7.0}{7.5}}

\newcommand{\hatTq}{\widehat{\Tvar_q}}
\newcommand{\hatpi}{\widehat{\pi}}

\definecolor{airforceblue}{rgb}{0.36, 0.54, 0.66}

\newcommand{\track}[1]{#1}

\newcommand{\deff}{d_{\text{eff}}}

\newcommand{\mutildei}{\tilde{\mu_{\tvar}}}
\newcommand{\sigmatildei}{\tilde{\sigma_{\tvar}}}
\newcommand{\tildeyvect}{\mathbf{\tilde{y}}_\tvar}
\newcommand{\tildeyvecold}{\mathbf{\tilde{y}}_{\tvar-1}}
\newcommand{\Kgginv}{K_{GG}^{-1}}
\newcommand{\R}{\mathbb{R}}
\newcommand{\Rt}{\mathcal{R}_\tvar}
\newcommand{\RT}{\mathcal{R}_\Tvar}
\newcommand{\kappaf}{\kappa_f}
\newcommand{\kappag}{\kappa^g}
\newcommand{\kappafk}{\kappa_{f, q}}
\newcommand{\kappagk}{\kappa^g_q}
\newcommand{\gammak}{\gamma_q}

\newcommand{\Sk}{S_q}
\newcommand{\Skj}{S_{j,q}}
\newcommand{\Skjnext}{S_{j+1,q}}
\newcommand{\Tk}{\Tvar_q}
\newcommand{\hk}{h_q}

\newcommand{\Skstar}{S_q^\ast}

\newcommand{\fone}{f_{1}}
\newcommand{\fkone}{f_{q,1}}
\newcommand{\gkone}{g_{q,1}}

\newcommand{\fk}{f_{q}}
\newcommand{\gk}{g_{q}}

\newcommand{\rmjsum}{\sum_{j = 1}^{\Tk} r_{m_j}}

\newcommand{\rjsum}{\sum_{j = 1}^{k} r_{j}}

\newcommand{\Nystrom}{Nystr\"om}
\newcommand{\NystromSelect}{Nystr\"omSelect}

% context vector variable.
\newcommand{\ctx}{\ensuremath{\phi}}
% The set of possible contexts
\newcommand{\Ctxs}{\ensuremath{\Phi}}

\newcommand{\kernel}{\ensuremath{\mathcal{k}}}
\makeatother

\DeclareMathOperator{\argmax}{argmax}

% Reduce spacing before and after the algorithm environment
\setlength{\floatsep}{10pt}
\setlength{\textfloatsep}{10pt}
\setlength{\intextsep}{10pt}

\ifdefined\secminuslength
\else
  \newlength{\secminuslength}
\fi
\setlength{\secminuslength}{-1.4ex}

\begin{document}
\maketitle

\begin{abstract}

In the context of online interactive machine learning with 
combinatorial objectives, we extend purely submodular prior work to 
more general non-submodular objectives.  
This includes: (1) those that are additively decomposable into a sum of
 two terms (a monotone submodular and monotone supermodular term, 
 known as a BP decomposition); 
 and (2) those that are only weakly submodular.  
 In both cases, 
 this allows representing 
 not only competitive (submodular) 
 but also complementary (supermodular) relationships between objects, 
 enhancing this setting to a broader range of applications 
 (e.g., movie recommendations, medical treatments, etc.) 
 where this is beneficial.  
 In the two-term case, moreover, 
 we study not only the more typical monolithic feedback approach but 
 also a novel framework where feedback is available separately for each 
 term.  
 With real-world practicality and scalability in mind, 
 we integrate \Nystrom{} sketching techniques to significantly reduce 
 the computational cost, 
 including for the purely submodular case. 
 In the Gaussian process contextual bandits setting, 
 we show sub-linear theoretical regret bounds in all cases. 
 We also empirically show good applicability to recommendation systems 
 and data subset selection.\footnote{The code for this paper is available at: \url{https://github.com/AdhyyanNarang/online_bp}.}
\end{abstract}

%%%%%%%%%%%%%%%%%%%%%%%%%%%%%%%%%%%%%%%%%%%%%%%%%%%%%%%%%%%%%%%%%%%%%%%%%%%%%%%%%%%%%%%%%%
%%%%%%%%%%%%%%%%%%%%%%%%%%%%%%%%%%%%%%%%%%%%%%%%%%%%%%%%%%%%%%%%%%%%%%%%%%%%%%%%%%%%%%%%%%
%%%%%%%%%%%%%%%%%%%%%%%%%%%%%%%%%%%%%%%%%%%%%%%%%%%%%%%%%%%%%%%%%%%%%%%%%%%%%%%%%%%%%%%%%%
\section{Introduction}
\label{sec:introduction}

Many machine learning paradigms are offline, where a learner must
understand the associations and relationships within a dataset that is
gathered, fixed, and then presented.  Interactive learning, by
contrast, involves a dynamic, repeated, and potentially everlasting
interaction between the algorithm (learner) and the environment
(teacher), better mimicking natural organisms as they proceed through
life. Interactive learning is quite important for applications such as
recommender systems~\citep{mary2015bandits}, natural language and
speech processing~\citep{ouyang2022training}, interactive computer
vision~\citep{le2022deep}, advertisement
placement~\citep{schwartz2017customer}, environmental monitoring~\citep{srivastava2014surveillance},
personalized medicine~\citep{durand2018contextual}, adaptive website
optimization~\citep{white2013bandit}, and robotics~\citep{robotics_rl}, to name only a few.

\begin{table*}[t]
\centerline{
\adjustbox{max width=\linewidth}{
\notsotinyfortable{
  \begin{tabular}{l|c|c|c|c|c|}
\cline{2-6}
& \textbf{Offline} & \textbf{Pure Online} & \textbf{Online + \Nystrom} & \textbf{Online + Sep.\ FB} & \makecell{\textbf{Online + \Nystrom{}} \\ \textbf{+ Sep.\ FB}} \\ \hline
\multicolumn{1}{|l|}{\textbf{Modular}} & N/A & \makecell{\cite{icml2010_srinivas} \\ \cite{nips2011_krause}} & \cite{zenati22a} & N/A & N/A \\ \hline
\multicolumn{1}{|l|}{\textbf{SM}} & \cite{nemhauser1978analysis} & \cite{NIPS2017_krause} & \checkmark \cellcolor{green!25} & N/A & N/A \\ \hline
\multicolumn{1}{|l|}{\textbf{BP}} & \cite{jeff_bp} & \checkmark \cellcolor{green!25} & \checkmark \cellcolor{green!25} & \checkmark \cellcolor{green!25} & \checkmark \cellcolor{green!25}\\ \hline
\multicolumn{1}{|l|}{\textbf{WS}} & \makecell{\cite{das2011submodular} \\ \cite{bian2019guarantees}} & \checkmark \cellcolor{green!25} & \checkmark \cellcolor{green!25} & N/A & N/A \\ \hline
\end{tabular}
}
}
}
\caption{The present paper's novelty (\colorbox{green!25}{green \checkmark} which means new algorithms for sublinear regret) in
  the context of previous work.
  % % In case we need to go to roman numerals, here is the mapping, so keep this here.
  % Here, I = the present work,
  % II = \cite{nemhauser1978analysis},
  % III = \cite{icml2010_srinivas},
  % IV = \cite{das2011submodular},
  % V = \cite{nips2011_krause},
  % VI = \cite{NIPS2017_krause},
  % VII = \cite{jeff_bp},
  % VIII = \cite{bian2019guarantees},
  % and IX = \cite{zenati22a}.
  Here {\bf SM} refers to SubModular,
  {\bf BP} to suBmodular-suPermodular, and
  {\bf WS} to Weakly Submodular.
  {\bf Sep FB}  refers to the separate feedback BP setting introduced in this paper.
  N/A means not applicable.}
\label{tab:my_label}
\end{table*}

%Contextual bandits
The fundamental mathematical challenge in these settings is to
optimize a utility function that encapsulates the value or payoff of
different actions within a specific context.  While there are many
instances within this paradigm, including reinforcement, active,
online, and human-in-the-Loop (HitL) learning, one such setting is
contextual bandits.  In contextual bandits, the agent observes a set
of features (a context vector), takes an action and then gets a
reward from the environment. The goal is to maximize the total accumulated reward over a series of
actions over time.  The context significantly influences the optimal choice
of action. For example, in movie recommendation systems, the context
might include user demographics (even a specific user), past viewing
history, time of day, and so on.

%GPCB's
Gaussian Process Contextual Bandits (GPCB) extend this basic idea by
incorporating Gaussian Processes (GPs) for modeling the unknown reward
function \citep{seeger2008information,icml2010_srinivas,
  nips2011_krause, Valko2013FiniteTimeAO, camilleri21a}.  This
approach is particularly effective in scenarios where the relationship
between the context, actions, and rewards is complex and
non-linear. For any given context $\ctx_{u_\tvar}$ (where $u_\tvar \in [m]$ is
the index of one of $m \in \mathbb{Z}_+ \cup \{\infty\}$ possible contexts at time $t$), GPs allow the
easy expression of a posterior distribution based on previous rounds
in terms of Gaussian conditional mean $\mu_{\ctx_{u_\tvar}}$ and condition
variance $\sigma^2_{\ctx_{u_\tvar}}$ vectors where the former encodes value
and the latter encodes uncertainty.  These are combined in Upper
Confidence Bound (UCB) algorithms as
$\mu_{\ctx_{u_\tvar}}(v) + \beta_\tvar \sigma^2_{\ctx_{u_\tvar}}(v)$ to offer a combined
valuation of action $v \in V$ in the context of $\ctx_{u_\tvar}$ in terms of
exploration (high $\sigma^2_{\ctx_{u_\tvar}}(v)$) vs.\ exploitation (high
$\mu_{\ctx_{u_\tvar}}(v)$) where $\beta_\tvar$ is a computed time-dependent tradeoff
coefficient. The goal of GPCB is traditionally to minimize cumulative
regret, where the rewards at each time are compared to the best choice
at that time:
\abovedisplayshortskip=.4ex\belowdisplayshortskip=.4ex\abovedisplayskip=.4ex\belowdisplayskip=.4ex
$$
\mathcal{R}(\Tvar) = \sum_{t=1}^\Tvar f_{\ctx_{u_\tvar}}(v^*_{\ctx_{u_\tvar}}) -
f_{\ctx_{u_\tvar}}(v_\tvar), 
$$ 
where $v^*_{\ctx_{u_\tvar}}$ is the best choice for context
$\ctx_{u_\tvar}$ and $v_\tvar$ is the algorithm's choice at time $t$. Sublinear
regret means this increases more slowly than $\Tvar$ (i.e.,
$\lim_{\Tvar \to \infty} \mathcal{R}(\Tvar)/\Tvar = 0$).

%Chen et.al combines with submodular
\citet{NIPS2017_krause} made the important observation that GPCBs can
be used for online combinatorial, specifically
submodular, % (Appendix~\ref{sec:backgr-subm-superm}),
maximization, where an input set of size $\Tvar$ is incrementally
constructed over time. Offline monotone submodular maximization is
NP-hard but a greedy algorithm has an $\alpha$ multiplicative
approximation~\citep{nemhauser1978analysis} for $\alpha = 1-1/e$.
\citet{NIPS2017_krause} utilizes $\alpha$-regret, where the regret of
the online algorithm at time $\Tvar$ is based on comparing with the
$\alpha$ approximation of the offline algorithm, specifically
\abovedisplayshortskip=.4ex\belowdisplayshortskip=.4ex\abovedisplayskip=.4ex\belowdisplayskip=.4ex
\[
\mathcal{R}(\Tvar) = \alpha \sum_{q=1}^m f_{\ctx_q}(S^*_q) -
f_{\ctx_q}(S_{\Tvar_q,q}),
\]
where $f_{\ctx_q}$ is the submodular function for context $\ctx_q$ having $S^*_q$ as the optimal solution 
and $S_{\Tvar_q,q}$ is the algorithm's incrementally-computed
attempted solution both of size $|S^*_q|=|S_{\Tvar_q,q}|=T_q$, and
$T = \sum_q \Tvar_q$ where $T_q$ is the frequency of context $q$.
Thus, unlike the standard GPCP above which uses a summation of
pointwise quantities, this combinatorial $\alpha$-regret utilizes the
interdependencies between elements evaluated by the submodular
function.  These interdependencies strongly influence the best choices
at different time steps because of the submodular (i.e.,
non-independent) relationships. Another critical feature is that the
function $f$ is not available to the algorithm --- rather only noisy
gain queries $y_\tvar$ of the form
$y_\tvar = f(v|S_\tvar) + \epsilon_\tvar$ are available \emph{after}
$v$ has been committed, where $v$ is the algorithmic choice,
$S_\tvar = \{v_1,v_2, \dots, v_{t-1}\}$ constitutes the previous and
now fixed set of choices, and $\epsilon_\tvar$ is independent noise.
Compared to the offline setting, the online optimization setting
becomes significantly more mathematically challenging and requires
smoothness assumptions to achieve sublinear regret. However, the
online setting is natural for many applications.

%Limitations of that
Despite its many benefits, the purely submodular assumption
is not sufficiently expressive to capture essential properties of many
real-world environments. Consider the example of movie recommendations
--- in some cases, recommending a movie and its sequel will yield
greater rewards than recommending the two movies independently, a
complementarity (i.e., supermodularity) amongst actions. In
personalized medicine, certain combinations of medicines
might
together possess pharmacological synergy (a supermodularity) while
other combinations will be lethal (a submodularity).

%Our work lies in combining these fields and here are the contributions.
\subsection{Contributions} 

In the present paper, we offer results that
achieve sublinear $\alpha$-regret in the GPCB setting for a variety of
non-linear non-submodular utility functions that previously have not
been considered in the online setting.

We first consider when non-submodular utility functions $h=f+g$ can be
additively decomposed into the sum of a monotone submodular $f$ and
monotone supermodular $g$ components, known as a ``BP'' function
(Definition~\ref{def:bp}). BP functions allow for a much more
expressive representation of utility, capturing both the diminishing
returns inherent in submodular and the increasing complementary
returns characteristic of supermodular functions.  \citet{jeff_bp}
introduced and studied the offline maximization of BP functions
subject to a cardinality constraint --- this was shown
to have an approximation ratio of
$\alpha = \frac{1}{\kappa_{f}} \left[1 - e^{-(1 - \kappa^g)
    \kappa_{f}}\right]$ where $\kappa_f, \kappa^g \in [0,1]$ are the
submodular and supermodular curvatures that respectively measure how
far the functions $f,g$ are from being modular (see Section~\ref{sec:backgr-subm-superm}). In general, this
problem is inapproximable, but if $\kappa^g<1$, it is possible to
obtain approximation ratios for this problem using the greedy
algorithm. We study this problem in the $\alpha$ regret case showing
sublinear regret. More interestingly, this decomposition enables us to
study a novel form of \emph{separate feedback} where we receive separate
rewards each for the submodular $f$ and supermodular $g$ components.
In an interactive recommender system, for example, the utility
function might represent the combined effects of user satisfaction
(submodular due to saturation of interests) and network effects
(supermodular due to the increasing value of shared community
experiences) the rewards each of which can be available
separately. The stronger separate feedback case allows us to provide a
stronger $\alpha$-regret with
$\alpha = \min \left\{ 1 - \frac{\kappa_f}{e}, \; 1 -\kappa^g
\right\}$. This choice is inspired by~\citet{distorted_greedy}, who
proposed a distorted version of the offline greedy algorithm for BP
maximization problems and provided an improved
$\min\{1-\frac{\kappa_f}{e},1-\kappa^g \}$ approximation ratio.
See Appendix~\ref{sec:app_error} for further commentary on this approach.
\looseness-1

When $h: 2^V \to \mathbb R$ is not decomposable as with a BP function,
we next consider a monolithic $h$ that is \emph{Weakly Submodular}
(WS), defined as the following being true:
$\forall A \subseteq B \subseteq V$,
$\sum_{b \in B \setminus A} h(b|A) \geq \lambda h(B|A)$ for some
$\lambda \in [0,1]$ where $h(B|A) \triangleq h(B\cup A) - h(A)$.  The largest $\lambda$ for which $h$ is weakly
submodular is known as the submodularity
ratio $\gamma $~\citep{das2011submodular,JMLR:v19:16-534}, and $h$ is submodular
if and only if $h$ is $1$-weakly submodular.
\cite{bian2019guarantees} also introduced a generalized version of the
submodular curvature $\zeta$ for WS functions and studied the
approximation ratio of the offline greedy algorithm on such
functions. Inspired by these results, we present a sublinear regret
bound on WS functions with
$\alpha = \frac{1}{\zeta}\big(1-e^{-\zeta\gamma}\big)$.

We remark that in general, just because an offline algorithm can
achieve an $\alpha$-approximate solution to an NP-hard problem does
not guarantee that the online GPCB version can achieve sublinear
$\alpha$-regret --- it is in general quite challenging to show
sublinear $\alpha$-regret for new problems especially in the
combinatorial case when there are such dependencies between previous
and current actions.

A third contribution of our paper further addresses the main practical
computational complexity challenge with GPs, especially in
high-dimensional spaces.  The problem arises from performing
operations on Gram covariance matrices, whose shape increase as the
number of observations grow.  We address this challenge, for both the
$h$ as a BP function and $h$ as a WS function cases, by showing the
applicability of \Nystrom{} approximations, a technique traditionally
used in kernelized learning to efficiently handle large-scale data. We
show that \Nystrom{} approximations facilitate the efficient
computation of our utility functions by approximating its components
in a lower-dimensional space. This method significantly reduces the
computational complexity from the prohibitive $O(\Tvar^3)$ to a more
manageable form, typically $O(C\Tvar N^2)$ where $N$ is substantially
smaller than $\Tvar$ and represents the number of points in the
\Nystrom{} approximation.

% In practice, by enabling a more expressive modeling of utility
% functions, our framework can enhance the performance of interactive
% learning systems across various domains. For instance, in influence
% maximization in social networks, the decomposed utility function can
% more accurately capture the complex interplay between the influence of
% early adopters (submodular) and the network effects as more users get
% influenced (supermodular).

Lastly, in our numerical experiments, we empirically demonstrate the
above for two applications, movie recommendations and in machine
learning training data subset selection.

% \begin{figure}[t]
%      \centering
%      \begin{subfigure}[b]{0.32\columnwidth}
%          \centering
%          \includegraphics[width=\textwidth]{../ICML2023Draft/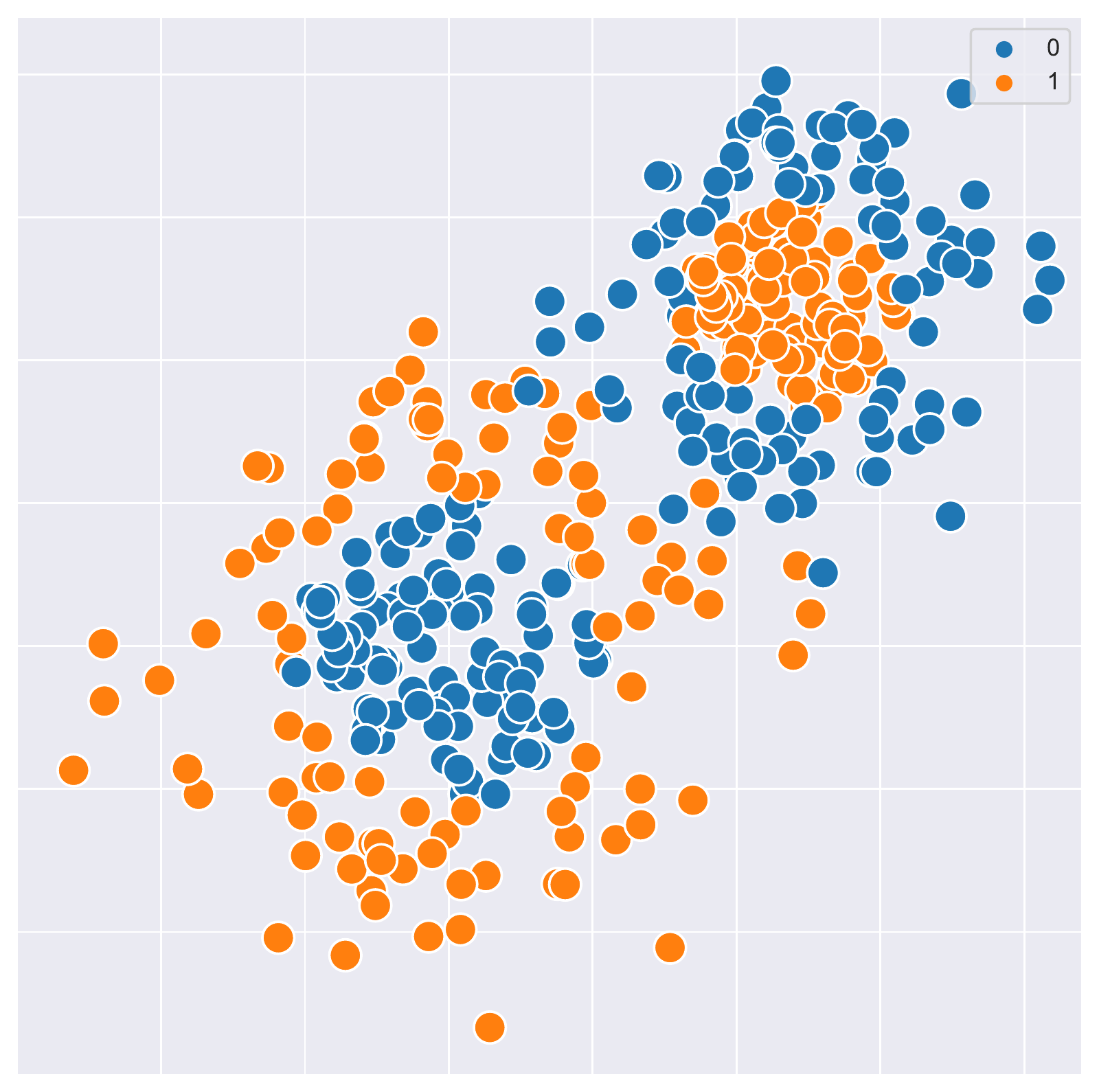}
%      \end{subfigure}
%      \begin{subfigure}[b]{0.32\columnwidth}
%          \centering
%          \includegraphics[width=\textwidth]{../ICML2023Draft/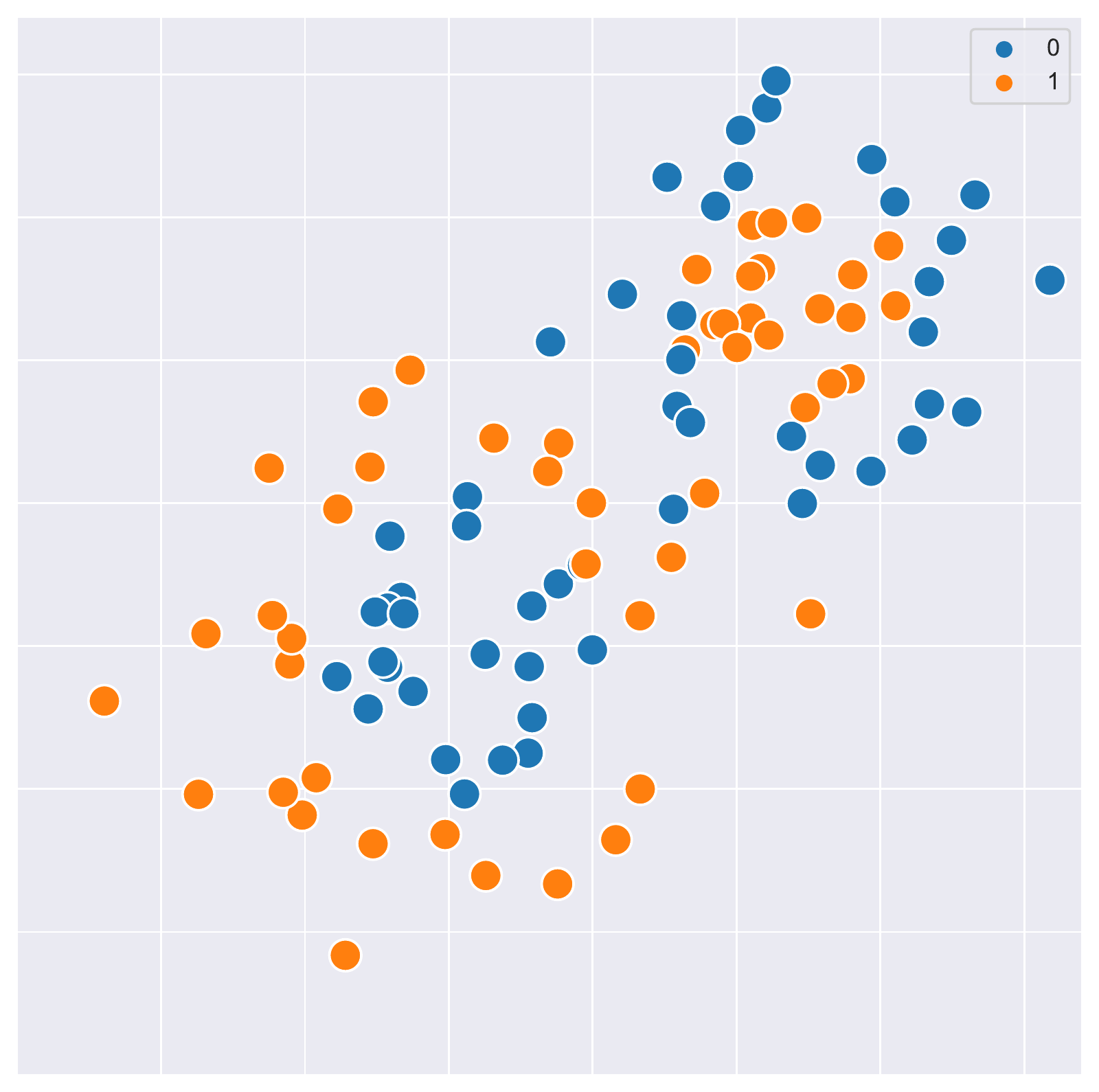}
%      \end{subfigure}
%      \begin{subfigure}[b]{0.32\columnwidth}
%          \centering
%          \includegraphics[width=\textwidth]{../ICML2023Draft/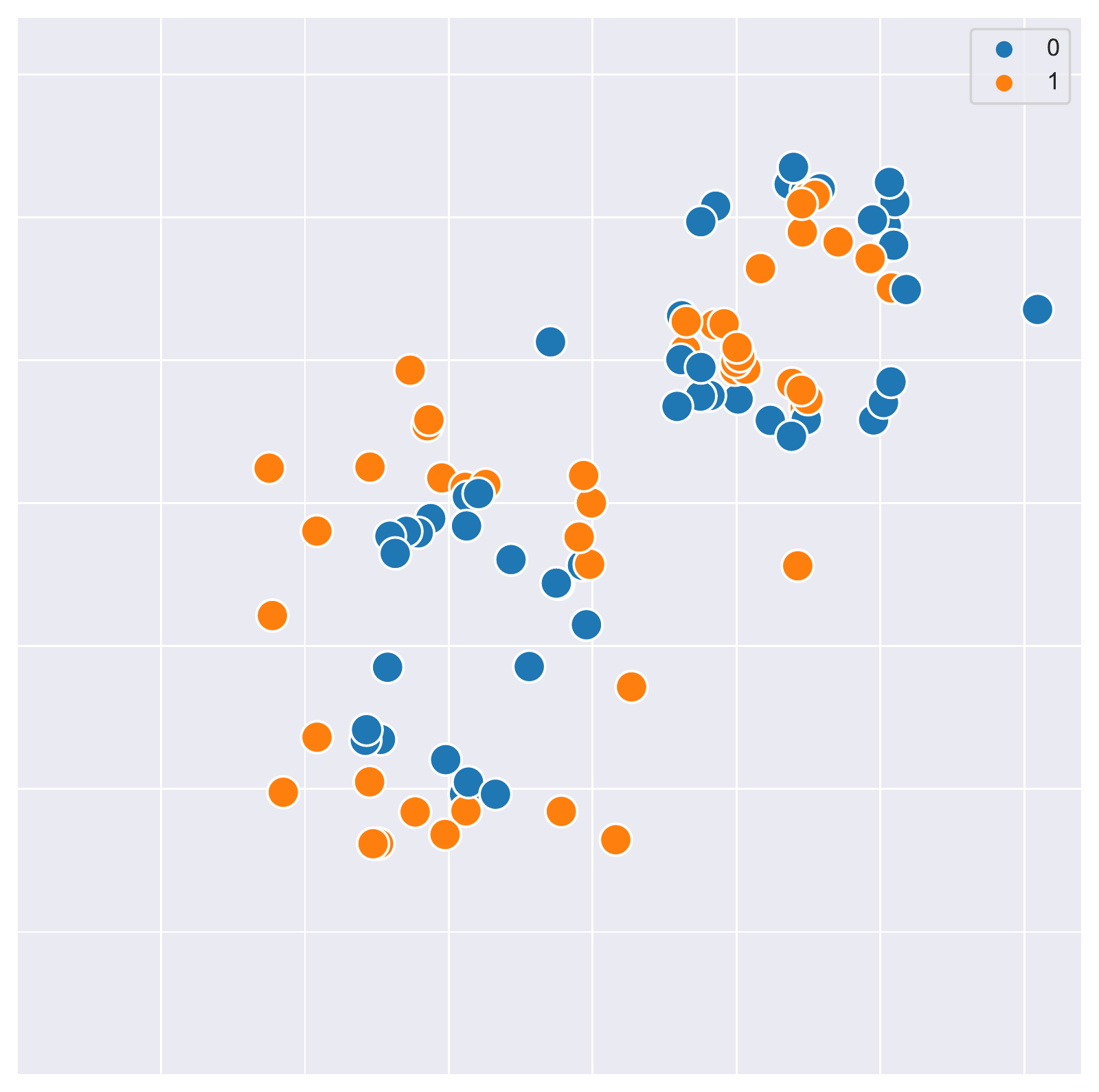}
%      \end{subfigure}
% \caption{Greedy algorithm selection on submodular (second panel) and BP (third panel) objectives for subset selection of $100$ points of training data from a ground set of $400$ points. The first panel depicts the entire training (ground) set.  The details are provided in Section~\ref{sec:numerical}. }
% \label{fig:active_learn}
% \end{figure}

%\vspace{0.5em}

\subsection{Background and Other Related Work} 
The above introduces this paper's 
novel contributions in the context of previous work which
Table~\ref{tab:my_label} briefly summarizes.  A very detailed
literature review is given in Appendix~\ref{app:related_work}.  We
highlight that our algorithms are inspired by the developments in
GPCBs that enable the optimization of unknown functions in
low-information online environments \citep{icml2010_srinivas,
  nips2011_krause, Valko2013FiniteTimeAO, camilleri21a}. In
particular, \citet{zenati22a} improves computational efficiency by
using \Nystrom{} points to speed up the algorithm with the same
asymptotic regret guarantee as prior work.  Additionally, there is a
line of work on ``combinatorial bandits'' that may appear similar to
our formulation at first glance~\citep{takemori20a, caramanis, kveton,
  chen_context_combinatorial} --- these study the computational
complexity of learning an unknown submodular function. However, the
feedback model in those papers is entirely different than us: a new
submodular function arrives at each time step and an entire set is
recommended.  In the present work, as mentioned above, we accumulate a
selected set over time for functions that arrive repeatedly. Hence,
this body of work is not comparable to the present work.

% In Table~\ref{tab:my_label}, we highlight that our work is strictly more general than the below prior-work settings along three main axes: (1) the classes of objectives they work with; (2) the informational assumptions i.e., whether it is offline or online; and (3) the feedback model.
% Here, ``WS" refers to functions with bounded submodularity ratio and generalized curvature, defined precisely in the following section. \textcolor{red}{... but you defined WS in the previous section... }

In the following section, we begin directly with our problem formulation.  For
further background on submodularity, supermodularity, BP functions, and
various curvatures, see Appendix~\ref{sec:backgr-subm-superm}.

%%%%%%%%%%%%%%%%%%%%%%%%%%%%%%%%%%%%%%%%%%%%%%%%%%%%%%%%%%%%%%%%%%%%%%%%%%%%%%%%%%%%%%%%%%
%%%%%%%%%%%%%%%%%%%%%%%%%%%%%%%%%%%%%%%%%%%%%%%%%%%%%%%%%%%%%%%%%%%%%%%%%%%%%%%%%%%%%%%%%%
%%%%%%%%%%%%%%%%%%%%%%%%%%%%%%%%%%%%%%%%%%%%%%%%%%%%%%%%%%%%%%%%%%%%%%%%%%%%%%%%%%%%%%%%%%
\section{Problem Formulation}
\label{sec:problem-formulation}

Our optimizer operates in an environment
that occurs over $\Tvar$ time steps. Specifically, at each time step
$\tvar \in [\Tvar]$:
\begin{enumerate}[leftmargin=*]
\item The optimizer encounters one of $m$ set functions from the set
  $\{h_1 \ldots h_m \}$ each defined over the finite ground set
  $V$. The optimizer is ignorant of the function but
  knows its index $u_\tvar \in [m]$ as well
  as a context or feature-vector $\ctx_{u_\tvar}$
  for that index at round $\tvar$.

\item The optimizer computes and then performs/plays action
  $v_{\tvar} \in V$, and then adds $v_\tvar$ to its growing
  context-dependent set $S_{t_{u_\tvar},u_\tvar}$ of size
  $|S_{t_{u_\tvar},u_\tvar}|=t_{u_\tvar}$ with
  $\sum_{j \in [m]} t_j = t$. The set $S_{t_{u_\tvar},u_\tvar}$
  contains all items so far selected for the unknown function
  $h_{u_\tvar}$.

\item The environment offers the optimizer noisy marginal gain feedback. There are
  two feedback models:
  \begin{enumerate}[label=(3\alph*)]
  \item \textit{Monolithic Feedback:} The optimizer receives $y_\tvar$ with
    $y_\tvar = h_{u_\tvar} (v_\tvar| S_{t_{u_\tvar}, u_\tvar}) + \epsilon_\tvar$.
  \item \textit{Separate Feedback:} In the BP case, pair
    $(y_{f, \tvar},y_{g, \tvar})$ may be available with
    $y_{f, \tvar} = f_{u_\tvar} (v_\tvar| S_{t_{u_\tvar}, u_\tvar}) +
    \epsilon_\tvar/2$ and
    $y_{g, \tvar} = g_{u_\tvar} (v_\tvar| S_{t_{u_\tvar}, u_\tvar}) +
    \epsilon_\tvar/2$.
  \end{enumerate}
\end{enumerate}
The separate feedback case (3b) is relevant only
for applications (e.g., multiple surveys, etc.) where it is feasible.
Section~\ref{sec:separate_feedback} exploits this richer feedback
to improve performance. All feature-vectors $\ctx_{u_\tvar}$ are
chosen from set $\Ctxs$ of size $|\Ctxs|=m$, and we assume that the identity of the utility
function $h_q$ is determined uniquely by $\ctx_q$; hence, when clear
from context, we use $h_q$ to refer to $h_{\ctx_q}$.

%\track{Observe that the BP structure of the objective function introduces additional challenges for optimizing the function. Submodularity ensures, due to diminishing returns, that present decisions have a bounded impact on the potential to earn rewards in the future. In the presence of a significant supermodular curvature, however, making a good decision now can pave the way for substantial future rewards, whereas a poor decision may impede such opportunities.}

We observe how two applications may be formalized in our
framework.  Vignette~\ref{vignette:active_learning} is further
explored in Appendix~\ref{sec:app_numerical}.

\begin{vignette}[Movie Recommendations]
\label{vignette:movie}
\normalfont Each function $h_q$ captures the preferences of a single
user $q \in [m]$, and the index $u_\tvar \in [m]$ reveals which user
has arrived at time step $\tvar$. The action $v_\tvar$ performed at
time $\tvar$ is the optimizer's recommended movie to user
$u_\tvar$. The feature vector $\phi_{u_\tvar}$ contains user-specific
information, e.g., age range, favorite movies and genres, etc.  The
feedback gain $h_{u_\tvar}(v|A)$ is the enjoyment user $u_\tvar$ has
from watching movie $v$ having already watched the movies in set $A$.
\end{vignette}
\vspace{-0.5\baselineskip}
\begin{vignette}[Active Learning]
\label{vignette:active_learning}
\normalfont The optimizer chooses training points to be labeled for $m$ related tasks on the same dataset - for instance classification, object detection, and captioning. The function $h_q (A)$ is the test accuracy
of a classifier $f(A)$ trained on set $A$ on the $q^{th}$ task. Choosing an action $v_\tvar$ is tantamount to choosing a training point to be labeled for task $u_\tvar \in [m]$.
\end{vignette}

In our quest to design low regret online item-selection strategies for
these problems (made precise in Section~\ref{sec:alpha_regret}), we
first study the robustness of the greedy procedure for the offline
optimization of Monotone Non-decreasing Normalized (MNN)
functions (see Appendix~\ref{sec:backgr-subm-superm}) in
Section~\ref{sec:robustness}. Then in Section~\ref{sec:alpha_regret},
we show that our proposed online procedure approximates the offline
greedy algorithm, leveraging Section~\ref{sec:robustness} to obtain
online guarantees.

%%%%%%%%%%%%%%%%%%%%%%%%%%%%%%%%%%%%%%%%%%%%%%%%%%%%%%%%%%%%%%%%%%%%%%%%%%%%%%%%%%%%%%%%%%
%%%%%%%%%%%%%%%%%%%%%%%%%%%%%%%%%%%%%%%%%%%%%%%%%%%%%%%%%%%%%%%%%%%%%%%%%%%%%%%%%%%%%%%%%%
%%%%%%%%%%%%%%%%%%%%%%%%%%%%%%%%%%%%%%%%%%%%%%%%%%%%%%%%%%%%%%%%%%%%%%%%%%%%%%%%%%%%%%%%%%
\section{Offline Algorithm Robustness}
\label{sec:robustness}

We consider the problem of cardinality-constrained optimization of a MNN objective $h: 2^V \to \R$:
\abovedisplayshortskip=.4ex\belowdisplayshortskip=.4ex\abovedisplayskip=.4ex\belowdisplayskip=.4ex
\begin{align}
\label{eq:mnn_offline_opt}
\max_{S \in 2^V} h(S): |S| \leq k
\end{align}
Let $S^\ast$ denote an achieving set solving
Equation~\eqref{eq:mnn_offline_opt}. The most common approximation
algorithm for this problem
greedily~\citep{nemhauser1978analysis} maximizes the available marginal gain having
oracle access to $h$. In online settings, however,
we do not have this luxury. To help us analyze the online setting, therefore, we
consider a modified offline algorithm where the greedy choices might
be good only with respect to a set of additive ``slack'' variables
$r_j$, exploring the impact of this modification on approximation
quality for different classes of functions. Then in
Section~\ref{sec:alpha_regret} we develop an online algorithm that
emulates greedy in this way.

%%%%%%%%%%%%%%%%%%%%%%%%%%%%%%%%%%%%%%%%%%%%%%%%%%%%%%%%%%%%%%%%%%%%%%%%%%%%%%%%%%%%%%%%%%
\subsection{Greedy Selection Robustness}
\label{sec:greedy-select-robust}

We define an \textbf{approximate greedy} selection rule that, given
scalars $\{r_j\}_{j=1}^k$, chooses $v_j$ for each
$j \in [k]$ satisfying:
\begin{equation}
\label{eq:approximate_selection_rule}
  v_j \in \{v: h(v|S_{j-1}) \geq \argmax_{\tilde{v}} h(\tilde{v}|S_{j-1}) - r_j \},
\end{equation}
where $S_{j} = \{v_1 \ldots v_j\}$ and $S^*$ the optimal set of size $k$.
\begin{restatable}{lemma}{lemmaone}
  \label{prop:robust_greedy_bp}
  Any output $S$ of the approximate greedy selection rule in
  Equation~\eqref{eq:approximate_selection_rule} admits the following
  guarantee for BP objectives (Def.~\ref{def:bp}) for
  Problem~\eqref{eq:mnn_offline_opt}:
  \[
    h(S) \geq \frac{1}{\kappa_{f}} \left[1 - e^{-(1 - \kappa^g) \kappa_{f}}\right] h(S^\ast) - \sum_{j=1}^k r_j
  ,\]
  where $\kappa_f, \kappa^g$ are as defined in Definitions~\ref{def:submod_curvature} and ~\ref{def:supermod_curvature}.
\end{restatable}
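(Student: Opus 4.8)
The plan is to mimic the classical Conforti--Cornu\'ejols curvature analysis of the greedy algorithm (which is responsible for the $1/\kappa_f$ prefactor) while folding in the supermodular curvature $\kappa^g$ to absorb the $g$-component, and to carry the slacks $r_j$ through additively. Throughout write $h=f+g$, let $S_j=\{v_1,\dots,v_j\}$ be the sets produced by the rule in~\eqref{eq:approximate_selection_rule}, and let $\rho_j := h(v_j\mid S_{j-1})$ be the realized marginal gain, so that $h(S_j)=\sum_{i\le j}\rho_i$. The first step is to turn the approximate selection rule into a usable inequality: since $v_j$ forfeits at most $r_j$ relative to the true maximizer, and the maximum marginal is at least the average over the (at most $k$) elements of $S^\ast\setminus S_{j-1}$, monotonicity gives
\[
\rho_j + r_j \;\ge\; \max_{v} h(v\mid S_{j-1}) \;\ge\; \frac{1}{k}\sum_{v\in S^\ast\setminus S_{j-1}} h(v\mid S_{j-1}).
\]
I would then split the right-hand sum into its $f$- and $g$-parts and bound each using its curvature.

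For the submodular term I would establish the curvature-refined inequality $\sum_{v\in S^\ast\setminus S_{j-1}} f(v\mid S_{j-1}) \ge f(S^\ast) - \kappa_f\, f(S_{j-1})$. The key observation is that the total curvature of Definition~\ref{def:submod_curvature} yields $f(S_{j-1}\mid S^\ast)\ge (1-\kappa_f) f(S_{j-1})$ termwise, so $f(S^\ast\cup S_{j-1}) = f(S^\ast)+f(S_{j-1}\mid S^\ast) \ge f(S^\ast) + (1-\kappa_f)f(S_{j-1})$; combining with the submodular upper bound $f(S^\ast\cup S_{j-1}) \le f(S_{j-1}) + \sum_{v} f(v\mid S_{j-1})$ and rearranging gives the claim. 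For the supermodular term I would use the supermodular curvature bound $g(v\mid B)\le \frac{1}{1-\kappa^g} g(v\mid A)$ for $A\subseteq B$ (Definition~\ref{def:supermod_curvature}), applied along an enumeration of $S^\ast\setminus S_{j-1}$, to obtain $\sum_{v\in S^\ast\setminus S_{j-1}} g(v\mid S_{j-1}) \ge (1-\kappa^g)\,g(S^\ast\mid S_{j-1})$, and then supermodularity ($g(S^\ast\mid S_{j-1})\ge g(S^\ast\mid\emptyset)=g(S^\ast)$) to conclude $\sum_{v} g(v\mid S_{j-1}) \ge (1-\kappa^g)\,g(S^\ast)$.

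The endgame, once the two bounds are assembled into a per-step inequality $k(\rho_j+r_j) \ge (1-\kappa^g)\big[h(S^\ast)-\kappa_f\, h(S_{j-1})\big]$, is a one-line recursion. Defining the shifted potential $\Phi_j := \tfrac{1}{\kappa_f} h(S^\ast) - h(S_j)$ and substituting $\rho_j = h(S_j)-h(S_{j-1})$, the inequality rearranges exactly to $\Phi_j \le \big(1 - \tfrac{(1-\kappa^g)\kappa_f}{k}\big)\Phi_{j-1} + r_j$. Unrolling over $j=1,\dots,k$ with $\Phi_0 = \tfrac{1}{\kappa_f}h(S^\ast)$, using the elementary bound $(1-x/k)^k\le e^{-x}$ with $x=(1-\kappa^g)\kappa_f$, and noting $(1-x/k)^{k-j}\le 1$ so that each slack enters with coefficient at most one, yields $\Phi_k \le e^{-(1-\kappa^g)\kappa_f}\tfrac{1}{\kappa_f}h(S^\ast) + \sum_{j} r_j$, which is precisely the stated guarantee after moving terms.

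The main obstacle I anticipate is the assembly step, i.e.\ making the submodular and supermodular contributions collapse into the single clean rate $(1-\kappa^g)\kappa_f$ and the single prefactor $1/\kappa_f$. Direct assembly of the two inequalities above gives only $k(\rho_j+r_j) \ge \big[f(S^\ast)-\kappa_f f(S_{j-1})\big] + (1-\kappa^g)g(S^\ast)$, and reaching the target form $(1-\kappa^g)\big[h(S^\ast)-\kappa_f h(S_{j-1})\big]$ requires controlling the residual $\kappa^g f(S^\ast) - \kappa^g\kappa_f f(S_{j-1}) + (1-\kappa^g)\kappa_f g(S_{j-1})$, whose sign is not obvious since $f(S^\ast)$ is $f$ evaluated at the optimum \emph{of $h$} rather than of $f$, so $f(S_{j-1})$ may a priori exceed $f(S^\ast)/\kappa_f$. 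The two curvatures genuinely enter asymmetrically --- the $f$-defect wants to contract at rate $\kappa_f/k$, faster than the common rate $(1-\kappa^g)\kappa_f/k$ --- and I expect most of the effort to go into verifying, using the precise curvature definitions, that these residual cross terms are nonnegative (or otherwise absorbable) so the per-step inequality holds as stated; the averaging and recursion-unrolling steps are routine by comparison.
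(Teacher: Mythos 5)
Your endgame (the averaging step, the recursion on $\Phi_j$, and the observation that each slack $r_j$ enters with coefficient at most one) is fine, and the way you fold the $r_j$ into the selection inequality matches what the paper does. But the per-step inequality $k(\rho_j+r_j)\geq(1-\kappa^g)\bigl[h(S^\ast)-\kappa_f\,h(S_{j-1})\bigr]$ that your recursion needs is not established, and the obstacle you flag at the end is not a technicality that can be absorbed --- it is the actual content of the lemma. Two things go wrong. First, even the pure-$f$ curvature step is incorrect as written: the curvature definition only gives $f(S_{j-1}\mid S^\ast)\geq(1-\kappa_f)\,f(S_{j-1}\setminus S^\ast)$, not $(1-\kappa_f)\,f(S_{j-1})$, so whenever the greedy prefix overlaps $S^\ast$ the gains of the overlapping elements enter with coefficient $1$ rather than $\kappa_f$ (and the averaging denominator shrinks from $k$ to $k-|S^\ast\cap S_{j-1}|$). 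Second, your residual $\kappa^g\bigl[f(S^\ast)-\kappa_f f(S_{j-1})\bigr]+(1-\kappa^g)\kappa_f\,g(S_{j-1})$ has no definite sign, precisely because $S^\ast$ optimizes $h$ and not $f$, so $f(S_{j-1})$ can exceed $f(S^\ast)/\kappa_f$. There is no simple sign argument that rescues the clean contraction rate $(1-\kappa^g)\kappa_f$ with prefactor $1/\kappa_f$; indeed the paper's Appendix~\ref{sec:app_naive} shows that the recursion-style argument only yields the strictly weaker ratio $1-e^{-(1-\kappa^g)}$, with no $\kappa_f$ refinement at all.

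The paper's actual proof instead records, for every $t$, the overlap-dependent constraint
\[
h(S^\ast)\;\leq\;\kappa_f\!\!\sum_{j\in[t-1]\setminus C_{t-1}}\!\!(a_j+r_j)\;+\;\sum_{j\in C_{t-1}}(a_j+r_j)\;+\;\frac{k-|C_{t-1}|}{1-\kappa^g}\,(a_t+r_t),
\]
where $C_{t-1}$ indexes the greedy picks that land in $S^\ast$ (Lemma~\ref{lem:mistake_bound}), and then feeds the perturbed gains $b_j=a_j+r_j$ into the family of linear programs $T(B,\xi,\rho)$ from \citet{jeff_bp}. The chain $h(S)+\sum_j r_j\geq T(C,\kappa_f,\kappa^g)\geq T(\emptyset,\kappa_f,\kappa^g)\geq\omega\,h(S^\ast)$ is what shows that the worst case over all overlap patterns is the empty-overlap one and that the slacks perturb the LP value only additively. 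If you want to complete your argument you would need to either reprove that LP comparison ($T(C,\cdot)\geq T(\emptyset,\cdot)$) or find a genuinely new way to handle the overlap; as it stands the proposal is missing the step that produces the $1/\kappa_f$ prefactor.
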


A proof is in
Appendix~\ref{sec:appr-greedy-bp}. This result is a generalization
of~\citet[Theorem 3.7]{jeff_bp} which is recovered by setting
$\forall j, r_j = 0$.  This result is surprising because, with the
supermodular part of the BP function, poor early selections may
preclude the ability to exploit potential increasing returns from $g$
--- the curvature $\kappa^g$ is crucial for this. The result can also be
understood as a generalization of \citet{NIPS2017_krause}, which
studies the robustness of the greedy algorithm to errors in submodular
functions. In their case, however, they adapt the simple classical
greedy algorithm proof~\citep{nemhauser1978analysis}.  In
Appendix~\ref{sec:app_naive}, we provide an alternate proof using a
crude bound that incorporates the supermodular curvature but ignores
the submodular curvature, reminiscent of the argument
in~\citet{NIPS2017_krause}. However, the approximation ratio obtained
is much worse than that of~\citet{jeff_bp}.

Therefore, we study (in Appendix~\ref{sec:appr-greedy-bp}) the
robustness using the detailed analysis in~\citet{jeff_bp}.  This poses
a considerable challenge, since \citet{jeff_bp} (inspired by
\citet{conforti1984submodular}) formulate an intricately designed
series of linear programs to show that any selection that has as much
overlap with the optimal solution as the greedy algorithm must achieve
the desired approximation ratio. Here, the errors $r_j$ manifest as
perturbations to the constraints of the linear programs.  We then
perform a \textit{sensitivity analysis} of the linear programs to
argue that these perturbations to the constraints lead to a linear
perturbation to the optimal objective and does not cause it to
explode.

In the case where $h$ does not have a BP decomposition, we
offer the following result generalizing~\citet{bian2019guarantees}.
\begin{restatable}{lemma}{lemmatwo}
\label{lem:robust_greedy_sr}
      Any output $S$ of the approximate greedy selection rule in Equation~\eqref{eq:approximate_selection_rule} admits the following guarantee on objectives with submodularity ratio $\gamma$ and generalized curvature $\zeta$ (Definitions~\ref{def:submod_ratio} and \ref{def:gen_curvature}) for Problem~\eqref{eq:mnn_offline_opt}:
  \[
    h(S) \geq \frac{1}{\zeta}\big(1-e^{-\zeta \gamma}\big) h(S^\ast) - \sum_{j=1}^k r_j
  .\]
\end{restatable}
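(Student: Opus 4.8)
The plan is to adapt the greedy analysis of \citet{bian2019guarantees}, which this lemma generalizes (it is recovered at $r_j=0$), while carrying the additive slack $\{r_j\}$ through the recursion so that it accumulates only additively. Write the greedy sets as $S_j = \{v_1,\dots,v_j\}$ with increments $\delta_j \triangleq h(v_j\mid S_{j-1})$, so $h(S) = \sum_{j=1}^k \delta_j$, and let $S^\ast$ be an optimal set with $|S^\ast|=k$. The whole argument reduces to establishing a single affine per-step inequality relating $\delta_j$, $h(S^\ast)$, and $h(S_{j-1})$, and then unrolling it.

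For the per-step bound I would combine three ingredients. First, the approximate greedy rule in Equation~\eqref{eq:approximate_selection_rule} gives $\delta_j \geq h(v\mid S_{j-1}) - r_j$ for every $v$, so averaging over $v \in S^\ast \setminus S_{j-1}$ and using $|S^\ast\setminus S_{j-1}|\le k$ yields $\sum_{v\in S^\ast\setminus S_{j-1}} h(v\mid S_{j-1}) \le k(\delta_j + r_j)$. Second, the submodularity ratio $\gamma$ (Def.~\ref{def:submod_ratio}) lower-bounds this same sum by $\gamma\,h(S^\ast\mid S_{j-1}) = \gamma\big(h(S^\ast\cup S_{j-1}) - h(S_{j-1})\big)$. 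Third, and this is where the $\tfrac1\zeta$ sharpening comes from, I would apply the generalized curvature $\zeta$ (Def.~\ref{def:gen_curvature}) through a chain-rule decomposition of $h(S_{j-1}\mid S^\ast)$: expanding $h(S_{j-1}\mid S^\ast)$ as a telescoping sum of marginals and bounding each term below by $(1-\zeta)$ times the corresponding marginal at the smaller conditioning set gives $h(S_{j-1}\mid S^\ast) \ge (1-\zeta)\,h(S_{j-1})$. Since $h(S^\ast\cup S_{j-1}) = h(S^\ast) + h(S_{j-1}\mid S^\ast)$, this produces $h(S^\ast\cup S_{j-1}) - h(S_{j-1}) \ge h(S^\ast) - \zeta\,h(S_{j-1})$. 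Chaining the three inequalities gives the recursion
\[
\delta_j \;\ge\; \frac{\gamma}{k}\Big(h(S^\ast) - \zeta\,h(S_{j-1})\Big) - r_j .
\]

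Writing $u_j \triangleq h(S_j)$, this is the affine recursion $u_j \ge \big(1-\tfrac{\zeta\gamma}{k}\big)u_{j-1} + \tfrac{\gamma}{k}h(S^\ast) - r_j$ with $u_0=0$. Unrolling it, the homogeneous-plus-forcing part telescopes into a geometric sum $\tfrac{\gamma}{k}h(S^\ast)\sum_{i=0}^{k-1}\big(1-\tfrac{\zeta\gamma}{k}\big)^{i} = \tfrac{1}{\zeta}\big(1-(1-\tfrac{\zeta\gamma}{k})^k\big)h(S^\ast) \ge \tfrac1\zeta\big(1-e^{-\zeta\gamma}\big)h(S^\ast)$, using $1-x\le e^{-x}$. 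The slack contributes $-\sum_{i=0}^{k-1}\big(1-\tfrac{\zeta\gamma}{k}\big)^{i} r_{k-i}$, and since every coefficient $\big(1-\tfrac{\zeta\gamma}{k}\big)^{i}\in[0,1]$, this error is bounded below by $-\sum_{j=1}^k r_j$, giving exactly the stated guarantee.

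I expect the main obstacle to be the curvature step. One must verify that the chain-rule decomposition of $h(S_{j-1}\mid S^\ast)$ and the term-by-term application of the generalized-curvature definition are valid in the direction needed (adding the context $S^\ast$ reduces each marginal by at most a factor $1-\zeta$), including handling degenerate configurations where a greedy element already lies in $S^\ast$ or where $\zeta\to 0$ (the bound should reduce to the $(1-e^{-\zeta\gamma})/\zeta \to \gamma$ limit). The remaining delicacy is confirming that the slack enters purely additively: the coefficients multiplying the $r_j$ are products of per-step contraction factors bounded by $1$, so they cannot amplify the errors, which is what keeps the penalty at the clean $\sum_j r_j$ rather than something exponentially weighted. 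The monotonicity and normalization of $h$ (the MNN assumption) are used throughout to ensure all marginals and $h(S^\ast)$ are nonnegative so these sign arguments go through.
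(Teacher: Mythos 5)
Your overall scaffolding (a per-step affine recursion, geometric unrolling, and the observation that the contraction coefficients lie in $[0,1]$ so the slack stays purely additive) is reasonable, and your handling of the $r_j$ terms matches in spirit what the paper does by showing that $b_j=a_j+r_j$ is feasible for its linear program. But the curvature step you yourself flagged as the main obstacle is a genuine gap, and it does not survive scrutiny as written. The generalized curvature of Definition~\ref{def:gen_curvature} only controls marginals $h(v\mid A\setminus\{v\}\cup S)$ for $v\in A\setminus S$; it says nothing about elements of $A$ that already lie in $S$. Consequently the telescoping of $h(S_{j-1}\mid S^\ast)$ only yields
\[
h(S_{j-1}\mid S^\ast)\;=\;\sum_{i:\,v_i\in S_{j-1}\setminus S^\ast} h(v_i\mid S^\ast\cup S_{i-1})\;\ge\;(1-\zeta)\sum_{i:\,v_i\in S_{j-1}\setminus S^\ast}\delta_i,
\]
because the terms with $v_i\in S^\ast$ vanish identically. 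Your claimed inequality $h(S_{j-1}\mid S^\ast)\ge(1-\zeta)\,h(S_{j-1})$ is therefore false whenever the greedy prefix overlaps the optimum and $\zeta<1$: taking $S_{j-1}=\{v\}\subseteq S^\ast$ with $h(v)>0$ gives the false claim $0\ge(1-\zeta)h(v)$. The overlap scenario is precisely the one greedy is supposed to land in, so it cannot be dismissed as degenerate. The recursion $u_j\ge(1-\tfrac{\zeta\gamma}{k})u_{j-1}+\tfrac{\gamma}{k}h(S^\ast)-r_j$ is thus not established, and the deficit $(1-\zeta)\sum_{i\in C_{j-1}}\delta_i$ it leaves behind cannot simply be dropped.

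What is missing is an argument that the disjoint case is the worst case. The paper keeps the overlap indices $C_t$ explicit: Lemma~\ref{lem:sr_mistake_bound} assigns coefficient $1$ (not $\zeta$) to the overlapping elements and compensates with the tighter factor $\frac{1}{\gamma}(k-|C_t|)$ on the current gain, and then feeds these $k$ inequalities into the parameterized linear program $T(B,\xi,\rho)$ of Equation~\eqref{eq:lp} with $(\xi,\rho)=(\zeta,1-\gamma)$. The chain $h(S)+\sum_j r_j\ge T(C,\zeta,1-\gamma)\ge T(\emptyset,\zeta,1-\gamma)\ge\frac{1}{\zeta}\bigl(1-(1-\tfrac{\gamma\zeta}{k})^k\bigr)h(S^\ast)$ then does the work; the middle inequality $T(C,\cdot)\ge T(\emptyset,\cdot)$ (Lemma D.2 of \citet{jeff_bp}, in the tradition of \citet{conforti1984submodular}) is exactly the ``overlap only helps'' statement your recursion needs but does not prove. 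Note that your unrolled formula coincides with the value of that LP at $C=\emptyset$, so your computation does recover the right constant in the disjoint case; to complete the proof you would have to either reproduce the LP sensitivity argument or find a direct induction that tracks $|C_t|$.
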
%
A proof is given in Appendix~\ref{sec:appr-greedy-ws}. We see in
Section~\ref{sec:alpha_regret} that Lemmas~\ref{prop:robust_greedy_bp}
and \ref{lem:robust_greedy_sr} are key to the analysis of
Algorithm~\ref{alg:smucb}.

%%%%%%%%%%%%%%%%%%%%%%%%%%%%%%%%%%%%%%%%%%%%%%%%%%%%%%%%%%%%%%%%%%%%%%%%%%%%%%%%%%%%%%%%%%
% \subsection{Robustness of Distorted Greedy Selection}
\subsection{Distorted BP Greedy Robustness}
\label{sec:sub_distorted_robustness}

In \citet{distorted_greedy}, the authors present a ``distorted'' version of the greedy algorithm,
which achieves a better greedy approximation ratio than~\citet{jeff_bp} for
Problem~\eqref{eq:mnn_offline_opt} with a BP objective.
%\ljr{the grammar here isnt clear. problem 1 is a single object problem with constraints... and you are then referring to it with "objectives" as if its multiobjective... I se the problem. you are using it in the plural where it should not be... should be "a BP objective"}.
Here, we study its robustness.

As in \citet{sviridenko2017optimal}, we define the modular lower
bound of the submodular function
$l_{1}(S) = \sum_{j \in S} f(j| V \backslash \{j\})$.  Also,
define the totally normalized submodular function as
$\fone(S) = f(S) - l_{1}(S)$.  Note that $\fone$ always has
curvature $\kappa_f = 1$ and also that $h (S) = f_{1}(S) + g (S) + l_{1}(S)$. 
We define the function $\pi_{j} (v | A)$ as:
\begin{equation}
\label{def:pi}
    \pi_{j} (v | A) = \left(1 - \frac{1}{k}\right)^{k - j - 1} f_{1} (v|A) + g(v|A) + l_{1}(v)
\end{equation}
In~\citet{distorted_greedy}, the
optimizer greedily maximizes the $\pi_{j}$ function at step $j$ rather than
the original BP gain.  In $\pi_{j}$, the submodular
part is down weighted relative to the supermodular part.  Intuitively,
this is helpful because the supermodular part is initially much
smaller than the submodular part, but ultimately dominates the sum.
Thus, it is in the optimizer's interest to focus on the supermodular
part early, rather than waiting until it becomes large.

We define the \textbf{approximate distorted greedy} selection rule as
follows. Given scalars $\{r_j\}_{j=1}^k$, in each step
$j = \{1, \ldots, k\}$, the optimizer chooses an item $v_j$ that
satisfies
 \begin{equation}
\label{eq:approximate_selection_rule_distorted}
  v_j \in \{v: \pi_j(v|S_{j-1}) \geq \argmax_{\tilde{v}} \pi_j(\tilde{v}|S_{j-1}) - r_j \}.
\end{equation}
We present a robust version of~\citet{distorted_greedy}:
\begin{restatable}{lemma}{lemmathree}
\label{prop:robust_distorted_greedy}
   Any output $S$ of the approximate distorted greedy selection rule in Equation~\eqref{eq:approximate_selection_rule_distorted} admits the following guarantee for Problem~\eqref{eq:mnn_offline_opt} with a BP objective (Def.~\ref{def:bp}):
  \[
    h(S) \geq \min \left\{ 1 - \frac{\kappa_f}{e}, \; 1 -\kappa^g \right\} h(S^\ast) - \sum_{j=1}^k r_j
  ,\]
  where $\kappa_f, \kappa^g$ are as defined in Def.~\ref{def:submod_curvature} and \ref{def:supermod_curvature}.
\end{restatable}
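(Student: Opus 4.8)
The plan is to adapt the distorted potential-function analysis of \citet{distorted_greedy} and to verify that the additive slacks $\{r_j\}$ propagate cleanly as a single term $-\sum_j r_j$. Using the totally normalized decomposition $h = \fone + g + l_1$ stated just before the lemma, I would introduce the distorted potential
\[
\Phi_j(S) = \Bigl(1 - \tfrac1k\Bigr)^{k-j}\fone(S) + g(S) + l_1(S), \qquad j = 0,1,\dots,k,
\]
chosen so that $\Phi_k = h$, while $\Phi_0(\emptyset) = 0$ by normalization (MNN), and whose per-step marginal gain reproduces the distorted objective $\pi_j$ of Eq.~\eqref{def:pi} that rule~\eqref{eq:approximate_selection_rule_distorted} approximately maximizes. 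Writing $S_j = S_{j-1}\cup\{v_j\}$ and telescoping, $h(S) = \Phi_k(S_k) = \sum_{j=1}^k\bigl[\Phi_j(S_j) - \Phi_{j-1}(S_{j-1})\bigr]$.

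The key step is to expand each increment as the distorted marginal $\pi_j(v_j\mid S_{j-1})$ plus a nonnegative ``distortion'' remainder $\tfrac1k(1-\tfrac1k)^{k-j}\fone(S_{j-1}) \ge 0$ arising from reweighting $\fone$ on the already-chosen set. The approximate distorted rule gives $\pi_j(v_j\mid S_{j-1}) \ge \max_{v}\pi_j(v\mid S_{j-1}) - r_j \ge \tfrac1k\sum_{v\in S^\ast}\pi_j(v\mid S_{j-1}) - r_j$, the last inequality being max-exceeds-average over the (at most $k$) elements of $S^\ast$. Since $r_j$ enters additively and is not scaled by $k$ nor by the distortion weight, summing over $j$ yields exactly the stated $-\sum_{j=1}^k r_j$, and the remaining ratio argument can be run as in the exact ($r_j=0$) case. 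For the submodular part I would use monotonicity and submodularity of $\fone$ to get $\sum_{v\in S^\ast}\fone(v\mid S_{j-1}) \ge \fone(S^\ast) - \fone(S_{j-1})$; the $-\fone(S_{j-1})$ term is cancelled precisely by the distortion remainder, and the resulting geometric recursion accumulates $\bigl(1-(1-\tfrac1k)^k\bigr)\fone(S^\ast) \ge (1-\tfrac1e)\fone(S^\ast)$. The modular term $l_1$ carries full weight and contributes $l_1(S^\ast)$, while the supermodular curvature $\kappa^g$ (Def.~\ref{def:supermod_curvature}) controls the supermodular marginals and yields a contribution of at least $(1-\kappa^g)g(S^\ast)$.

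It remains to convert these normalized contributions back to $f,g$ and combine them. From $\kappa_f$ (Def.~\ref{def:submod_curvature}) one has $l_1(S^\ast) = \sum_{v\in S^\ast} f(v\mid V\setminus v) \ge (1-\kappa_f) f(S^\ast)$, hence $\fone(S^\ast) = f(S^\ast) - l_1(S^\ast) \le \kappa_f f(S^\ast)$; substituting shows $(1-\tfrac1e)\fone(S^\ast) + l_1(S^\ast) \ge (1-\tfrac{\kappa_f}{e}) f(S^\ast)$. Together with the $g$-contribution this gives $h(S) \ge (1-\tfrac{\kappa_f}{e}) f(S^\ast) + (1-\kappa^g) g(S^\ast) - \sum_j r_j$. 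Finally, since $f,g$ are monotone normalized so that $f(S^\ast),g(S^\ast)\ge 0$, the elementary bound $a\,x + b\,y \ge \min\{a,b\}(x+y)$ collapses the two coefficients into $\min\{1-\tfrac{\kappa_f}{e},\,1-\kappa^g\}$ and yields the claim, using $h(S^\ast)=f(S^\ast)+g(S^\ast)$.

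I expect the main obstacle to be twofold. First, the slack must be compared in the right ``currency'': the rule controls $\pi_j$, so the competitor comparison must be made at the level of $\pi_j$ \emph{before} dividing by $k$, otherwise the distortion weights would rescale $r_j$; confirming that the remainder exactly cancels the $-\fone(S_{j-1})$ coupling (making the recursion genuinely geometric) is where this care is concentrated. Second, and more delicate, is the supermodular bookkeeping: unlike the submodular term, the per-element gains of $g$ do not telescope against $S^\ast$ in the favorable direction, so extracting the clean $(1-\kappa^g)$ factor requires carefully invoking the supermodular-curvature inequality to relate $\sum_{v\in S^\ast} g(v\mid S_{j-1})$ to the block gain while leaving the additive slack untouched.
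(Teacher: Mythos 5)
Your architecture is the same as the paper's: telescope the distorted potential $\Phi_j$ (with $\Phi_0(\emptyset)=0$ and $\Phi_k=h$), apply the approximate selection guarantee followed by max-exceeds-average over $S^\ast$, let the nonnegative reweighting remainder cancel the $-\fone(S_{j-1})$ coupling so the $\fone$ part accumulates $\bigl(1-(1-\tfrac1k)^k\bigr)\fone(S^\ast)\ge(1-\tfrac1e)\fone(S^\ast)$, and finish by converting with $l_1(S^\ast)\ge(1-\kappa_f)f(S^\ast)$ and collapsing via $\min\{a,b\}$. All of that matches the paper's proof, and you are right that the slack enters only as the single additive term $-\sum_j r_j$.

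The gap is precisely where you locate the ``delicate'' part: the supermodular bookkeeping. Your stated plan is to relate $\sum_{v\in S^\ast} g(v\mid S_{j-1})$ to the block gain via the supermodular-curvature inequality. That is exactly the step the paper identifies as the error in \citet{distorted_greedy}: applied to the totally normalized part $g_1=g-l_2$ with the curvature $\kappa^g$ of $g$, the inequality $\sum_{e\in S^\ast}g_1(e\mid S_t)\ge(1-\kappa^g)\bigl(g_1(S^\ast)-g_1(S_t)\bigr)$ is false --- for $g(S)=|S|^2$ and $S_t=\emptyset$ the left side is $0$ while the right side is positive. Applied to $g$ itself the curvature inequality is valid, but it leaves uncancelled $-(1-\kappa^g)g(S_{j-1})/k$ terms; unlike the submodular side, there is no distortion remainder on $g$ to absorb them, so the recursion is not geometric and the constant degrades. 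The fix the paper uses --- and which your outline is missing --- is to avoid the block gain entirely: split $g=g_1+l_2$ with $l_2(S)=\sum_{v\in S}g(v)$, use supermodularity to drop each marginal to its modular lower bound, $g(v\mid S_{j-1})\ge g(v)$, so each step's average contributes exactly $\tfrac1k\,l_2(S^\ast)$ (the nonnegative $g_1$ marginals are simply discarded), and only at the very end invoke $l_2(S^\ast)\ge(1-\kappa^g)g(S^\ast)$. With that substitution your outline closes and yields the stated bound; as written, the supermodular step either fails or weakens the constant.
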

This lemma is the key to the analysis of
Algorithm~\ref{alg:mnnucb_dist} in
Section~\ref{sec:separate_feedback}. We remark that the approximation
ratio above is different from~\citet{distorted_greedy}.
This is due to us fixing an error that we
noticed in their analysis, which caused the approximation ratio to
change from their
$\alpha = \min \left\{ 1 - \frac{\kappafk}{e}, \; 1 -\kappagk e^{(1 -
    \kappagk)} \right\}$ to our above.  Details are in
Appendix~\ref{sec:app_error}.  Additionally, note that
\citet{sviridenko2017optimal} provided a $1-\frac{\kappa_f}{e}$ lower
bound for monotone submodular maximization and later on,
\citet{jeff_bp} obtained a $1-\kappa^g$ lower bound for monotone
supermodular maximization. Our approximation ratio in
Equation~\eqref{eq:bp_regret_strong}
%\ljr{be consistent you use Equation~number in most places and then randomly dont. my preferences is to not say "Equation" as I think its redundant, takes space, and looks worse asthetically but its up to you just be consistent}
is simply the minimum of these two quantities.  In
Appendix~\ref{sec:app_distorted}, we provide a heat map that compares
this approximation ratio to that of \citet{jeff_bp}, showing that it
is strictly greater for all $\kappa_f, \kappa^g$.  Once
their analysis is fixed, we adapt their argument to the more general
case that allows for errors $r_j$ at each stage to complete the
robust online proofs.\looseness-1

%%%%%%%%%%%%%%%%%%%%%%%%%%%%%%%%%%%%%%%%%%%%%%%%%%%%%%%%%%%%%%%%%%%%%%%%%%%%%%%%%%%%%%%%%%
%%%%%%%%%%%%%%%%%%%%%%%%%%%%%%%%%%%%%%%%%%%%%%%%%%%%%%%%%%%%%%%%%%%%%%%%%%%%%%%%%%%%%%%%%%
%%%%%%%%%%%%%%%%%%%%%%%%%%%%%%%%%%%%%%%%%%%%%%%%%%%%%%%%%%%%%%%%%%%%%%%%%%%%%%%%%%%%%%%%%%
\section{No-Regret Single Feedback}
\label{sec:alpha_regret}

In the previous section, we considered the robustness of the greedy
algorithm in the offline setting.  We now return to our interactive
problem from Section~\ref{sec:problem-formulation} and will show how
it reduces to the offline problem.

First, we fully define the notion of scaled regret mentioned in
Section~\ref{sec:introduction}.  The scaling is chosen to compare with
the appropriate offline algorithm for the relevant function class; it
is standard to consider scaled regret for NP-hard problems (e.g.,
\citep{NIPS2017_krause}). Recall our interactive setup from
Section~\ref{sec:problem-formulation}.  Let $\Tvar_q$ represent the
number of items selected for function $h_q$ by the final round,
$\Tvar$, so that $\sum_{q=1}^m \Tvar_q = \Tvar$.  The set
$S_{\Tvar_q,q}$ is the final selection for $h_q$ and
we set $S_q = S_{\Tvar_q,q}$ for notational simplicity.  Let
$ S_q^* \in \argmax_{|S| \le \Tvar_q} h_{q} (S) $ be a
maximizing payoff set for
$h_{q}$ with at most $T_q$ elements.
Inspired by~\citet{jeff_bp} and
with respect to the approximation ratio
obtained for the greedy baseline for BP functions,
we define
the regret metric $\mathcal{R}_{\text{BP}}(\Tvar)$ as follows:
\begin{align}
  \label{eq:bp_regret}
  % % We pull this into the paragraph so the latex equation number does not spill onto the next line
  % % which frees up two paragraph lines of space!!
%  \mathcal{R}_{\text{BP}} (\Tvar) :=
\mathcal{R}_{\text{BP}}(\Tvar) := \sum_{q=1}^m \frac{1}{\kappa_{q,f}} \left[1 - e^{-(1 - \kappa_q^g) \kappa_{q,f}} \right] h_q (S_q^\ast) - h_q(S_q).
\end{align}
From Lemma~\ref{prop:robust_greedy_bp},
we recognize that if our online algorithm is approximately greedy as in Equation~\eqref{eq:approximate_selection_rule},
then our regret will be bounded by the accumulation of the approximation errors $r_j$.
This observation bridges the gap between the online and offline settings.
Hence, our goal is to design an algorithm that satisfies these properties.
%\ljr{A suggestion: stop using "our" everywhere. refer to the object for example if you define an Algorithm by name or environment, use it. and then say things like "the goal" etc. it depersonalizes the situation meaning the reviewer isnt connecting to you but the facts in the paper. another example like this is using "can" in places. Can suggests a question versus a factual statement like "this expression can be represented by.." bad just say "this expression is equivalent to.." etc something more factual.}
%with some $\{r_j^q\}_{j=1}^{\Tvar_q}$ for each $q$ so that \textbf{(B)} the overall errors $\sum_{q = 1}^m \sum_{j=1}^{\Tvar_q} r_j$ grow sublinearly in $\Tvar$, then it follows that we will obtain sublinear regret. Hence, our goal for the next section is to design an algorithm that satisfies these properties.
Analogously, for functions with bounded submodularity ratio $\gamma_q$ (Definition~\ref{def:submod_ratio}) and generalized curvature $\zeta_q$ (Definition~\ref{def:gen_curvature}), we define:
\begin{equation}
\label{eq:SR_regret}
\mathcal{R}_{\text{WS}} (\Tvar) := \sum_{q=1}^m \frac{1}{\zeta_q} \left[1 - e^{-\zeta_q \gammak} \right] h_q (S_q^\ast) - h_q(S_q).
\end{equation}
If we knew all the functions $\{h_1 \ldots h_m\}$, we could select the
greedy item at each stage and achieve zero regret.  Define
$\Delta(\phi, S, v) = h_\phi (v|S)$ to encapsulate all our $m$ latent
objectives succinctly (we also further below use the notational shortcuts
$x_\tvar = (\phi_{u_\tvar}, S_{u_\tvar}, v_\tvar)$ and $\Delta(x_t)$). If we knew this
function, we would know $\{h_1 \ldots h_m\}$ as well. Thus, our task
is to design a procedure to estimate $\Delta(\phi, v, S)$ from data
such that the approximation errors $r_j$ reduce over time.

To make this possible, we must make additional assumptions on
$\Delta(\cdot)$.  To see why, consider what we can infer from an
observation without any additional assumptions.
In the BP case, for instance, the $q$-th BP gain function is
uniquely defined by $2^{|V|}$ function evaluations $h_q(v|S)$ for each
possible $(v, S)$.  If we observe $h_q(v|S)$ for some $(v, S)$, then
we can only make inferences about $f_q(v|A)$ and $g_q(v|A)$ for all $A \subseteq S$ or
$A \supseteq S$; since we can only choose item $v$ once during the
optimization for user $q$, this information is not useful practically.
This motivates the following assumption.\footnote{See
  \citep{berlinet2011reproducing} for a comprehensive treatment of
  RKHS and kernels.}%
\begin{assumption}
  \label{ass:rkhs}
  The $\Delta(\cdot)$ function lives in a Reproducing Kernel Hilbert
  Space (RKHS) associated with some kernel $\kernel$ and has bounded norm
  i.e $\|\Delta\|_\kernel \leq B$.
\end{assumption}

The assumption ensures the outputs of the
$\Delta(\cdot)$ function vary smoothly with respect to the inputs and
is standard with GPCBs~\citep{seeger2008information,icml2010_srinivas,
  nips2011_krause, NIPS2017_krause}.  E.g., if two related movies are
watched by two similar users, they should provide similar
ratings. Thus, each query provides information about all
$m \cdot 2^{|V|}$ other possible queries to all functions, making
estimation feasible since the kernel
$\kernel((\phi_q, S, v), (\phi_{q'}, S', v'))$ measures similarity between
two inputs.

%%%%%%%%%%%%%%%%%%%%%%%%%%%%%%%%%%%%%%%%%%%%%%%%%%%%%%%%%%%%%%%%%%%%%%%%%%%%%%%%%%%%%%%%%%
\subsection{MNN-UCB Algorithm}
\label{app:one_page_alg}

%%%%%%%%%%%%%%%%%%%%%%%%%%%%%%%%%%%%%%%%%%%%%%%%%%%%%%%%%%%%%%%%%%%%%%%%%%%%%%%%
%%%%%%%%%%%%%%%%%%%%%%%%%%%%%%%%%%%%%%%%%%%%%%%%%%%%%%%%%%%%%%%%%%%%%%%%%%%%%%%%

% Two-column layout for the algorithms
\begin{multicols}{2}

    \begin{algorithm}[H]
      \caption{MNN-UCB}
      \label{alg:smucb}
      \notsotinyforalgorithm{
      \textbf{Input} set $V$, kernel function $\kernel$ \\
      \begin{algorithmic}[1]
          \STATE Initialize $S_q \leftarrow \emptyset, V_q \leftarrow V, \; \forall q \in [m]$
          \STATE Initialize $X_0 \leftarrow \emptyset, G_1 \leftarrow \emptyset$
          \FOR{$t \in \{1, 2, 3, \ldots, T\}$}
              \STATE Observe $u_t$ from environment.
              \IF{$t = 1$}
                  \STATE Choose $v_1 \in V_{u_t}$ uniformly at random
              \ELSE
                  \STATE Calculate $\mu_t, \sigma_t =$  
                  \STATE \; $\textsc{MVCalc}(V_{u_t}, \phi_{u_t}, S_{u_t}, G_t, G_{t-1}, x_{t-1}, X_{t-1}, \mathbf{y}_{t-1})$
                  \STATE Select $v_t \leftarrow \operatorname{argmax}_{v \in V_{u_t}} \mu_t(v) + \beta_t \sigma_t(v)$
              \ENDIF
              
              \STATE Obtain feedback $y_t = \Delta(x_t) + \epsilon_t$
              \STATE Update $S_{u_t} \leftarrow S_{u_t} \cup \{v_t\}, 
              V_{u_t} \leftarrow V_{u_t} \setminus v_t$
              \STATE Update $x_t \leftarrow (\phi_{u_t}, S_{u_t}, v_t), 
              X_t \leftarrow [x_{1}, x_2, \ldots x_t]$
              \STATE Update $\mathbf{y}_t \leftarrow [y_1, y_2, \ldots, y_{t}]^{\top}$
              \STATE Decide whether to store new point: $G_{t+1} = 
              \textsc{NystromSelect}(\kernel, G_t, x_t)$
          \ENDFOR
      \end{algorithmic}
      }
  \end{algorithm}

    % Algorithm 3: NystromSelect in the second column
    \begin{algorithm}[H]
      \caption{\textsc{NystromSelect}}
      \label{alg:nystrom}
      \notsotinyforalgorithm{
      \textbf{Input:} $\kernel, G_t, x_t$; \\
      \textbf{Locally stored variables:} List $L$ \\
      \textbf{Hyperparams:} Regularization $\lambda$, Accuracy $\eta$, Budget $b$ \\
      \begin{algorithmic}[1]
          \STATE If first call, init $L$ to an empty list.
          \STATE Compute leverage score $\hat{\tau}_t(\lambda, \eta, x_t)$ from Eq.~\eqref{eq:leverage_score}
          \STATE With probability $\min(b \cdot \hat{\tau}_t(\lambda, \eta, x_t), 1)$ include $x_t$ in $G_{t+1}$.
          \STATE Append $\hat{\tau}_t(\lambda, \eta, x_t)$ to L
      \end{algorithmic}
      }
      \textbf{Result:} $G_{t+1}$
  \end{algorithm}

    % Column break to ensure that the remaining algorithms are in the second column
    \columnbreak

      \begin{algorithm}[H]
        \label{alg:mean_var}
        \caption{\textsc{MVCalc}.}
        \notsotinyforalgorithm{
        \textbf{Input:} $V_{u_\tvar}, \phi_{u_\tvar}, S_{u_\tvar}, G_\tvar, G_{\tvar-1}, x_{\tvar-1}, S, \mathbf{y}_\tvar$\\
        \textbf{Locally stored variables: } $L_1, L_2, L_3$ \\
        \textbf{Hyperparams:} Regularization $\lambda$
        
        \begin{algorithmic}[1]
        \STATE  \vspace{0.3em} \textcolor{blue}{// Update $\Kgginv, \Lambda_{\tvar}, \mathbf{\tilde{y}}_{\tvar}$.}
        \IF{$|G_\tvar| = 1$}
        \STATE Init $L_1, L_2, L_3$ each to empty lists
        \STATE Init $\tildeyvect =  y_\tvar \kernel(x_{\tvar-1},x_{\tvar-1})$
        \STATE Init $\; K_{G_\tvar G_\tvar}^{-1} = 1/\kernel(x_{\tvar-1},x_{\tvar-1})$
        \STATE Init $\Lambda_\tvar = 1/[\kernel(x_{\tvar-1},x_{\tvar-1})^2 + \lambda \kernel(x_{\tvar-1},x_{\tvar-1})]$.
        \ELSIF{$G_\tvar = G_{\tvar-1}$}
          \STATE Update $\Lambda_\tvar$ using Eq~\eqref{eq:sherman_morrison_lambda}
        \STATE \textcolor{blue}{// Next line retreives $\tildeyvecold$ from $L_1$}
        \STATE $\tildeyvect = \tildeyvecold + y_\tvar \kernel_{G_\tvar}(x_{\tvar-1})$
        \ELSE
        % \STATE $x = G_{\tvar+1} \backslash G_\tvar$
        \STATE \textcolor{blue}{// Next two lines use lists $L_2, L_3$ resp.}
        \STATE Update $\Lambda_\tvar$ using Eq.~\eqref{eq:sherman_morrison_lambda} with Schur complements.
        \STATE Update $K_{G_{\tvar} G_{\tvar}}^{-1}$ using Eq~\eqref{eq:schur_kinv} with Schur complements
        \STATE $\tildeyvect = [\tildeyvecold + y_\tvar \kernel_{G_\tvar}(x_{\tvar-1}), K_S (x_{\tvar-1})^\top \mathbf{y}_\tvar]^\top$
        \ENDIF
        \STATE $z_\tvar \leftarrow (\phi_{u_\tvar}, S_{u_\tvar})$
        \STATE Append $(\tildeyvect, \Lambda_\tvar, \Kgginv)$ to $L_1, L_2, L_3$ lists resp
        \STATE  \vspace{0.3em} \textcolor{blue}{// Calculate mean and variance vectors.}
        \FOR{$v \in V_{u_\tvar}$}
            \STATE $\mutildei(v) \leftarrow \kernel_{G_\tvar}((z_\tvar, v))^{\Tvar}\ \Lambda_\tvar \tildeyvect$
            \STATE $\delta_\tvar (v) \leftarrow \kernel_{G_\tvar}((z_\tvar, v))^{\Tvar} (\Lambda_\tvar - \lambda^{-1} \ \Kgginv) \kernel_{G_\tvar}((z_\tvar, v))$
            \STATE $\sigmatildei(v)^2 \leftarrow \lambda^{-1} \kernel((z_\tvar, v), (z_\tvar, v)) + \delta_\tvar (v)$
        \ENDFOR
        \end{algorithmic}
        \textbf{Result: } $\{\mutildei(v)\}_{v \in V_{u_\tvar}},  \{\sigmatildei(v)\}_{v \in V_{u_\tvar}}$
        }
      \end{algorithm}
\end{multicols}

Algorithm~\ref{alg:smucb} is inspired by~\citep{NIPS2017_krause,
  zenati22a} based on Upper Confidence Bound (UCB) algorithms for
kernel bandits.  At time $\tvar \in [\Tvar]$, the optimizer has
available the noisy evaluations of the unknown $\Delta(\cdot)$
function in vector $\mathbf y_\tvar = (y_j)_{j=1}^{\tvar-1}$ for
corresponding inputs held in vector $X_\tvar = (x_j)_{j=1}^{\tvar-1}$
--- these are updated at the end of each iteration.  These
are used by the subroutine \textsc{MVCalc} that, using GP kernel techniques
\cite{Valko2013FiniteTimeAO, icml2010_srinivas} and the
\Nystrom{} set of samples $G_t \subset X_t$, efficiently
compute estimates of the GP posterior distribution's conditional mean
and variance used for the UCB marginal gains 
in the maximization (line 10 of Algorithm~\ref{alg:smucb}).

That is, the algorithm chooses the item $v_t$ that has the highest UCB
in line~$11$ where the parameter $\beta_\tvar$ controls the
algorithm's propensity towards either exploration or exploitation (see
Appendix~\ref{sec:remark-steps-beta_t}).  We use the notation
$\kernel_A (x) = [\kernel(x_1, x) \ldots \kernel(x_{|A|}, x)]$ to measure the similarity
between $x$ and every element in
$A = \{x_j\}_{j \in \{ 1, \dots, |A|\}}$. Hence,
$\kernel_{G_\tvar}((\phi_{u_\tvar}, S_{u_\tvar}, v))$ measures the
similarity of the input $(\phi_{u_\tvar}, S_{u_\tvar}, v)$ to the
historical data in \Nystrom{} set $G_\tvar$. Notation
$K_{AB}(v) = [\kernel(x, x')]_{x \in A, x'\in B}$ contains the matrix of
pairwise kernel-similarities for elements in $A, B$ and
$K_{G_\tvar G_\tvar}$ is the covariance matrix of the historical data
$G_\tvar$. 
Below, we describe the details for the two subroutines used in 
Algorithm~\ref{alg:smucb}, 
for the readers who are interested in calculations for kernel updates,
and selection of informative points to improve computation via \Nystrom{} sampling.

\paragraph{Efficiency and \textsc{\NystromSelect}}
In prior submodular bandits work~\citep{NIPS2017_krause}, each
iteration $\tvar \in [\Tvar]$ needs to invert a $\tvar \times \tvar$
matrix since \emph{all} historical data $X_t$ is used when calculating
the conditional means and variances.
Even if online matrix-inverse techniques are used, the
run-time becomes $O(\Tvar^3)$, which is impractical.
We use \Nystrom{} sampling to mitigate this and only use a selected subset
$G_\tvar \subset X_t$ of historical data to compute
$\mu_\tvar(v), \sigma_\tvar(v)$ for all $v \in V_{u_\tvar}$
\citep{zenati22a}.  \Nystrom{} sampling chooses the points that are
most useful for prediction. To define this precisely, we
introduce a bit of notation. Define $G' = G_\tvar \cup x_\tvar$.
Define the estimated
leverage score $\hat{\tau}_\tvar(\lambda, \eta, x)$ as:
\begin{equation}
  \label{eq:leverage_score}%
%    \hat{\tau}_\tvar =
  \frac{1 + \eta}{\lambda} 
  \left[\kernel(x, x) - \tilde{\kernel}_{G'}(x) 
  (\tilde{K}_{G'G'} + \lambda I)^{-1} \tilde{\kernel}_{G'}(x) \right]
\end{equation}
Define:
\[
M_\tvar = 
\begin{bmatrix}
  \text{diag}\left([\min(\hat{\tau}_j(\lambda, \eta, x_j), 1)]_{x_j \in G_\tvar} 
\right) & \textbf{0} \\
\textbf{0} & 1
\end{bmatrix}
\] 
It is the diagonal scaling matrix of (clipped) leverage scores of past selected points 
with an extra entry with value $1$ for the hypothetical new point $x_t$.
In Equation~\eqref{eq:leverage_score} above, we define
$\tilde{\kernel}_{G'}(x_\tvar) = M_\tvar^\top \kernel_{G'}(x_\tvar)$ and
$\tilde{K}_{G'G'} = M_\tvar^\top K_{G' G'} M_\tvar$.
%the procedure is detailed in the KORS subroutine (Algorithm~1 in \cite{zenati22a}).
The point $x_\tvar$ is included into the \Nystrom{} set $G_{\tvar+1}$
in with probability proportional to $\hat{\tau}_\tvar(\lambda, \eta, x_\tvar)$
(line 3 of Algorithm~\ref{alg:nystrom}). To understand why this is
reasonable, note that $\hat{\tau}_\tvar(\lambda, \eta, x_\tvar)$ is shown to
estimate the ridge leverage score (RLS) of a point well
\citep{calandriello17a, calandriello2019}. The RLS measures
intuitively how correlated the new point is to previous points; if it
is highly correlated, it will be sampled with low probability, but if
it is orthogonal, it will be sampled with high probability. 
This procedure improves the runtime of Algorithm~\ref{alg:smucb} to 
$O(\Tvar |G_T|^2)$, where $|G_T|$ is the number of selected \Nystrom{} points 
until the final timestep. 
A discussion on setting the hyperparameters $\eta$ and $b$,
controlling the tradeoff between regret and computation,
is given in Appendix~\ref{sec:remark-hyperparameters}.

\paragraph{\textsc{MVCalc}}
This subroutine calculates the posterior mean and variance for $\Delta(\cdot)$ using Gaussian process posterior calculations after projecting on the \Nystrom{} points $G_\tvar$. We define the intermediate quantity
\[
\Lambda_\tvar = \left(K_{G_\tvar S_\tvar} K_{S_\tvar G_\tvar} + \lambda K_{G_\tvar G_\tvar} \right)^{-1},
\]
which is useful in these updates.
Note that the algorithms store and track the local variables $\Kgginv, \Lambda_{\tvar}, \mathbf{\tilde{y}}_{\tvar}$ across time steps.
It needs to incrementally invert $\Kgginv$ and $\Lambda_{\tvar}$ as the time steps continue.
For $\Lambda_{\tvar}$, if $G_\tvar$ does not change, it does this using the Sherman-Morrison formula:
  \begin{equation}
    \label{eq:sherman_morrison_lambda}
  \Lambda_{\tvar} = \Lambda_{\tvar-1} - \frac{\Lambda_{\tvar-1} \kernel_{G_\tvar} (x_\tvar) \kernel_{G_\tvar} (x_\tvar)^\top \Lambda_{\tvar-1}}{1 + \kernel_{G_\tvar} (x_\tvar)^\top \Lambda_{\tvar-1} \kernel_{G_\tvar} (x_\tvar)}.
  \end{equation}
This update takes $|G_\tvar|^2 $ time.
In the case that $G_\tvar$ changes, we can write the expression for $\Lambda_{\tvar+1}$ as follows. In the below, let $a = K_{G_\tvar} (x_\tvar)$ and $c = \kernel(x_\tvar, x_\tvar)$.
\begin{equation}
\label{eq:schur_lambda}
\Lambda_{\tvar+1} = \begin{bmatrix}
  K_{G_\tvar S} K_{S G_\tvar} + a a^\top & K_{G_\tvar S}^\top a + c a \\
  a^\top K_{S G_\tvar} + c a & a^\top a + c^2
\end{bmatrix}^{-1}.
\end{equation}
We can use the Schur complement block-matrix inverse identity to evaluate the above
which takes $O(\tvar |G_\tvar|)$ time.
Similarly, for $K_{G_{\tvar} G_{\tvar}}^{-1}$, we write it in block-matrix form as:
\begin{equation}
\label{eq:schur_kinv}
K_{G_{\tvar} G_{\tvar}}^{-1} = \begin{bmatrix}
  K_{G_{\tvar-1} G_{\tvar-1}} & K_{G_{\tvar-1}}(x_\tvar) \\
  K_{G_{\tvar-1}}(x_\tvar)^\top & \kernel(x_\tvar, x_\tvar)
\end{bmatrix}^{-1},
\end{equation}
computable using Schur complements in $|G_\tvar|^2$ time.

%%%%%%%%%%%%%%%%%%%%%%%%%%%%%%%%%%%%%%%%%%%%%%%%%%%%%%%%%%%%%%%%%%%%%%%%%%%%%%%%%%%%%%%%%%
\subsection{Theoretical guarantee}
\label{sec:theor-guar}

For a given set
$X_T = \{x_1, \ldots, x_T\}$, all our bounds are stated in terms of the
effective dimension of the matrix $K_T = K_{X_T X_T}$, as described in
\cite{hastie2009elements}:
\begin{definition}
    \label{def:deff}
    $\deff (\lambda, \Tvar) = \text{Tr} (K_{T} (K_{T} + \lambda I)^{-1})$
  \end{definition}

Intuitively, the effective dimension is a measure of the number of
dimensions in the feature space that are needed to capture data variations.
Having a smaller effective dimension enables learning
the unknown $h_q$ with fewer samples.  If the empirical kernel matrix
$K_T$ has eigenvalues $(\lambda_1 \ldots \lambda_\Tvar)$, the
effective dimension can equivalently be written as
$\deff (\lambda, \Tvar) = \sum_{\tvar = 1}^\Tvar \frac{\lambda_\tvar}{\lambda_\tvar + \lambda}.$
% \[
% \deff (\lambda, \Tvar) = \sum_{\tvar = 1}^\Tvar \frac{\lambda_\tvar}{\lambda_\tvar + \lambda}.
% \]
Thus, if the eigenvalues decay quickly, the denominator will dominate for most summands and $\deff$ will be small.
If the kernel is finite dimensional, with dimension $s$, then only the first $s$ terms in the summation will be nonzero and $\deff \leq s$.
Having a small effective dimension makes the problem of learning the unknown objective function easier and hence improves our regret guarantee.
This quantity is inspired by classical work in statistics \citep{hastie2009elements, NIPS2002_tong}.

It is instructive to bound $\deff$ for different kernels.
We relate $\deff$ to the information gain $\tilde{\gamma}(\lambda,\Tvar)$ \citep{icml2010_srinivas}.
\begin{align*}
  \deff (\lambda, \Tvar) = \sum_{\tvar = 1}^\Tvar \frac{\frac{\lambda_\tvar}{\lambda}}{\frac{\lambda_\tvar}{\lambda} + 1} \leq \sum_{\tvar=1}^\Tvar \log\left(1 + \frac{\lambda_\tvar}{\lambda}\right) = \tilde{\gamma}(\lambda,\Tvar).
\end{align*}
Here, we used the inequality $\frac{x}{x+1} \leq \log(1+x)$ for $x \geq -1$.
\citet{icml2010_srinivas} provides bounds on $\tilde{\gamma}(\lambda,\Tvar)$ for various kernels.
For the (most popular) Gaussian kernel with dimension $d$,
they show that $\tilde{\gamma}(\lambda,\Tvar) \leq \log(\Tvar)^{d+1} $
which holds under the assumption that the eigenvalues $\lambda_\tvar$ are square summable.
We plot the exact $\deff (\lambda, \Tvar)$ for the Gaussian kernel in Figure~\ref{fig:deff_b} thereby verifying this
bound empirically.
For the linear kernel with dimension $d$, we have  $\tilde{\gamma}(\lambda,\Tvar) \leq d \log(\Tvar)$.
In our experimental results (see Section~\ref{sec:app_numerical}), we use a composite over three constituent kernels
and Theorems~2 and 3 of \citet{nips2011_krause} bound $\deff$ for the product or sums of kernels,
when $\tilde{\gamma}(\lambda, \Tvar)$ is bounded for each constituent.
This all ensures our regret bounds are sublinear in practice.

%\citet[Section 5.2]{nips2011_krause} show that for the product kernel of a $d_1$ dimensional linear kernel and $d_2$ dimensional Gaussian kernel, the effective dimension is $d_1 \log(\Tvar)^{d_2 + 1}$ in the worst-case over all sets $S$ of size $\Tvar$.
%Note that \citet{zenati22a} specify that $\deff$ is always bounded by $\sqrt{\Tvar}$, and in practice is much smaller;
%We refer the interested reader to \citet{nips2011_krause} for more details on the effective dimension for different kernels.
%this ensures that our regret bound below is sublinear in $\Tvar$.
Now, we are ready to state our main result.
\begin{theorem}
  \label{thm:bp_regret_bound}
  %Assume for all $\phi_q \in \Phi, v \in V, S \in 2^V$ that $h_{\cdot} (\cdot | \cdot) \leq 1$,
  Let Assumption~\ref{ass:rkhs} hold and assume that $\epsilon_\tvar$ are i.i.d centered sub-Gaussian (i.e., light tailed) noise.
  Then MNN-UCB (Algorithm~\ref{alg:smucb}) obtains the following regret:
  \begin{enumerate}[label=(\alph*),leftmargin=*,labelindent=0em,partopsep=-8pt,topsep=-8pt,itemsep=-2pt]
    \item When all $h_q$ are BP functions, we have that
    $\mathbb{E} [\mathcal{R}_{\text{BP}}(\Tvar)] \leq O\left(\sqrt{\Tvar} \left(B \sqrt{\lambda \deff} + \deff\right) \right) $
    \item When all $h_q$ are WS functions, we have that
    $\mathbb{E} [\mathcal{R}_{\text{WS}}(\Tvar)] \leq O\left(\sqrt{\Tvar} \left(B \sqrt{\lambda \deff} + \deff\right) \right) $
  \end{enumerate}
\end{theorem}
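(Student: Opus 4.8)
The plan is to reduce the online $\alpha$-regret to the offline robustness guarantees of Lemmas~\ref{prop:robust_greedy_bp} and~\ref{lem:robust_greedy_sr}, by showing that the UCB choices of Algorithm~\ref{alg:smucb} constitute a valid execution of the approximate greedy selection rule~\eqref{eq:approximate_selection_rule} with per-step slacks $r_t$ that concentration drives down. The key bookkeeping observation is that each time step $t$ contributes exactly one greedy step to the context $u_t$, so the double index $(q,j)$ ranging over contexts and their greedy steps is merely a reindexing of $t \in [\Tvar]$; hence $\sum_{q=1}^m \sum_{j=1}^{\Tvar_q} r_j^{(q)} = \sum_{t=1}^\Tvar r_t$. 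Applying Lemma~\ref{prop:robust_greedy_bp} in the BP case (resp.\ Lemma~\ref{lem:robust_greedy_sr} in the WS case) to each $h_q$ and summing over $q$ then yields, for the scaled regret of~\eqref{eq:bp_regret} (resp.~\eqref{eq:SR_regret}), the clean bound $\mathcal{R}(\Tvar) \le \sum_{t=1}^\Tvar r_t$. Since the remaining argument is a UCB analysis of the single joint marginal-gain function $\Delta$, it is structurally identical for both function classes, which is precisely why parts (a) and (b) share the same bound.

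Second, I would establish the confidence bounds. Under Assumption~\ref{ass:rkhs} and sub-Gaussian noise, a self-normalized RKHS concentration argument (in the style of the kernelized-bandit literature, adapted to the \Nystrom{}-sketched posterior produced by \textsc{MVCalc} as in~\citet{zenati22a}) gives, with high probability and uniformly over all candidate items $v \in V_{u_\tvar}$ and all $\tvar$, the two-sided bound $|\mutildei(v) - \Delta(\phi_{u_\tvar}, S_{u_\tvar}, v)| \le \beta_\tvar\, \sigmatildei(v)$ for a nondecreasing width sequence $\beta_\tvar = O\!\big(B\sqrt{\lambda} + \sqrt{\deff}\big)$. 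Given this, the per-step slack follows from the standard UCB sandwich: writing $v_t^\ast \in \argmax_{v \in V_{u_\tvar}} h_{u_\tvar}(v \mid S_{u_\tvar})$ for the exact greedy choice and using that $v_t$ maximizes the UCB in line~11 of Algorithm~\ref{alg:smucb},
\[
h_{u_\tvar}(v_t^\ast \mid S_{u_\tvar}) \le \mutildei(v_t^\ast) + \beta_\tvar \sigmatildei(v_t^\ast) \le \mutildei(v_t) + \beta_\tvar\sigmatildei(v_t) \le h_{u_\tvar}(v_t \mid S_{u_\tvar}) + 2\beta_\tvar\sigmatildei(v_t),
\]
so that~\eqref{eq:approximate_selection_rule} is met with $r_t = 2\beta_\tvar\sigmatildei(v_t)$.

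Third, I would bound $\sum_{t=1}^\Tvar r_t = 2\sum_{t=1}^\Tvar \beta_\tvar\sigmatildei(v_t)$. Pulling out $\beta_\Tvar$ (monotonicity) and applying Cauchy--Schwarz gives $\sum_t \sigmatildei(v_t) \le \sqrt{\Tvar \sum_t \sigmatildei(v_t)^2}$, and an information-gain / elliptical-potential argument controls the sum of posterior variances by the effective dimension, $\sum_{t=1}^\Tvar \sigmatildei(v_t)^2 = O(\deff)$, using $\deff \le \tilde\gamma(\lambda,\Tvar)$ from Section~\ref{sec:theor-guar}. Combining, $\sum_t r_t = O\!\big(\beta_\Tvar \sqrt{\Tvar\,\deff}\big) = O\!\big(\sqrt{\Tvar}\,(B\sqrt{\lambda\,\deff} + \deff)\big)$, matching both claimed bounds; the high-probability statement is converted to the stated expectation bound by choosing the failure probability $O(1/\Tvar)$, so that the rare failure event (on which gains stay bounded) contributes only lower-order terms.

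The main obstacle is the confidence-bound step together with the variance-sum control under the \Nystrom{} approximation: one must verify that projecting onto the sampled set $G_\tvar$ neither breaks the two-sided inequality nor inflates $\sum_t \sigmatildei(v_t)^2$ beyond $O(\deff)$. This requires showing that the leverage-score estimates $\hat\tau_\tvar$ produced by \textsc{\NystromSelect} yield, with high probability, a $G_\tvar$ that $(1+\eta)$-spectrally approximates the full kernel operator, so that $\sigmatildei$ stays within a constant factor of the exact posterior variance. Moreover, because the inputs $x_t = (\phi_{u_\tvar}, S_{u_\tvar}, v_t)$ are generated adaptively, both the spectral-approximation guarantee and the self-normalized concentration must be argued along the natural filtration via a martingale analysis rather than an i.i.d.\ one, which is the most delicate part to make rigorous.
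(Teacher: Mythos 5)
Your proposal is correct and follows essentially the same route as the paper: both reduce the $\alpha$-regret to the accumulated per-step slacks $\sum_t r_t$ via Lemmas~\ref{prop:robust_greedy_bp} and~\ref{lem:robust_greedy_sr}, and then bound that sum by a kernelized-UCB regret analysis. The only difference is that the paper invokes Theorem~4.1 of \citet{zenati22a} as a black box for the second step, whereas you sketch its internals (confidence bounds, elliptical potential, \Nystrom{} spectral approximation, martingale concentration) --- a faithful expansion of what that cited result delivers.
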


Below, we prove case (a); for the proof of case (b), refer to Appendix~\ref{sec:app_distorted}.
We see that Lemma~\ref{prop:robust_greedy_bp} and our algorithm design 
enables us to relate our notion of regret with the pointwise notion of regret from \citet{zenati22a}.
\begin{proof}
  We define $R_\tvar = \sum_{j=1}^\tvar r_j$ where
  $r_\tvar = \sup_{v \in V} h_{u_\tvar} (v | S_{t_{u_\tvar},u_\tvar}) -
  h_{u_\tvar} (v_{\tvar} | S_{t_{u_\tvar},u_\tvar})$.  Notice
  $R_\tvar$ is different from $\mathcal{R}_{\text{BP}}(\tvar)$. From
  Lemma~\ref{prop:robust_greedy_bp} for all $q$, and with
  $\mathcal{R}_{\text{BP}}(\Tvar)$ defined in
  Equation~\eqref{eq:bp_regret}, we have
  $\mathcal{R}_{\text{BP}}(\Tvar) \leq \sum_{\tvar = 1}^\Tvar r_\tvar
  = R_\Tvar$.
% \begin{align*}
%   \mathcal{R}_{\text{BP}}(\Tvar) =  & \sum_{q = 1}^m \frac{1}{\kappafk} \left[1 - e^{-(1 - \kappa_q^g) \kappa_{q,f}} \right] \hk(\Skstar) - \hk(\Sk) \\
%      & \leq \sum_{\tvar = 1}^\Tvar r_\tvar = R_\Tvar
% \end{align*}
  We model the problem of the present work as a contextual
bandit problem in the vein of \cite{zenati22a}. Here, the
  context in round $\tvar$ is
  $z_\tvar = (\phi_{u_\tvar}, S_{t_{u_\tvar},u_\tvar})$. We next invoke
  Theorem~4.1 in \cite{zenati22a} to complete our result. Further details are in Appendix~\ref{sec:app_distorted}.
\end{proof}

%%%%%%%%%%%%%%%%%%%%%%%%%%%%%%%%%%%%%%%%%%%%%%%%%%%%%%%%%%%%%%%%%%%%%%%%%%%%%%%%%%%%%%%%%%
%%%%%%%%%%%%%%%%%%%%%%%%%%%%%%%%%%%%%%%%%%%%%%%%%%%%%%%%%%%%%%%%%%%%%%%%%%%%%%%%%%%%%%%%%%
%%%%%%%%%%%%%%%%%%%%%%%%%%%%%%%%%%%%%%%%%%%%%%%%%%%%%%%%%%%%%%%%%%%%%%%%%%%%%%%%%%%%%%%%%%
%\vspace{3\secminuslength}
\section{No-Regret Separate Feedback}
 
\label{sec:separate_feedback}

\begin{algorithm}[tbh]
\caption{MNN-UCB-Separate (modified Algorithm~\ref{alg:smucb})}
\label{alg:mnnucb_dist}
Line 14 of Algorithm~\ref{alg:smucb} replaced with the following:\\[-1\baselineskip]
\begin{algorithmic}[1]
    \STATE Calculate distortion $D_\tvar \gets (1 - \frac{1}{\Tvar_{u_\tvar}})^{\Tvar_{u_\tvar} - |S_{u_\tvar}| - 1}$.
    \STATE Obtain submodular feedback $y_{f, \tvar} = f_{u_\tvar} (v_{\tvar}| S_{u_\tvar}) + \epsilon_{f,\tvar}/2$ and $y_{g, \tvar} = g_{u_\tvar} (v_{\tvar}| S_{u_\tvar}) + \epsilon_{g,\tvar}/2$.
    \STATE Apply distortion to obtain overall feedback $y_\tvar = D_\tvar y_{f, \tvar} + y_{g, \tvar} + (1 - D_\tvar) l_{u_\tvar,1}(v_\tvar)$.
  \end{algorithmic}
  \vspace{-0.2\baselineskip}
\end{algorithm}
\vspace{-0.2\baselineskip}
In the previous section, we obtained sublinear $\alpha$-regret with
respect to the offline greedy baseline. Here, we show we can do the
same relative to the ``distorted" greedy baseline
(Section~\ref{sec:sub_distorted_robustness}). That is, our
selection-rule should be approximately greedy w.r.t.\ the distorted
$\pi(\cdot)$ function in Section~\ref{sec:robustness}.  This is
possible under the stronger separate feedback model in
Section~\ref{sec:problem-formulation}.
As in Section~\ref{sec:robustness}, we define for each function $q$
the modular lower bound $l_{q,1}(S)$, the totally normalized
submodular function $\fkone$ and also $\pi_{j, \phi_q} (v | A)$
defined as:
\begin{align}
  % \pi{j, \phi_q} (v | A) =
  \left(1 - \frac{1}{\Tvar_q}\right)^{\Tvar_q - j - 1} f_{q,1} (v|A) + g_q(v|A) + l_{q,1}(v).
\end{align}
The stronger notion of regret for BP functions with respect to the weighted greedy baseline,
$\mathcal{R}_{\text{BP}, 2} (\Tvar)$ is defined as:
\begin{align}
  \label{eq:bp_regret_strong}
  % \mathcal{R}_{\text{BP}, 2} (\Tvar) :=
  \sum_{q=1}^m \min \left\{ 1 - \frac{\kappafk}{e}, \; 1 -\kappagk \right\} h_q (S_q^\ast) - h_q(S_q).
\end{align}
Our algorithm that obtains sublinear regret with respect to this
stronger baseline utilizes these $\pi_{j, \phi_q}$ functions.
Also, we use $\pi_{j, q} = \pi_{j, \phi_q}$ interchangeably for readability.

%%%%%%%%%%%%%%%%%%%%%%%%%%%%%%%%%%%%%%%%%%%%%%%%%%%%%%%%%%%%%%%%%%%%%%%%%%%%%%%%%%%%%%%%%%
\subsection{\textsc{MNN-UCB-SEPARATE} Algorithm.}
\label{sec:textscmnn-ucb-separ}

% \an{This algorithm is mostly the same as the previous. Is there a shorter way to present this, rather than rewriting the common parts of the two algorithms?}

Algorithm~\ref{alg:mnnucb_dist} is quite similar to
Algorithm~\ref{alg:smucb} --- line $14$ of Algorithm~\ref{alg:smucb}
is modified to the three steps of
Algorithm~\ref{alg:mnnucb_dist}. That is, the feedback, now obtained
separately as $y_{f,\tvar}$ for the submodular and as $y_{g,\tvar}$
for the supermodular part, is aggregated in line $3$ of
Algorithm~\ref{alg:mnnucb_dist} as per $\pi_{j, \phi_q}$.
To evaluate these expressions, the
optimizer also requires the following:
%information relative Section~\ref{sec:alpha_regret}:
\begin{assumption}
\label{ass:separate}
\begin{mylist}[label=(\alph*), ref={Assumption~(2\alph*)}]
    \item The modular lower bound $l_{q,1} (\cdot)$ is known by the optimizer. \label{assump:1.1}
    \item The optimizer has access to two oracles that provide it with separate feedback for the submodular $f_q(\cdot|\cdot)$ and supermodular $g_q(\cdot|\cdot)$ part. \label{assump:1.2}
    \item The number of items for each user $T_q$ is known by the optimizer. \label{assump:1.3}
\end{mylist}
\end{assumption}
For \ref{assump:1.1}, $l_{q,1} (\cdot)$ is precisely specified using
$|V|$ entries while the submodular function requires $2^{|V|}$ entries
to be specified. Hence, the submodular $f_q$ is still mostly unknown
as knowing $l_{q,1}$ is a weaker assumption than the offline setting.
We additionally provide an alternate (slightly weaker) result in
Appendix~\ref{sec:analys-with-refass} without this assumption. For
\ref{assump:1.2}, this can be estimated in applications using surveys
with multiple questions rather than a single
rating. For~\ref{assump:1.3}, if $T_q$ is not known beforehand,
``guess and double" techniques
(Appendix~\ref{sec:remarks-guess-double}) can be used, the effects of
which result in a bounded additive term.
% we provide a heuristic argument for this in the appendix.
%The entire time horizon is divided into chunks and for each chunk, the guess $\hat{\Tvar}$ for $\Tvar$ is fixed, and doubled at the end of the chunk. Parameters that depend on $\Tvar$, such as the distortion is expressed in terms of $\hat{\Tvar}_q$ instead.

%%%%%%%%%%%%%%%%%%%%%%%%%%%%%%%%%%%%%%%%%%%%%%%%%%%%%%%%%%%%%%%%%%%%%%%%%%%%%%%%%%%%%%%%%%
 
\subsection{Performance w.\ Separate feedback}
 
\label{sec:perf-guar-bp}

% We next state
Our guarantee for Algorithm~\ref{alg:mnnucb_dist} is the following:
\begin{theorem}
  \label{thm:bp_regret_bound_dist_strong}
  Let Assumptions~\ref{ass:rkhs} and \ref{ass:separate} hold and assume that $\epsilon_\tvar$ are i.i.d centered sub-Gaussian noise.
  Then, when all $h_q$ are BP functions, Algorithm~\ref{alg:mnnucb_dist} yields
\[
  \mathbb{E}[\mathcal{R}_{\text{BP}, 2} (\Tvar)] \leq 
  O\left(\sqrt{\Tvar} \left(B \sqrt{\lambda \deff} + \deff\right) \right)
\]
\end{theorem}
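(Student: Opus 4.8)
The plan is to follow the template of the proof of Theorem~\ref{thm:bp_regret_bound}(a), but with the \emph{distorted} greedy robustness guarantee (Lemma~\ref{prop:robust_distorted_greedy}) in place of the ordinary greedy one (Lemma~\ref{prop:robust_greedy_bp}). For each round $\tvar$ I would define the per-step slack relative to the distorted surrogate, writing $j = |S_{u_\tvar}|$ and
\[
  r_\tvar = \sup_{v \in V_{u_\tvar}} \pi_{j,\,\phi_{u_\tvar}}(v \mid S_{u_\tvar}) - \pi_{j,\,\phi_{u_\tvar}}(v_\tvar \mid S_{u_\tvar}),
\]
so that the UCB maximization step of Algorithm~\ref{alg:smucb} (operating on the aggregated feedback) is precisely an instance of the approximate distorted greedy rule of Equation~\eqref{eq:approximate_selection_rule_distorted} with these slacks. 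Summing Lemma~\ref{prop:robust_distorted_greedy} over the $m$ functions then gives $\mathcal{R}_{\text{BP},2}(\Tvar) \le \sum_{\tvar=1}^{\Tvar} r_\tvar$, reducing the combinatorial $\alpha$-regret of Equation~\eqref{eq:bp_regret_strong} to a pointwise regret on the surrogate $\pi$.

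What makes this reduction legitimate is an algebraic identity showing that line 3 of Algorithm~\ref{alg:mnnucb_dist} produces a noisy \emph{unbiased} observation of $\pi_{j,\phi}$. Using modularity of $l_{q,1}$ to write $f_{q,1}(v\mid A) = f_q(v\mid A) - l_{q,1}(v)$, together with the distortion $D_\tvar = (1-\tfrac{1}{T_{u_\tvar}})^{T_{u_\tvar}-|S_{u_\tvar}|-1}$, I would verify
\[
  \mathbb{E}[y_\tvar \mid x_\tvar] = D_\tvar f_{u_\tvar}(v_\tvar\mid S_{u_\tvar}) + g_{u_\tvar}(v_\tvar\mid S_{u_\tvar}) + (1-D_\tvar)\, l_{u_\tvar,1}(v_\tvar) = \pi_{j,\,\phi_{u_\tvar}}(v_\tvar\mid S_{u_\tvar}).
\]
Because $T_q$ is known (Assumption~\ref{ass:separate}) and both $T_q$ and $j=|S_{u_\tvar}|$ are fixed by the context $z_\tvar=(\phi_{u_\tvar},S_{u_\tvar})$, the surrogate is a well-defined function $\Delta_\pi(\phi,S,v) := \pi_{|S|,\phi}(v\mid S)$ of the \emph{same} inputs as $\Delta$, and the algorithm is exactly the kernelized UCB procedure of \cite{zenati22a} run with target $\Delta_\pi$.

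With this identification, I would model the task as the contextual bandit of \cite{zenati22a} with context $z_\tvar$ and unknown function $\Delta_\pi$, and invoke their Theorem~4.1. The quantity $\sum_\tvar r_\tvar$ is exactly the pointwise UCB regret on $\Delta_\pi$, controlled round-by-round by $2\beta_\tvar \sigma_\tvar(v_\tvar)$; their bound yields $\sum_\tvar r_\tvar \le O(\sqrt{\Tvar}(B\sqrt{\lambda \deff} + \deff))$ in expectation, which combined with the previous paragraph gives the stated bound on $\mathbb{E}[\mathcal{R}_{\text{BP},2}(\Tvar)]$.

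The main obstacle is re-verifying the two hypotheses of \cite{zenati22a}'s Theorem~4.1 for the \emph{surrogate} rather than the original gain. The effective noise is now $\tfrac{1}{2}D_\tvar\,\epsilon_{f,\tvar} + \tfrac{1}{2}\epsilon_{g,\tvar}$; since $D_\tvar\in[0,1]$ this stays centered and sub-Gaussian with variance proxy no larger than the monolithic case, so the concentration machinery transfers directly. More delicate is the RKHS norm bound: $\Delta_\pi$ multiplies the submodular gain $f_\phi(v\mid S)$ by the context-dependent scalar $D(\phi,S)$, so membership does not follow verbatim from Assumption~\ref{ass:rkhs} applied to $h=f+g$. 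I would handle this by assuming (an analogue of Assumption~\ref{ass:rkhs}, natural given the separate oracles of Assumption~\ref{ass:separate}) that the gains $f_\phi(v\mid S)$ and $g_\phi(v\mid S)$ individually lie in the RKHS, and observing that $D(\phi,S)$ is a bounded, smooth function of the context; the distorted combination then has RKHS norm bounded by a constant multiple of $B$, which is absorbed into the $O(\cdot)$. Establishing this norm bound cleanly is where the real work lies, while the rest is a faithful transcription of the Theorem~\ref{thm:bp_regret_bound}(a) argument.
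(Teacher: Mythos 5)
Your proposal follows essentially the same route as the paper's proof: define the per-round slack $r_\tvar$ with respect to the \emph{distorted} marginal gain $\pi_{j,\phi}(\cdot\mid S)$, apply Lemma~\ref{prop:robust_distorted_greedy} to each $h_q$ to get $\mathcal{R}_{\text{BP},2}(\Tvar)\le\sum_\tvar r_\tvar$, and then bound that sum by casting the problem as the contextual kernel bandit of \cite{zenati22a} with context $z_\tvar=(\phi_{u_\tvar},S_{t_{u_\tvar},u_\tvar})$ and invoking their Theorem~4.1. Your added care about verifying that the aggregated feedback is an unbiased observation of $\pi_{j,\phi}$ and that the distorted surrogate (rather than $\Delta$ itself) satisfies the RKHS norm bound of Assumption~\ref{ass:rkhs} addresses a point the paper leaves implicit, and is a reasonable way to make the reduction airtight.
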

The proof follows similar lines as Theorem~\ref{thm:bp_regret_bound}
and is included in Appendix~\ref{sec:proof-theor-refthm:b-1}.  For some
applications, there may not be enough information for
\ref{assump:1.1}. In Appendix~\ref{sec:analys-with-refass}, we provide
an alternative argument without this assumption where the $\alpha$ is
slightly reduced to
$\min \left\{ 1 - \frac{1}{e}, \; 1 -\kappagk \right\}$.  However, the
heat map in Appendix~\ref{sec:app_distorted} illustrates the bounds are
still better than the vanilla greedy $\alpha$ from~\cite{jeff_bp} for
most choices of $\kappafk, \kappagk$.

%%%%%%%%%%%%%%%%%%%%%%%%%%%%%%%%%%%%%%%%%%%%%%%%%%%%%%%%%%%%%%%%%%%%%%%%%%%%%%%%%%%%%%%%%%
%%%%%%%%%%%%%%%%%%%%%%%%%%%%%%%%%%%%%%%%%%%%%%%%%%%%%%%%%%%%%%%%%%%%%%%%%%%%%%%%%%%%%%%%%%
%%%%%%%%%%%%%%%%%%%%%%%%%%%%%%%%%%%%%%%%%%%%%%%%%%%%%%%%%%%%%%%%%%%%%%%%%%%%%%%%%%%%%%%%%%
 
\section{Numerical Experiments}
 
\label{sec:numerical}

From MovieLens, we obtain a ratings matrix
$M \in \mathbb{R}^{900 \times 1600}$, where $M_{\tvar,j}$ is the
rating of the $\tvar^\text{th}$ user for the $j^\text{th}$
movie. Using this dataset, we instantiate an interactive BP
maximization problem, as formulated in Vignette~\ref{vignette:movie}.
%for density of data, we consider the most active users and most popular movies. .
\begin{figure}[t] % 't' places the figure at the top of the page; you can also use 'h' for here, 'b' for bottom, or 'p' for a special page for figures.
\centering
\includegraphics[width=0.5\columnwidth]
{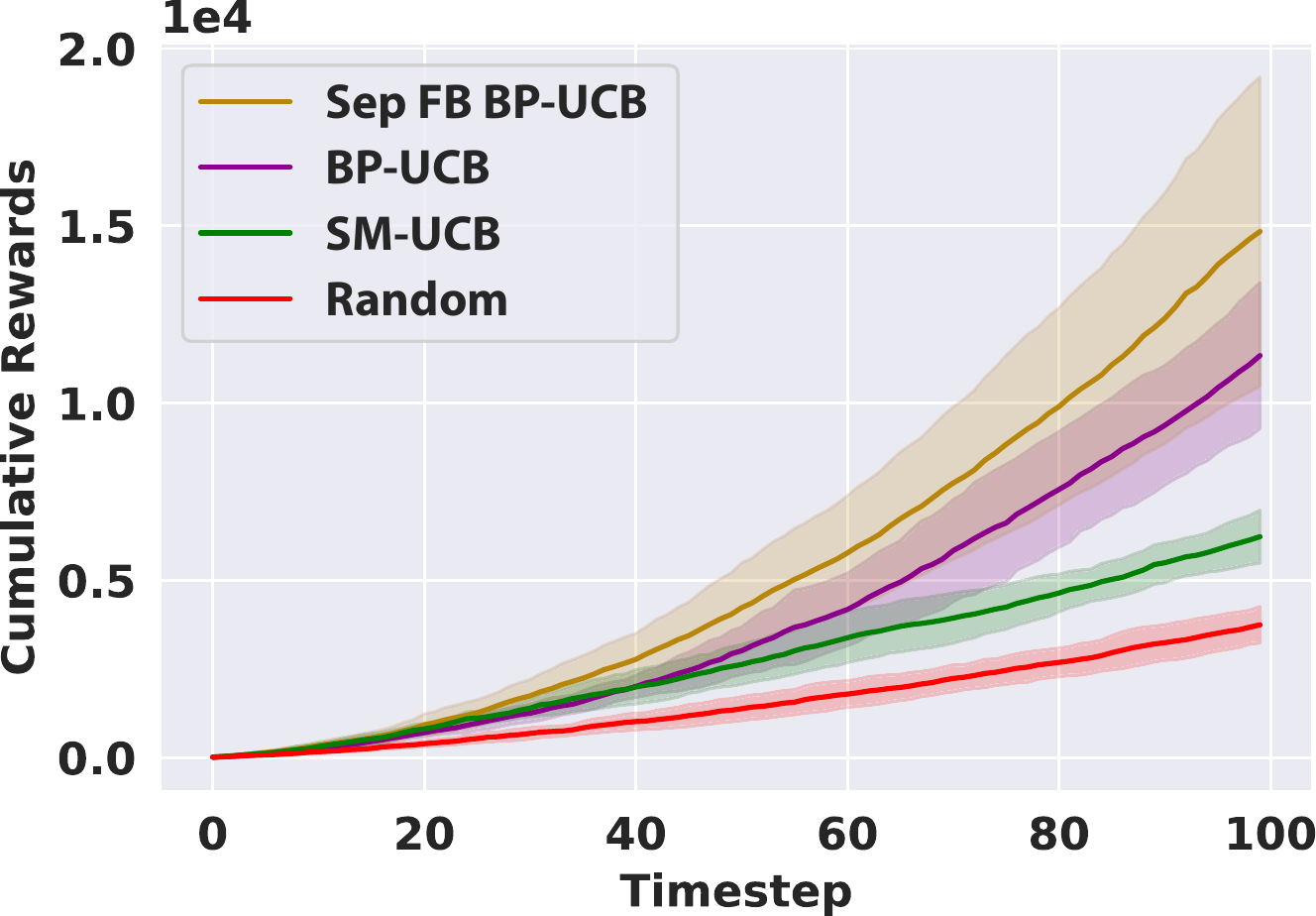}
%\vspace{-1em}
\caption{Algorithm~\ref{alg:smucb} (magenta, green) and
  Algorithm~\ref{alg:mnnucb_dist} (gold) applied to the MovieLens dataset. The
  highlighted region shows the standard deviation over $10$ random
  trials.}
\label{fig:movie_performance}
\end{figure}
We cluster the users into $m = 10$ groups using the $k$-means
algorithm and design a BP objective for each user-group. The
objective for the $q_{th}$ group is decomposed as
$h_q(A) = \sum_{v \in A} m_q(v) + \lambda_1 f_q(A) + \lambda_2
g_q(A)$, where the modular part $m_q(v)$ is the average rating for
movie $v$ amongst all users in group $k$. The concave-over-modular
submodular part encourages the recommender to maintain a balance
across genres in chosen suggestions.  By contrast, the supermodular
function is designed to encourage the optimizer to exploit
complementarities within genres. The constants $\lambda_1, \lambda_2$
are chosen so that the supermodular part slightly dominates the
submodular part, since previous work already studies primarily
submodular functions.
%The constant for \Nystrom{} sampling is fixed at $\lambda = \mu = 1$, so that the sampling frequency is very high and we can compare the best-case performance of the algorithms.
% Let the set $L$ refer to the collection of all genres in the ground set. :
% $
% f_q (A) = \sum_{g \in L} \sqrt{1 + u_{q,g} (A)}.
% $
% The set $L$ is the collection of all genres.
% $
% g_q (A) = \sum_{g \in L} (1 + \tilde{u}_{q,g} (A))^2.
% $
% Above, $u_{q,g} (\cdot)$ and $\tilde{u}_{q,g}$ are modular functions.
% . From \cite{zenati22a}, we expect that the performance for all algorithms smoothly degrades, and the computational cost smoothly reduces, as this parameter is increased. The code is contained in notebook ``Figure 3.'' The details on $u_{q,g} (\cdot), \tilde{u}_{q,g}$ and kernel-estimation are contained in Appendix~\ref{sec:app_numerical}.
% \begin{figure}[t!]
%     \centering
%     \includegraphics[width=0.5\columnwidth]{../ICML2023Draft/}
% \caption{Application of Algorithm~\ref{alg:smucb} (magenta, green) and Algorithm~\ref{alg:mnnucb_dist} (gold) on the MovieLens dataset. The highlighted region depicts the standard deviation over $10$ random trials.
% }
% \label{fig:movie_performance}
% \end{figure}
In Figure~\ref{fig:movie_performance},
Algorithm~\ref{alg:mnnucb_dist} (gold) performs slightly better than
Algorithm~\ref{alg:smucb} (magenta), as expected from our results. If
we provide MNN-UCB submodular feedback i.e $\sum m_q (v) + f_q (A)$
instead of the entire $h_q$, then it performs notably worse (green,
labeled SM-UCB). This underscores the pitfalls of viewing a non-submodular problem
as purely submodular.
%Note that Algorithm~\ref{alg:mnnucb_dist} has a higher variation in performance than the other curves. In most runs, it performs significantly better (around the top of the highlighted region) than other curves. However, occasionally it overfits to the supermodular part, attempting to find complementarities where none exist; so it also requires more careful tuning of $\beta_\tvar$. %Additionally, if the function becomes more submodular, then we expect all curves to move closer together. However, we highlight the supermodular-dominant regime instead since it is the focus of this paper.

%The synthetic data for Figure~\ref{fig:active_learn} is drawn from two Gaussian distributions, and separating its quantiles into the two classes. In order to apply the, we discretize the space into $56$ boxes of equal size, and set . The supermodular part is the sum-sum-dispersion function as above $g (A) = \sum_{v_\tvar \in A} \sum_{v_j \in A: v_j \neq v_\tvar} B_{i, j}$. Here $B_{i,j} = 0$ if $(v_\tvar, v_j)$ are from the same class, and $B_{i,j} = 1/\text{dist}(v_\tvar, v_j)$ if $(v_\tvar, v_j)$ are from the opposite class; this encourages the selection of proximal points from different classes. In the BP function $h(A) = \lambda_1 f(A) + \lambda_2 g(A)$, the submodular part strongly dominates the supermodular part. This ensures that the learner samples evenly across the entire space as in \cite{wei_active_learning}, but also has a preference for complementary points. The code is contained in the notebook ``Figure 1,'' and details are in the supplement.

Active learning experiments are given in Appendix~\ref{sec:active-learning-1}.

%%%%%%%%%%%%%%%%%%%%%%%%%%%%%%%%%%%%%%%%%%%%%%%%%%%%%%%%%%%%%%%%%%%%%%%%%%%%%%%%%%%%%%%%%%
%%%%%%%%%%%%%%%%%%%%%%%%%%%%%%%%%%%%%%%%%%%%%%%%%%%%%%%%%%%%%%%%%%%%%%%%%%%%%%%%%%%%%%%%%%
%%%%%%%%%%%%%%%%%%%%%%%%%%%%%%%%%%%%%%%%%%%%%%%%%%%%%%%%%%%%%%%%%%%%%%%%%%%%%%%%%%%%%%%%%%
 
\section{Discussion}
 
\label{sec:discussion}

In this paper, we presented algorithms for 
efficient online optimization of certain non-submodular functions,
which enables better modeling in some applications. 
We considered two different feedback
models and provided variants of optimistic kernel-bandit algorithms
that achieve sublinear regret. 
Along the way, we studied the robustness of the greedy algorithm
for these function classes in Section~\ref{sec:robustness}, which
is of independent interest.

\paragraph{Broader Impact.} 
In Appendix~\ref{sec:app_applications}, we compare our proposed approach
with others from the literature for the motivating applications of 
recommendation systems and active learning. 
We include some illustrative examples, which show the utility
of modeling these problems as BP instead of purely submodular.

For recommendation systems,
we conjecture that explicitly modeling the diminishing/increasing returns
of utilities could help alleviate the primary problem of state-of-the-art 
systems today - that they are more likely to show addictive and harmful 
content in order to keep users glued onto the service. For active learning, 
we conjecture that considering utilities beyond submodular is likely to provide us with more flexibility in expressing and balancing our multiple goals when choosing a subset of datapoints for training.

\paragraph{Limitations and Future Work.} 
A limitation of Gaussian Process based methods is that the model updates
at each stage are computationally expensive, making them difficult to use
in practice. While we made headway towards addressing this problem, our time
complexity of $O(T |G_T|^2)$ may still be prohibitive for some applications 
of optimizing submodular or beyond submodular functions.
In these cases, other prediction and uncertainty quantification techniques
may be used - such as the bootstrap.
Even simpler heuristic approaches may be employed that balance between areas of
the input space
that the learned model thinks are promising, 
and those that are underexplored. 
Our theory suggests that such methods are likely to work well.

In general, a decomposition of the
utility function $h$ could take other forms, such as a quotient of two
submodular functions, or as a product of a submodular and supermodular
function.  The additive BP form is emphasized due to its analytical
tractability, its ability to understand bounds, and its greater
expressivity, but the future holds no limit regarding possible
decomposable $h$s that are viable in the separate feedback setting
that we introduced. A theoretical artifact of the curvature-based
bound for non-modular functions is that they are independent of $f_q$
and $g_q$'s relative magnitude which are important for experiments. It
would be useful to develop guarantees that incorporate this.

% It would also be informative to
% conduct a real-world comparison of the performance and robustness of
% submodularity-based approaches with traditional approaches for our
% motivating examples.

\newpage
\bibliography{arxiv_2024_main}

%%%%%%%%%%%%%%%%%%%%%%%%%%%%%%%%%%%%%%%%%%%%%%%%%%%%%%%%%%%%%%%%%%%%%%%%%%%%%%%
%%%%%%%%%%%%%%%%%%%%%%%%%%%%%%%%%%%%%%%%%%%%%%%%%%%%%%%%%%%%%%%%%%%%%%%%%%%%%%%
% APPENDIX
%%%%%%%%%%%%%%%%%%%%%%%%%%%%%%%%%%%%%%%%%%%%%%%%%%%%%%%%%%%%%%%%%%%%%%%%%%%%%%%
%%%%%%%%%%%%%%%%%%%%%%%%%%%%%%%%%%%%%%%%%%%%%%%%%%%%%%%%%%%%%%%%%%%%%%%%%%%%%%%
\appendix
\newpage
\onecolumn

\section{Table of Notation}

\begin{table}[h]
  \centering
  \begin{tabular}{c l}
  \toprule
  \textbf{Notation} & \textbf{Description} \\
  \midrule
  $V$ & Ground set of items \\
  $m$ & Number of set functions \\
  $h_q$ & $q$-th set function, $q \in [m]$ \\
  $u_t$ & Index of arrived function at time $t$ \\
  $\phi_{u_t}$ & Context vector for function $h_{u_t}$ at time $t$ \\
  $v_t$ & Item selected at time $t$ \\
  $S_{k, q}$ & Items selected for function $h_q$ up to time $k$ \\
  $y_t$ & Noisy marginal gain feedback at time $t$ \\
  $y_{f,t}, y_{g,t}$ & Separate submodular and supermodular feedback at time $t$ \\
  $S_q^*$ & Optimal set for function $h_q$ \\
  $T_q$ & Number of items selected for function $h_q$ by time $T$ \\
  $\kappa_f, \kappa^g$ & Submodular and supermodular curvatures \\
  $\gamma, \zeta$ & Submodularity ratio and generalized curvature \\
  $\mathcal{R}_{\text{BP}}(T), \mathcal{R}_{\text{WS}}(T)$ & Regret for BP and WS functions \\
  $\mathcal{R}_{\text{BP},2}(T)$ & Regret for BP functions with separate feedback \\
  $\Delta(\phi, S, v)$ & Marginal gain of adding $v$ to $S$ for context $\phi$ \\
  $\mathcal{K}$ & Reproducing kernel Hilbert space (RKHS) \\
  $B$ & Bound on RKHS norm of $\Delta$ \\
  $G_t$ & Nyström set at time $t$ \\
  $\beta_t$ & Exploration-exploitation tradeoff parameter \\
  $d_{\text{eff}}(\lambda, T)$ & Effective dimension \\
  $l_1, l_2$ & Modular lower bounds for $f$ and $g$ \\
  $f_1, g_1$ & Totally normalized $f$ and $g$ \\
  $\pi_j(v|A)$ & Distorted marginal gain for selecting $v$ given $A$ at step $j$ \\
  \bottomrule
  \end{tabular}
  \caption{Table of key notation used in the paper.}
  \label{tab:notation}
\end{table}

%%%%%%%%%%%%%%%%%%%%%%%%%%%%%%%%%%%%%%%%%%%%%%%%%%%%%%%%%%%%%%%%%%%%%%%%%%%%%%%%%%%%%%%%%%
%%%%%%%%%%%%%%%%%%%%%%%%%%%%%%%%%%%%%%%%%%%%%%%%%%%%%%%%%%%%%%%%%%%%%%%%%%%%%%%%%%%%%%%%%%
%%%%%%%%%%%%%%%%%%%%%%%%%%%%%%%%%%%%%%%%%%%%%%%%%%%%%%%%%%%%%%%%%%%%%%%%%%%%%%%%%%%%%%%%%%
\section{Background on Submodularity, Supermodularity and Curvatures}
\label{sec:backgr-subm-superm}

A set function $h: 2^V \to \R$ is a function that maps any subset of a
finite ground set $V$ of size $|V|=n$ to the reals. There are many
possible set functions, and arbitrary set functions are impossible to
optimize with any quality assurance guarantee without an exponential
cost. As an example, consider a function $h$ such that $h(A) = a > b$
for some set $A \subseteq V$ and $a,b \in \R$ and $h(B) = b > 0$ for
all $B \neq A$. Then any algorithm that does not search over all
$2^{|V|}$ subsets can miss set $A$ and the approximation ratio $a/b$
can be unboundedly large. We are therefore interested in set functions
that have useful and widely applicable structural properties such as the
class of submodular and supermodular functions.

A set function $f: 2^V \to \R$ is said to be monotone non-decreasing
if $f(A \cup\{v\}) \geq f(A)$ for all $A \subseteq V, v \in V$. It is
normalized if $f(\emptyset) = 0$.  For convenience, we refer to the
collection of {\bf Monotone Non-decreasing Normalized} set functions as {\bf MNN}
functions. We use the gain notation $f(v|S)=f(S\cup \{v\})-f(S)$ to denote the {\bf marginal
gain} of adding element $v$ to the set $S$.

A set function $f$ defined over the ground set $V$ is called
{\bf submodular} if for all $A \subseteq B \subseteq V$ and any element
$v \notin B$ we have
$ f(A \cup\{v\})-f(A) \geq f(B \cup\{v\})-f(B)$. A function
$g: 2^V \to R$ is said to be {\bf supermodular} if ${-g}$ is
submodular --- $g$ has the property of \emph{increasing returns} where
the presence of an item can only enhance the utility of selecting
another item. The class of functions defined below is the primary
focus of our paper.
% \mf{We can probably do without the MNN acronym in this revised version. Describe `monotone' and `normalized' as assumptions in the sections they're used, but not give it an acronym. We can use only acronyms that appear in table 1.}

\begin{definition}[BP Function]
\label{def:bp}
  A utility function $h$ is said to be BP if it admits the decomposition $h = f + g$, where  $f$ is submodular, $g$ is supermodular, and both functions are also MNN.
\end{definition}

Next, we introduce the notion of curvature for submodular and
supermodular functions. This will enable us to understand the
assumptions required to obtain approximation bounds for offline BP
functions, as stated in~\cite{jeff_bp}.

\begin{definition}[Submodular curvature]
  \label{def:submod_curvature}
Denote the curvature for submodular $f$ as $
\kappa_{f} = 1 - \min_{v \in V} \frac{f(v| V \setminus \{v\})}{f(v)}
.$
\end{definition}

\begin{definition}[Supermodular curvature]
  \label{def:supermod_curvature}
Denote the curvature for supermodular $g$ as: $
  \kappa^g = 1 - \min_{v \in V} \frac{g(v)}{g(v| V \setminus \{v\})}
.$
\end{definition}
These quantities are contained in $[0,1]$ and measure how far the functions are from being modular: if a curvature is zero, the function is modular. Important for practical applications, given the function, these can be calculated in linear time in $|V|$.
%\mf{this is a good point, let's be more clear about it: calculated given what, and time linear in what? }.
\citet{jeff_bp} analyzed the greedy algorithm for the
cardinality-constrained BP maximization problem and provided a
$\frac{1}{\kappa_{f}} \left[1 - e^{-(1 - \kappa^g) \kappa_{f}}
\right]$ approximation ratio for this problem.  They also showed that
not all monotone non-decreasing set functions admit a BP
decomposition. However, in cases where such a decomposition is
available, one can easily compute the curvature of submodular and
supermodular terms and compute the bound.

Since not all MNN functions are representable as BP functions, we also study arbitrary MNN functions in terms of how far they are from being submodular.
\begin{definition}[Submodularity ratio, \citep{bian2019guarantees, das2011submodular,JMLR:v19:16-534}]
\label{def:submod_ratio}
The submodularity ratio of a non-negative set function $h(\cdot)$ is the largest scalar $\gamma$ such that
$
\sum_{v \in S \backslash A} h(v|A) \geq \gamma h(S|A), \forall S, A \subseteq V.
$
\end{definition}

The submodularity ratio measures to what extent $h(\cdot)$ has submodular properties. For a non-decreasing function $h(\cdot)$, it holds that $\gamma \in[0,1]$ always, and $h(\cdot)$ is submodular if and only if $\gamma=1$.

\begin{definition}[Generalized curvature, \citep{bian2019guarantees}]
\label{def:gen_curvature}
The curvature of a non-negative function $h(\cdot)$ is the smallest scalar $\zeta$ such that $\forall S, A \subseteq V, v \in A \backslash S$,
$
 h(v| A \backslash\{v\} \cup S)) \geq(1-\zeta) h(v| A \backslash\{v\}).
$
\end{definition}
Note that unlike the notions of submodular and supermodular curvature, the submodularity ratio and generalized curvature parameters are information theoretically hard to compute in general~\citep{jeff_bp}. We refer to the MNN set functions with bounded submodularity ratio $\gamma$ and generalized curvature $\zeta$ as \textbf{weakly submodular} (WS).
\citet{bian2019guarantees} analyzed the greedy algorithm for maximizing such functions subject to a cardinality constraint and obtained a $\frac{1}{\zeta}\big(1-e^{-\zeta \gamma}\big)$ approximation ratio for this problem.

%%%%%%%%%%%%%%%%%%%%%%%%%%%%%%%%%%%%%%%%%%%%%%%%%%%%%%%%%%%%%%%%%%%%%%%%%%%%%%%%%%%%%%%%%%
%%%%%%%%%%%%%%%%%%%%%%%%%%%%%%%%%%%%%%%%%%%%%%%%%%%%%%%%%%%%%%%%%%%%%%%%%%%%%%%%%%%%%%%%%%
%%%%%%%%%%%%%%%%%%%%%%%%%%%%%%%%%%%%%%%%%%%%%%%%%%%%%%%%%%%%%%%%%%%%%%%%%%%%%%%%%%%%%%%%%%
\section{Other Related work}
\label{app:related_work}

\paragraph{Submodular maximization with bounded curvature.} \citet{nemhauser1978analysis} studied the performance of the greedy algorithm for maximizing a monotone non-decreasing submodular set function subject to a cardinality constraint and provided a $1-\frac{1}{e}$ approximation ratio for this problem. While \citet{nemhauser1978best} showed that the $1-\frac{1}{e}$ factor cannot be improved under polynomial number of function value queries, the performance of the greedy algorithm is usually closer to the optimum in practice. In order to theoretically quantify this phenomenon, \citet{conforti1984submodular} introduced the notion of \emph{curvature} $\kappa\in [0,1]$ for submodular functions---this is defined in Section~\ref{sec:backgr-subm-superm}. The constant $\kappa$ measures how far the function is from being modular. The case $\kappa=0$ corresponds to modular functions and larger $\kappa$ indicates that the function is more curved. \citet{conforti1984submodular} showed the greedy algorithm applied to monotone non-decreasing submodular maximization subject to a cardinality constraint has a $\frac{1}{\kappa}(1-e^{-\kappa})$ approximation ratio. Therefore, for general submodular functions ($\kappa=1$), the same $1-\frac{1}{e}$ approximation ratio is obtained. However, if $\kappa<1$, $\frac{1}{\kappa}(1-e^{-\kappa})>1-\frac{1}{e}$ holds and as $\kappa\to 0$, the approximation ratio tends to 1. More recently, \citet{sviridenko2017optimal} proposed two approximation algorithms for the more general problem of monotone non-decreasing submodular maximization subject to a matroid constraint and obtained a $1-\frac{\kappa}{e}$ approximation ratio for these two algorithms. They also provided matching upper bounds for this problem showing that the $1-\frac{\kappa}{e}$ approximation ratio is indeed optimal. Later on, the notion of curvature was extended to continuous submodular functions as well and similar bounds were derived for the maximization problem \citep{sadeghi2021differentially,sadeghi2020longterm,  sadeghi2021fast,sessa2019bounding}.

\paragraph{BP maximization}
\citet{jeff_bp} first introduced the problem of maximizing a BP function $h = f + g$ (Definition~\ref{def:bp}) subject to a cardinality constraint
as well as the intersection-of-$p$-matroids constraint. They showed that this problem is NP-hard to approximate to any factor without further assumptions. However, if the supermodular function $g$ has a bounded curvature (i.e., $\kappa^g<1$), it is possible to obtain approximation ratios for this problem. In particular, for the setting with a cardinality constraint, they analyzed the greedy algorithm along with a new algorithm (SemiGrad) and provided a $\frac{1}{\kappa_f}\big(1-e^{-(1-\kappa^g)\kappa_f}\big)$ approximation ratio. Note that for general supermodular functions ($\kappa^g=1$), the approximation ratio is 0 and as $\kappa^g\to 0$, the bound tends to that of \citet{conforti1984submodular} for monotone non-decreasing submodular maximization subject to a cardinality constraint. \citet{jeff_bp} also showed that not all monotone non-decreasing set functions admit a BP decomposition. However, in cases where such a decomposition is available, one can compute the curvature of submodular and supermodular terms in linear time and compute the bound. More recently, \citet{distorted_greedy} proposed a distorted version of the greedy algorithm for this problem and provided an improved $\min\{1-\frac{\kappa_f}{e},1-\kappa^g e^{-(1-\kappa^g)}\}$ approximation ratio  \cite{distorted_greedy}.

\paragraph{Submodularity ratio.}
\citet{das2011submodular} introduced the notions of submodularity ratio
$\gamma$ and generalized curvature $\zeta$ for general monotone non-decreasing set functions (defined in Section~\ref{sec:backgr-subm-superm}) and showed that the greedy algorithm obtains the approximation ratio $\frac{1}{\zeta}\big(1-e^{-\zeta \gamma}\big)$ under cardinality constraints \citep{bian2019guarantees, das2011submodular}. Unlike the BP decomposition, the notions of submodularity ratio and generalized curvature can be defined for any monotone non-decreasing set function but is, in general, exponential cost to compute.

\paragraph{Adaptive and Interactive Submodularity.}
\citet{NIPS2017_krause} is the work most related to ours -- they
employ a similar UCB algorithm to optimize an unknown submodular
function in an interactive setting. They define regret as the
sub-optimality gap with respect to a full-knowledge greedy strategy at
the final round. They define a different notion of pointwise regret as
the difference between the algorithm's rewards and that of the greedy
strategy \emph{at that stage}, treating the past choices as fixed. By
viewing the submodular problem as a special case of contextual
bandits, they observe that this accumulation of pointwise regret is
precisely bounded by~\cite{nips2011_krause}. Then, they modify the
seminal proof in~\citet{nemhauser1978analysis} to relate their target
notion of regret with the pointwise
regrets. \citet{guillory2010-icml-interactive-sub,krause_adaptive_submodular} also consider different adaptive or interactive
submodular problems. They both assume more knowledge of the structure of the
submodular objective than~\cite{NIPS2017_krause}.

\track{\paragraph{Kernel bandits.}
\citet{icml2010_srinivas} consider the problem of optimizing an unknown function $f$ that is either sampled from a Gaussian process or has bounded RKHS norm.
They develop an upper-confidence bound approach, called GP-UCB, that achieves sublinear regret with respect to the optimal, which depends linearly on an information-gain term $\gamma_\Tvar$. \citet{nips2011_krause} extend the setting from \citet{icml2010_srinivas} to the contextual setting where the function $f_{z_t}$ being optimized now depends also on a context $z_t$ that varies with time.
\citet{Valko2013FiniteTimeAO}
replace the $\gamma_\Tvar$ scaling with $\sqrt{\gamma_\Tvar}$. \citet{camilleri21a} extend experimental design for linear bandits to the kernel bandit setting, and provide a new analysis, which also incorporates batches. \citet{zenati22a} use \Nystrom{} points to speed up the algorithm, with the same asymptotic regret guarantee as GP-UCB, which inspires the algorithms developed in this paper.
}

\track{\paragraph{Combinatorial Bandits} In \citep{takemori20a, caramanis, kveton, chen_context_combinatorial}, the optimizer chooses a set at each time step and the submodularity is between these elements chosen in the single time step. In our work, the optimizer chooses a single item at each time and accumulates a set over time; the submodularity is between these elements chosen at different time steps. While the formulations are similar, the formulation in our paper would apply to different applications than this body of work.}

\paragraph{Comparison with \citep{liu2021online, chen2020online}}

These papers have titles similar to the present work, but actually
apply to a very different setting.  Consider Algorithm~1
from~\citep{liu2021online}, which describes the setting they work
within. This algorithm is reproduced below for ease of reference.
\begin{algorithm}[h]
  \caption{\citep{liu2021online}: Greedy for ``online'' (streaming) BP maximization}
  \begin{algorithmic}[1]
    \STATE $S_0 \gets \emptyset$
    \FOR{each element $u_\tvar$ revealed}
        \IF {$\tvar < k$}
         \STATE $S_\tvar \gets S_{\tvar-1} + u_\tvar$
       \ELSE
         \STATE Let $u'_j$ be the element of $S_{\tvar-1}$ maximizing $h(S_{\tvar-1} + u_\tvar - u'_j)$
         \IF{$h(S_{\tvar-1} + u_\tvar - u'_j) - h(S_{\tvar-1}) > \frac{c(h(S_{\tvar-1}))}{k(1-\epsilon)}$}
             \STATE $S_\tvar \gets S_{\tvar-1} + u_\tvar - u'_j$
         \ELSE
             \STATE $S_\tvar \gets S_{\tvar-1}$
         \ENDIF
       \ENDIF
    \ENDFOR
  \end{algorithmic}
\end{algorithm}
We remark that the setting is better described as ``streaming'' rather
than ``online'' since it is consistent with a number of {\bf streaming}
submodular maximization
algorithms~\citep{badanidiyuru2014streaming,chekuri2015streaming,feldman2018less}.
The approach in \citep{liu2021online} assume that the function $h$ is
known, with arbitrary queries available, and there is no cost for
evaluating it with different sets as input.  However, the items are
revealed one by one, using a fixed order, and the algorithm must
decide whether to add the item to the set or to forever forget it.
Hence, there is no statistical estimation component to their setting,
and they provide competitive ratio bounds rather than regret bounds as
in our present work.

%%%%%%%%%%%%%%%%%%%%%%%%%%%%%%%%%%%%%%%%%%%%%%%%%%%%%%%%%%%%%%%%%%%%%%%%%%%%%%%%%%%%%%%%%%
\section{Applications and Role of MNN functions}
\label{sec:app_applications}

In this section, we present two examples that illustrate the modeling
power of BP functions for different applications and compare this
approach with common approaches from the literature.

\subsection{Active Learning}
\label{sec:active-learning-1}

\begin{figure}[t]


     \centering
     \subfigure[Original training set]{ % Add caption for the first subfigure
         \includegraphics[width=0.31\columnwidth]{figs/original_data_active_learn.pdf}
     }
     \subfigure[Submodular objective]{ % Add caption for the second subfigure
         \includegraphics[width=0.31\columnwidth]{figs/submod_active_learn.pdf}
     }
     \subfigure[BP objective]{ % Add caption for the third subfigure
         \includegraphics[width=0.31\columnwidth]{figs/bp_active_learn.pdf}
     }
\caption{Greedy algorithm selection on submodular (second panel) and BP (third panel) objectives for subset selection of $100$ points of training data from a ground set of $400$ points. The first panel depicts the entire training (ground) set. The details are provided in Section~\ref{sec:app_numerical}.}
\label{fig:active_learn}
\end{figure}

From Figure~\ref{fig:active_learn}, we see that the BP function (third panel) results in the {selection of complementary points near the decision boundary---i.e., points of opposite class that are proximal. It is \emph{impossible} to choose a submodular function that encourages this type of desirable cooperative behavior due to the diminishing-returns property.

\paragraph{Comparison with approaches for Pool-based Active Learning}

In their survey paper, \cite{settles} compare submodularity-based approaches with other approaches for active learning. The main benefit of framing the active learning problem as submodular is that the greedy algorithm can be employed, which is much less computationally expensive than other common active learning approaches. While submodularity has been shown to be relevant to active learning  \citep{guestrin_sensor_placement, wei_active_learning, hoi}, \cite{settles} remark that in general, the active learning problem cannot be framed as submodular.

In our paper, by extending the classes of functions that can be optimized online, we take a step towards addressing this limitation of submodularity. Further, an open question outlined in \cite{settles} is that of multi-task active learning, which has not been explored extensively in previous work. However, our formulation in Vignette~\ref{vignette:active_learning} naturally extends to this multi-task setting.

\subsection{Recommendation Systems}
\label{sec:recomm-syst-1}

In Table~\ref{tab:movie_illustrate}, the BP function enables the desirable selection of movies from the same series in the correct order.
As above, it is impossible to design a submodular utility function that encourages this type of behavior.

\paragraph{Comparison with approaches for online recommendation}

% \mf{Don't call this comaprison with RL---maybe just refer to the paper that uses it Afsar et al. As reviewers have pointed out, saying RL is confusing as the RL framework is indeed very general and here you're talking about one specific approach that uses states.}

For recommender systems, the dependencies between past and future recommendations may be modeled through a changing ``state variable," leading to adopting reinforcement learning (RL) solutions \citep{rl_reco_survey}. These have been tremendously effective at maximizing engagement; however, \cite{kleinberg} highlight that a key oversight of these approaches is that the click and scroll-time data that platforms observe is not representative of the users' actual utilities: ``research has demonstrated that we often make choices in the moment that are inconsistent with what we actually want."
Hence, \citeauthor{kleinberg} advocate to encode diminishing returns of addictive but superficial content into the model, in the manner that we do with submodular functions. Further, RL systems are incentivized to manipulate users' behavior \citep{wilhelm, Hohnhold2015FocusingOT}, mood \citep{kramer} and preferences \citep{epstein_robertson}; this inspires the use of principled mathematical techniques, as in the present work, to design systems to behave as we want rather than simply following the trail of the unreliable observed data.

\begin{table}[t]
    \centering
\begin{tabular}{|l|l|l|}
\hline
{} &                             SM Objective &                                   BP Objective \\
\hline
0 &                          Lion King, The  &                                Godfather, The  \\
1 &                                   Speed  &                       Godfather: Part II, The  \\
2 &                          Godfather, The  &                      Godfather: Part III, The  \\
3 &                 Godfather: Part II, The  &     Star Wars: Episode I \\
4 &                         Terminator, The  &                                       Memento  \\
5 &                       Good Will Hunting  &        Harry Potter I  \\
6 &                                 Memento  &  Star Wars: Episode II\\
7 &  Harry Potter I   &       Harry Potter: II \\
8 &                        Dark Knight, The  &  Star Wars: Episode III \\
9 &                               Inception  &                              Dark Knight, The  \\ \hline
\end{tabular}
\caption{Comparison of the selections of the greedy algorithm on submodular and BP objectives for movie recommendation, on a toy ground set of $23$ movies from the MovieLens dataset. The submodular objective is the facility location objective, chosen from \cite{NIPS2017_krause}. In the BP objective, there is an additional reward at each step for choosing a movie that is complementary with previously selected movies; this results the desirable joint \textbf{selection of groups of movies from the same series}. The task is formalized mathematically in Section~\ref{sec:numerical}, and experimental details are provided in the supplement.}
\label{tab:movie_illustrate}
\end{table}

\begin{table}[tbh]
    \centering
\begin{tabular}{|l|l|}
\toprule
                     Lion King, The  &                             Good Will Hunting  \\
                              Speed  &                      Godfather: Part III, The  \\
                         True Lies  &     Star Wars: Episode I - The Phantom Menace  \\
                           Aladdin  &                                     Gladiator  \\
                  Dances with Wolves  &                                       Memento  \\
                             Batman  &                                         Shrek  \\
                     Godfather, The  &        Harry Potter I: The Sorcerer's Stone   \\
            Godfather: Part II, The  &  Star Wars: Episode II - Attack of the Clones  \\
                    Terminator, The  &       Harry Potter II: The Chamber of Secrets  \\
 Indiana Jones and the Last Crusade  &  Star Wars: Episode III - Revenge of the Sith  \\
                       Men in Black   &                              Dark Knight, The  \\
                                     &                                     Inception  \\
\bottomrule
\end{tabular}
\caption{Ground set for Table~\ref{tab:movie_illustrate}
\label{tab:movie_illustrate_ground_set}
\vspace{0.5em}}
\end{table}

\newpage

%%%%%%%%%%%%%%%%%%%%%%%%%%%%%%%%%%%%%%%%%%%%%%%%%%%%%%%%%%%%%%%%%%%%%%%%%%%%%%%%%%%%%%%%%%
%%%%%%%%%%%%%%%%%%%%%%%%%%%%%%%%%%%%%%%%%%%%%%%%%%%%%%%%%%%%%%%%%%%%%%%%%%%%%%%%%%%%%%%%%%
%%%%%%%%%%%%%%%%%%%%%%%%%%%%%%%%%%%%%%%%%%%%%%%%%%%%%%%%%%%%%%%%%%%%%%%%%%%%%%%%%%%%%%%%%%

\section{A simple approach to guarantee low regret: Why it is too weak}
\label{sec:app_naive}

In this section, we provide an alternate proof for the approximation ratio that the greedy algorithm obtains on a BP function in the offline setting. The robustness of this proof can be very simply studied, in a manner similar to \cite{NIPS2017_krause}. However, the approximation ratio obtained is worse than that of \cite{jeff_bp}; hence, the regret guarantee in the online setting would be provided against a weak baseline. This motivates why we revisit the proof from \cite{jeff_bp}. Just for this section, we use simpler notation $h$ for the BP function and $k$ for the cardinality constraint, since we are presenting the argument for the offline setting.

\begin{proposition}
For a BP maximization problem subject to a cardinality constraint, $\max_{S:|S|\leq k}h(S)$ where $h(S)=f(S)+g(S)$, the greedy algorithm obtains the following guarantee:
\begin{equation*}
    h(S)\geq (1-e^{-(1-\kappa^g)})h(S^*),
\end{equation*}
where $S^*=\{v_1^*,\dots,v_k^*\}=\text{arg}\max_{S:|S|\leq k}h(S)$ and $\kappa^g$ is the curvature of the supermodular function $g$.
\end{proposition}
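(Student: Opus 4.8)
The plan is to adapt the classical greedy analysis of \citet{nemhauser1978analysis}, tracking the optimality gap $\delta_i := h(S^*) - h(S_i)$ where $S_i = \{v_1,\dots,v_i\}$ is the greedy set after $i$ steps, and showing it contracts by a factor $\bigl(1 - \tfrac{1-\kappa^g}{k}\bigr)$ per step. First I would establish the per-step progress inequality
\[
h(v_{i+1}| S_i) \;\geq\; \frac{1}{k}\sum_{v\in S^*} h(v| S_i) \;\geq\; \frac{1-\kappa^g}{k}\,\bigl(h(S^*)-h(S_i)\bigr),
\]
where the first inequality follows from the greedy choice of $v_{i+1}$ together with $|S^*|\le k$, and the second is the crux of the argument.

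To prove the crux, I would lower-bound $\sum_{v\in S^*} h(v| S_i)$ by $(1-\kappa^g)\,h(S^*| S_i)$ and then use monotonicity, $h(S^*| S_i) \geq h(S^*) - h(S_i)$. Splitting $h = f+g$, the submodular term is handled by the standard diminishing-returns telescoping inequality $\sum_{v\in S^*} f(v| S_i) \geq f(S^*| S_i) \geq (1-\kappa^g)\,f(S^*| S_i)$, the last step holding because $1-\kappa^g \le 1$.

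The main obstacle is the supermodular term: because $g$ has \emph{increasing} returns, each marginal $g(v_j^*| S_i)$ is \emph{smaller} than the telescoped marginals $g(v_j^*| S_i\cup\{v_1^*,\dots,v_{j-1}^*\})$ that sum to $g(S^*| S_i)$, so the naive bound points the wrong way. This is exactly where the supermodular curvature enters. Rewriting Definition~\ref{def:supermod_curvature} as $g(v) \geq (1-\kappa^g)\,g(v| V\setminus\{v\})$ for every $v$, and combining it with the two monotonicity facts $g(v_j^*| S_i)\ge g(v_j^*)$ and $g(v_j^*| S_i\cup\{v_1^*,\dots,v_{j-1}^*\}) \le g(v_j^*| V\setminus\{v_j^*\})$, I would obtain $g(v_j^*| S_i) \ge (1-\kappa^g)\,g(v_j^*| S_i\cup\{v_1^*,\dots,v_{j-1}^*\})$ for each $j$; summing over $j$ telescopes the right-hand side into $(1-\kappa^g)\,g(S^*| S_i)$. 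Adding the submodular and supermodular parts yields $\sum_{v\in S^*} h(v| S_i) \geq (1-\kappa^g)\,h(S^*| S_i)$, completing the crux.

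Finally I would close the recursion. Writing $h(v_{i+1}| S_i) = \delta_i - \delta_{i+1}$, the progress inequality gives $\delta_{i+1} \le \bigl(1-\tfrac{1-\kappa^g}{k}\bigr)\delta_i$, hence $\delta_k \le \bigl(1-\tfrac{1-\kappa^g}{k}\bigr)^{k} \delta_0 \le e^{-(1-\kappa^g)}\delta_0$, and since normalization gives $\delta_0 = h(S^*)$, rearranging produces $h(S) \ge \bigl(1-e^{-(1-\kappa^g)}\bigr) h(S^*)$. I expect the supermodular-curvature step to be the only nontrivial one; note that it treats $f$ as a worst-case fully-curved submodular function (effectively $\kappa_f = 1$), which is precisely why this crude bound is weaker than the $\frac{1}{\kappa_f}\bigl[1-e^{-(1-\kappa^g)\kappa_f}\bigr]$ ratio of \citet{jeff_bp}.
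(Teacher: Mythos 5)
Your proposal is correct and takes essentially the same route as the paper's proof in Appendix~\ref{sec:app_naive}: the identical per-step contraction $h(S^*)-h(S_{i+1})\le\bigl(1-\tfrac{1-\kappa^g}{k}\bigr)\bigl(h(S^*)-h(S_i)\bigr)$ driven by the key inequality $\sum_{v\in S^*}h(v|S_i)\ge(1-\kappa^g)\,h(S^*|S_i)$, which the paper imports as Lemma~C.1(ii) of \cite{jeff_bp} and you rederive directly from the supermodular curvature definition plus monotonicity. The only difference is that self-contained derivation of the cited lemma; there are no gaps.
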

\begin{proof}
For $\tvar<k$, let $S_\tvar=\{v_1,\dots,v_\tvar\}$ be the items chosen by the greedy algorithm. We can write:
\begin{align*}
    h(S^*)&\leq h(S^*\cup S_\tvar)\\
    &=h(S_\tvar)+\sum_{j=1}^kh(v_j^*|S_\tvar\cup \{v_1^*,\dots,v_{j-1}^*\})\\
    &\leq h(S_\tvar)+\frac{1}{1-\kappa^g}\sum_{j=1}^kh(v_j^*|S_\tvar)\\
    &\leq h(S_\tvar)+\frac{1}{1-\kappa^g}\sum_{j=1}^kh(v_{\tvar+1}|S_\tvar)\\
    &= h(S_\tvar)+\frac{k}{1-\kappa^g}\big(h(S_{\tvar+1})-h(S_\tvar)\big),
\end{align*}
where the first inequality uses Lemma C.1.(ii) of \cite{jeff_bp} and the second inequality is due to the update rule of the greedy algorithm. Rearranging the terms, we can write:
\begin{align*}
    h(S^*)-h(S_\tvar)&\leq \frac{k}{1-\kappa^g}\big([h(S^*)-h(S_{\tvar})]-[h(S^*)-h(S_{\tvar+1})]\big)\\
    h(S^*)-h(S_{\tvar+1})&\leq (1-\frac{1-\kappa^g}{k})(h(S^*)-h(S_{\tvar}))
\end{align*}
Applying the above inequality recursively for $\tvar=0,\dots,k-1$, we have:
\begin{equation*}
    h(S^*)-h(S)\leq (1-\frac{1-\kappa^g}{k})^k (h(S^*)-\underbrace{h(\emptyset)}_{=0})
\end{equation*}
Using the inequality $1-x\leq e^{-x}$ and rearranging the terms, we have:
\begin{equation*}
    h(S)\geq (1-e^{-(1-\kappa^g)})h(S^*)
\end{equation*}
If $\kappa_f=1$, this approximation ratio matches the obtained approximation ratio for the greedy algorithm in Theorem 3.7 of \cite{jeff_bp} without the need to change the original proof of the greedy algorithm.
\end{proof}

%%%%%%%%%%%%%%%%%%%%%%%%%%%%%%%%%%%%%%%%%%%%%%%%%%%%%%%%%%%%%%%%%%%%%%%%%%%%%%%%%%%%%%%%%%
%%%%%%%%%%%%%%%%%%%%%%%%%%%%%%%%%%%%%%%%%%%%%%%%%%%%%%%%%%%%%%%%%%%%%%%%%%%%%%%%%%%%%%%%%%
%%%%%%%%%%%%%%%%%%%%%%%%%%%%%%%%%%%%%%%%%%%%%%%%%%%%%%%%%%%%%%%%%%%%%%%%%%%%%%%%%%%%%%%%%%
\section{Proofs from Section~\ref{sec:robustness}}
\label{sec:proofs-sect-refs}

%%%%%%%%%%%%%%%%%%%%%%%%%%%%%%%%%%%%%%%%%%%%%%%%%%%%%%%%%%%%%%%%%%%%%%%%%%%%%%%%%%%%%%%%%%
\subsection{Approximate Greedy on BP Functions}
\label{sec:appr-greedy-bp}

\paragraph{Notation}

We use $S_{\tvar}$ to refer to the ordered set of elements chosen for function $h$ until round $\tvar$, and $S$ to refer to the ordered final set of items chosen for function $h$ until round $\Tvar$. Hence, $S_j$ refers to the first $j$ elements chosen for $h$. Let $s_j$ be the $j^\text{th}$ element of $S$. Then, we define $a_j = h(s_j | \{s_1 \ldots s_{j-1}\})$ be the gain of the $j^\text{th}$ element chosen.

Recall that $S$ is an ordered set. We let $C \subseteq [k]$ denote the indices (in increasing order) of elements in $S$ that are also in $S^\ast$.
For instance, for $S = \{s_1 \ldots s_5\}$ and $S \cap S^\ast = \{s_1, s_2, s_3\}$, we have $C = \{1, 2, 3\}$. Hence, $j \in C \iff s_j \in S \cap S^\ast$. Further, define filtered sets $C_t = \{c \in C| c \leq t\}$ as the subset of the first $t_{th}$ elements of $S$ that are also in the optimal $S^\ast$.

We restate the lemma for ease of reference.
\lemmaone*
% \begin{lemma}
%   Any output $S$ of the approximate greedy selection rule in Equation~\eqref{eq:approximate_selection_rule} admits the following guarantee on BP objectives (Definition~\ref{def:bp}) for Problem~\eqref{eq:mnn_offline_opt}:
%   \[
%     h(S) \geq \frac{1}{\kappa_{f}} \left[1 - e^{-(1 - \kappa^g) \kappa_{f}}\right] h(S^\ast) - \sum_{j=1}^k r_j
%   ,\]
%   where $\kappa_f, \kappa^g$ are as defined in Definitions~\ref{def:submod_curvature} and ~\ref{def:supermod_curvature}.
% \end{lemma}

\begin{proof}[Proof of Lemma~\ref{prop:robust_greedy_bp}]

From Lemma~\ref{lem:mistake_bound}, we have that the approximate greedy procedure obeys $k$ different inequalities, and we wish to show that this is sufficient to obey the inequality above. In order to complete the argument, we consider the worst-case overall gain if these $k$ inequalities are satisfied; and show that this worst-case sequence satisfies the desired, and hence the approximate greedy procedure must satisfy the desired as well.

To characterize the worst-case gains, we define a set of linear programming problems parameterized by a set $B$ and constants $(\xi, \rho)$.
\begin{equation}
\label{eq:lp}
    \begin{aligned}
    & T(B, \xi, \rho) = \min_b \sum_{j=1}^{k} b_j \\
    & \text{s.t.} \;\;\;  h(S^\ast) \leq \xi \sum_{j \in [t-1] \setminus B_{t-1}} b_j + \sum_{j \in B_{t-1}} b_j  + \frac{k - |B_{t-1}| }{1 - \beta} b_t \; \; \; ,\forall t \in [k]
    \end{aligned}
\end{equation}
In the above, the decision variable $b = [b_1 \ldots b_{k}]$ is a vector in $\R^{k}$, and satisfies $b \geq 0$.
The constants $k$ is a fixed value for the LP.
The parameter of the LP, $B \subseteq [k]$, and $B_t = \{j \in B| j \leq t\}$ is the filtered set. Note that the constraints are linear in $b$ with non-negative coefficients.

The above LP becomes helpful to our setting when we set $(\xi, \beta) = (\kappa_f, \kappa^g)$. Additionally, we are interested in the choices $B = C$ and $B = \emptyset$, where $C$ is defined prior to the lemma statement. To show the result, we hope to show the following chain of inequalities:
\begin{equation}
   h(S) + \sum_{j=1}^{k}  r_j \geq T(C, \kappa_f, \kappa^g) \geq T(\emptyset, \kappa_f, \kappa^g) \geq \omega h(S^\ast)
\end{equation}
In the above,
\[
\omega = \frac{1}{\kappaf}\left[1 - e^{-(1 - \kappa^g) \kappa_{f}}\right]
\]
Combining the two ends of this chain yields the desired lemma statement. 
We recognize that $T(\cdot)$ is exactly the LP considered in \cite{jeff_bp}, 
modulo notation differences. 
Since the second and third inequality are just statements about the linear program, 
they follow directly from Lemma D.2 in \cite{jeff_bp} 
when we substitute $\xi = \kappa_f$ and $\beta = \kappa^g$.

For the first inequality, we have from Lemma~\ref{lem:mistake_bound} that $b_j = a_j + r_j$ is a feasible solution for the linear program $T(C,  \kappa_f, \kappa^g)$. Hence,
\[
T(C, \kappaf, \kappag) \leq \sum_{j=1}^{k} b_j =  \sum_{j=1}^{k} a_j + \sum_{j=1}^{k} r_j = h(S) + \sum_{j=1}^{k} r_j
\]
\end{proof}

\vspace{1em}

The lemma below is a modified version of Equation~(19) in \cite{jeff_bp}, which accounts for the deviation of our algorithm from the greedy policy.

\begin{lemma}
\label{lem:mistake_bound}
Using the notation above and for $S$ as chosen by the approximate greedy procedure, it follows that $\forall t \in [k]$,
\[
h(S^\ast) \leq \kappa_f \sum_{j \in [t-1] \setminus C_{t-1}} (a_j + r_j) + \sum_{j \in C_{t-1}} (a_j + r_j)  + \frac{k - |C_{t-1}| }{1 - \kappa^g} \left(a_t + r_{t} \right)
\]

\end{lemma}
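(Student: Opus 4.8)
The plan is to establish the $k$ inequalities of Lemma~\ref{lem:mistake_bound} by retracing the derivation of Equation~(19) in \citet{jeff_bp} and inserting the slack terms at the single point where exact greedy optimality is used. The key observation is that the approximate selection rule~\eqref{eq:approximate_selection_rule} guarantees, at every step $j$, that $a_j + r_j = h(v_j \mid S_{j-1}) + r_j \ge \max_{v} h(v \mid S_{j-1})$. Hence $b_j := a_j + r_j$ is an upper bound on the marginal gain of \emph{every} candidate element relative to $S_{j-1}$ --- exactly the property that the exact greedy gain $a_j$ enjoys when $r_j = 0$. Wherever the original argument invokes ``greedy chose the maximizer,'' I would substitute this weaker inequality; because the slack enters additively, each occurrence of $a_j$ in the final bound is replaced by $a_j + r_j$, which is precisely the feasibility statement needed for the LP $T(C,\kappa_f,\kappa^g)$ in the proof of Lemma~\ref{prop:robust_greedy_bp}.

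The three terms on the right-hand side are produced as follows. Fix $t \in [k]$. Starting from monotonicity, $h(S^\ast) \le h(S^\ast \cup S_{t-1})$, I telescope the right-hand side by first laying down the greedy prefix $S_{t-1}$ and then completing with the optimal-only elements $S^\ast \setminus S_{t-1}$; note there are exactly $k - |C_{t-1}|$ of the latter, since $C_{t-1}$ indexes the greedy elements already lying in $S^\ast$. For the completion term I would use the supermodular-curvature bound (Lemma C.1.(ii) of \citet{jeff_bp}), which gives $h(o \mid S_{t-1} \cup R) \le \tfrac{1}{1-\kappa^g} h(o \mid S_{t-1})$ for any intermediate set $R$, and then bound each $h(o \mid S_{t-1})$ by $a_t + r_t$ via the approximate-greedy inequality above (each such $o$ is still available at step $t$). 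Summing over the $k - |C_{t-1}|$ optimal-only elements yields the third term $\tfrac{k - |C_{t-1}|}{1-\kappa^g}(a_t + r_t)$.

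The first two terms come from the submodular-curvature accounting of the greedy prefix, following the curvature argument of \citet{conforti1984submodular} as used in \citet{jeff_bp}. Here the greedy elements that coincide with the optimal set (indices $C_{t-1}$) are charged at full weight, while the greedy-only elements (indices $[t-1]\setminus C_{t-1}$) are discounted by the submodular curvature $\kappa_f$, using the curvature inequality $f(v \mid B) \ge (1-\kappa_f) f(v \mid A)$ for $A \subseteq B$ (a direct consequence of Definition~\ref{def:submod_curvature}). As before, every appeal to greedy optimality inside this step is replaced by $a_j + r_j \ge (\cdot)$, so the non-shared terms appear as $\kappa_f(a_j + r_j)$ and the shared terms as $(a_j + r_j)$.

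The main obstacle is precisely this submodular-curvature reallocation: unlike the submodular-only analysis of \citet{NIPS2017_krause}, which adapts the elementary proof of \citet{nemhauser1978analysis}, one cannot simply expand $h(S_{t-1}) = \sum_{j<t} a_j$ and weight the non-shared terms by $\kappa_f$, since that inequality is false in isolation. Instead the $\kappa_f$ factor must emerge from a careful ordering of the telescoping chain so that the submodular curvature genuinely reduces the greedy-only contribution, exactly as in \citet{jeff_bp} (this is also where the improvement over the crude bound of Appendix~\ref{sec:app_naive} originates). The present lemma's only additional content beyond \citet{jeff_bp} is to verify that the perturbations $r_j$ propagate through this chain additively, so that the resulting system of $k$ inequalities is the constraint set of $T(C,\kappa_f,\kappa^g)$ with $b_j = a_j + r_j$.
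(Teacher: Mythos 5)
Your proposal is correct and follows essentially the same route as the paper: the $\kappa_f$-weighted accounting of the prefix is exactly the structural curvature inequality the paper imports as Lemma~C.2 of \citet{jeff_bp}, the tail is handled with the supermodular-curvature bound (Lemma~C.1(iv)) followed by $h(v\,|\,S_{t-1})\le \sup_{\tilde v}h(\tilde v\,|\,S_{t-1})\le a_t+r_t$, and the count $|S^\ast\setminus S_{t-1}|=k-|C_{t-1}|$ closes the argument. One small imprecision: the $r_j$'s in the first two sums do not arise from ``replacing appeals to greedy optimality'' (no such appeal occurs in the prefix accounting, which holds for any ordered set); the paper simply pads them in using $r_j\ge 0$ and the positivity of the coefficients, so your version actually yields a slightly stronger intermediate inequality from which the stated one follows.
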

\begin{proof}[Proof of Lemma~\ref{lem:mistake_bound}]
By the properties of BP functions from Lemma C.2 in \cite{jeff_bp}, it follows for all $t \in [k]$ that
\begin{align}
    h(S^\ast) & \leq \kappaf \sum_{j \in [t-1] \setminus C} a_j + \sum_{j \in C_{t-1}} a_j + h (S^\ast \setminus S_{t-1} | S_{t-1})\\
    & \leq \kappaf \sum_{j \in [t-1] \setminus C} (a_j + r_j) + \sum_{j \in C} (a_j + r_j) + h (S^\ast \setminus S_{t-1} | S_{t-1})
\end{align}
The inequality above follows because the coefficients on the first two summations are positive and $r_j \geq 0$. Now, we must simplify the third term to obtain the desired. For any feasible $v$,
\begin{equation}
\label{eq:intro_ri}
    h(v | S_{t-1}) \leq \sup_v h(v|S_{t-1}) \leq h(s_{t} | S_{t-1}) + r_{t} .
\end{equation}

The first inequality follows from the definition of $\sup$ and the second follows from the definition of $r_\tvar$ in the proof of Theorem~\ref{thm:bp_regret_bound} above.  Now, apply inequality (iv) from Lemma C.1 in \cite{jeff_bp}:
\begin{align*}
    h (S^\ast \setminus S_{t-1} | S_{t-1}) & \leq \frac{1}{1 - \kappa^g} \sum_{v \in S^\ast \setminus S_{t-1}} h(v|S_{t-1}) \\ 
    & \leq \frac{1}{1 - \kappa^g} \sum_{v \in S^\ast \setminus S_{t-1}} h(s_{t} | S_{t-1}) + r_{t}
\end{align*}
The second line follows from Equation~\eqref{eq:intro_ri}.

We have that \[
|S^\ast \setminus S_{t-1}| = |S^\ast| - |S^\ast \cap S_{t-1}| = k - |S^\ast \cap S_{t-1}|.
\]
Hence,
\begin{align*}
    h (S^\ast \setminus S_{t-1} | S_{t-1}) & \leq \frac{k - |S^\ast \cap S_{t-1}|}{1 - \kappa^g} \left[h(s_{t} | S_{t-1}) + r_{t} \right]
\end{align*}
Recognizing that  $|S^\ast \cap S_{t-1}| = |C_{t-1}|$ completes the argument.
\end{proof}

%%%%%%%%%%%%%%%%%%%%%%%%%%%%%%%%%%%%%%%%%%%%%%%%%%%%%%%%%%%%%%%%%%%%%%%%%%%%%%%%%%%%%%%%%%
\subsection{Approximate Greedy on WS Functions}
\label{sec:appr-greedy-ws}

Define $S, s_j, a_j, C$ as in the proof for BP functions. Below, we present the counterparts of the lemmas in the proof of the BP functions for the present case. The proof for Lemma~\ref{lem:sr_mistake_bound} is different than Lemma~\ref{lem:mistake_bound} due to the change in the class of functions being considered. The similarity of the two proofs suggests the generality of our proof technique and indicates that it may be analogously applied to other classes of functions as well. We restate the Lemma for ease of reference.

\lemmatwo*

\begin{proof}[Proof of Lemma~\ref{lem:robust_greedy_sr}]
We consider again the parameterized LP $T(\cdot)$, but this time with the constants set as $\xi = \zeta, \rho = 1 - \gamma$.
To show the result, we hope to show the following chain of inequalities:
\begin{equation}
   h(S) + \sum_{j=1}^{k}  r_j \geq T(C, \zeta, 1 - \gamma) \geq T(\phi, \zeta, 1 - \gamma) \geq \omega h(S^\ast)
\end{equation}
In the above,
\[
\omega = \frac{1}{\zeta} \left[ 1 - \left(1 - \frac{\gamma \zeta}{k} \right)^{k} \right]
\]

Similarly to the argument in Lemma~\ref{prop:robust_greedy_bp}, 
the first two inequalities follow directly from 
Lemma D.2 in \cite{jeff_bp} when we substitute 
$\xi = \zeta$ and $\rho = 1 - \gamma$. 
Under the same choice of constants, 
we have from Lemma~\ref{lem:sr_mistake_bound} 
that $b_j = a_j + r_j$ is a feasible solution for the linear program 
$T(C, \zeta, 1 - \gamma)$. Hence,
\[
  T(C, \zeta, 1 - \gamma) \leq \sum_{j=1}^{k} b_j =  \sum_{j=1}^{k} a_j + \sum_{j=1}^{k} r_j = h(S) + \sum_{j=1}^{k} r_j
\]
Recognizing that
\[
\frac{1}{\zeta} \left[ 1 - \left(1 - \frac{\gamma \zeta}{k} \right)^{k} \right] \geq 1 - e^{-\zeta \gamma}
\]
completes the argument.
\end{proof}

The lemma below is a modified version of Lemma~1 in \cite{bian2019guarantees}.

\begin{lemma}
\label{lem:sr_mistake_bound}
Using the notation above and for $S$ as chosen by the approximate greedy procedure, it follows that $\forall t \in \{0 \ldots k - 1\}$,
\[
h(S^\ast) \leq \zeta \sum_{j \in [t] \setminus C_t} (a_j + r_j) + \sum_{j \in C_t} (a_j + r_j)  + \frac{1}{\gamma} (k - |C_t|) \left(a_{t+1} + r_{t+1} \right)
\]

\end{lemma}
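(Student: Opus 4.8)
The plan is to mirror the structure of the BP mistake bound (Lemma~\ref{lem:mistake_bound}), replacing the submodular-curvature step with an argument based on the generalized curvature $\zeta$ and the diminishing-returns step with the submodularity ratio $\gamma$. Concretely, I would first establish the curvature-refined inequality
\[
h(S^\ast) \leq \zeta \sum_{j \in [t]\setminus C_t} a_j + \sum_{j \in C_t} a_j + h(S^\ast \setminus S_t \mid S_t),
\]
then insert the slack variables $r_j$ using non-negativity, and finally bound the residual marginal term $h(S^\ast \setminus S_t \mid S_t)$ by $\frac{1}{\gamma}(k-|C_t|)(a_{t+1}+r_{t+1})$.

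For the first (and key) step, I would start from monotonicity, $h(S^\ast) \leq h(S^\ast \cup S_t)$, and peel off the greedy elements not in the optimal set. Writing $S_t \setminus S^\ast = \{s_j : j \in [t]\setminus C_t\}$ and letting $P_j$ denote the elements of $S_t \setminus S^\ast$ selected before $s_j$, the exact identity $h(S^\ast) = h(S^\ast \cup S_t) - \sum_{j \in [t]\setminus C_t} h(s_j \mid S^\ast \cup P_j)$ holds. Because $S^\ast \cup P_j \supseteq S_{j-1}$ (every earlier greedy element lies in $S^\ast$ if its index is in $C$, or in $P_j$ otherwise), the generalized curvature (Definition~\ref{def:gen_curvature}) gives $h(s_j \mid S^\ast \cup P_j) \geq (1-\zeta)\, h(s_j \mid S_{j-1}) = (1-\zeta) a_j$. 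Substituting this, together with the telescoping identity $h(S^\ast \cup S_t) = \sum_{j \in [t]} a_j + h(S^\ast\setminus S_t \mid S_t)$, collapses the coefficient on the non-overlapping gains from $1$ to $\zeta$ and yields the displayed inequality.

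The remaining steps are routine. Since each $r_j \geq 0$ and the coefficients $\zeta$ and $1$ are non-negative, I can replace every $a_j$ by $a_j + r_j$ without reversing the inequality. For the residual term, the submodularity ratio (Definition~\ref{def:submod_ratio}) applied to the pair $(S^\ast \cup S_t, S_t)$ gives $h(S^\ast \setminus S_t \mid S_t) \leq \frac{1}{\gamma}\sum_{v \in S^\ast\setminus S_t} h(v\mid S_t)$, and the approximate greedy rule~\eqref{eq:approximate_selection_rule} guarantees $h(v \mid S_t) \leq \max_{\tilde v} h(\tilde v \mid S_t) \leq a_{t+1} + r_{t+1}$ for every still-available $v$ (in particular for each $v \in S^\ast \setminus S_t$). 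Using $|S^\ast \setminus S_t| = k - |C_t|$ then produces the final $\frac{1}{\gamma}(k-|C_t|)(a_{t+1}+r_{t+1})$ term and completes the bound.

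The main obstacle is precisely that first step: obtaining the tighter coefficient $\zeta < 1$ on the greedy gains that do not appear in the optimal set, rather than the trivial coefficient $1$ from plain monotonicity. The crux is choosing the peeling order so that, when each surplus greedy element $s_j$ is removed from $S^\ast \cup S_t$, its conditioning set still contains $S_{j-1}$; this containment is exactly what licenses the one-sided curvature bound $h(s_j\mid S^\ast\cup P_j) \geq (1-\zeta)a_j$ and makes the argument parallel the BP case, so that feeding $b_j = a_j + r_j$ into the same LP $T(\cdot)$ with $\xi = \zeta$ and $\rho = 1-\gamma$ (as invoked in Lemma~\ref{lem:robust_greedy_sr}) closes the proof.
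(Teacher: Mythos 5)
Your proposal is correct and follows essentially the same route as the paper's proof: the same telescoping of $h(S^\ast\cup S_t)$ over the greedy elements outside $S^\ast$, the same application of the generalized curvature to lower-bound each peeled gain by $(1-\zeta)a_j$, the same use of the submodularity ratio and the approximate-greedy slack to bound the residual term by $\tfrac{1}{\gamma}(k-|C_t|)(a_{t+1}+r_{t+1})$, and non-negativity of the $r_j$ to insert them elsewhere. Your indexing is in fact cleaner and more consistent with the lemma statement than the paper's own write-up, which has some off-by-one typos ($a_{j+1}$ versus $a_j$).
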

\begin{proof}[Proof of Lemma~\ref{lem:sr_mistake_bound}]

The proof follows from the definitions of generalized curvature, submodularity ratio, and instantaneous regret $r_\tvar$.

\begin{align*}
    h(S^\ast \cup S_t) & = h(S^\ast) + \sum_{j \in [t]} h(s_{j}| S^\ast \cup S_{j-1})
\end{align*}

We can split the summation above to separately consider the elements from $S_t$ that do and do not overlap with $S^\ast$.
\begin{align}
   h(S^\ast \cup S_t)  & = h(S^\ast) + \sum_{j: s_j \in S_t \setminus S^\ast} h(s_{j}| S^\ast \cup S_{j-1}) + \underbrace{\sum_{j: s_j \in S_t \cap S^\ast} h(s_{j}| S^\ast \cup S_{j-1})}_{=0} \notag \\
    & = h(S^\ast) + \sum_{j: s_j \in S_t \setminus S^\ast} h(s_{j}| S^\ast \cup S_{j-1}) \label{eq:union_utility}
\end{align}

From the definition of submodularity ratio,
\begin{equation}
\label{eq:apply_submod_ratio}
    h(S^\ast \cup S_t) \leq h(S^\ast) + \frac{1}{\gamma} \sum_{\omega \in S^\ast \setminus S_t} h(\omega | S_t)
\end{equation}

From the definition of generalized curvature, it follows that
\begin{align}
\sum_{j: s_j \in S_t \setminus S^\ast} h(s_{j}| S^\ast \cup S_{j-1}) & \geq (1 - \zeta) \sum_{j: s_j \in S_t \setminus S^\ast} h(s_{j}| S_{j-1}) \notag \\
& = (1 - \zeta) \sum_{j: s_j \in S_t \setminus S^\ast} a_{j+1} \label{eq:apply_curvature}
\end{align}

Then, plugging the inequalities \eqref{eq:apply_submod_ratio} and \eqref{eq:apply_curvature} into \eqref{eq:union_utility},
\begin{align}
    h(S^\ast) & = h(S^\ast \cup S_t) - \sum_{j: s_j \in S_t \setminus S^\ast} h(s_{j}| S^\ast \cup S_{j-1}) \notag \\
    &\leq \left[h(S) + \frac{1}{\gamma} \sum_{\omega \in S^\ast \setminus S} h(\omega | S)\right] +  \left[\zeta \sum_{j: s_j \in S_t \setminus S^\ast} a_{j+1} -  \sum_{j: s_j \in S_t \setminus S^\ast} a_{j+1} \right] \label{eq:sr_box}
\end{align}

Now, we can rearrange and write \[
h(S) - \sum_{j: s_j \in S_t \setminus S^\ast} a_{j+1} = \sum_{j: s_j \in S_t \cap S^\ast} a_{j+1}
\] to simplify Equation~\eqref{eq:sr_box} as
\begin{align}
        h(S^\ast) & = \zeta \sum_{j: s_j \in S_t \setminus S^\ast} a_{j+1} + \frac{1}{\gamma} \sum_{\omega \in S^\ast \setminus S} h(\omega | S) + \sum_{j: s_j \in S_t \cap S^\ast} a_{j+1} \notag \\
    & \leq  \zeta \sum_{j: s_j \in S_t \setminus S^\ast} a_{j+1} + \frac{1}{\gamma} \sum_{\omega \in S^\ast \setminus S} (a_{t+1} + r_{t}) + \sum_{j: s_j \in S_t \cap S^\ast} a_{j+1} \label{eq:sr_rt_sup} \\
    & \leq \zeta \sum_{j: s_j \in S_t \setminus S^\ast} (a_{j+1} + r_j) + \sum_{j: s_j \in S_t \cap S^\ast} (a_{j+1} + r_j) + \frac{1}{\gamma} (k - |C_\tvar|) (a_{t+1} + r_{t}) \label{eq:sr_rt_pos}
\end{align}

Equation~\eqref{eq:sr_rt_sup} follows by using the definitions of $r_t$ and supremum, and Equation~\eqref{eq:sr_rt_pos} follows since $r_t \geq 0$.
\end{proof}

%%%%%%%%%%%%%%%%%%%%%%%%%%%%%%%%%%%%%%%%%%%%%%%%%%%%%%%%%%%%%%%%%%%%%%%%%%%%%%%%%%%%%%%%%%
\subsection{Approximate Weighted Greedy on BP Functions}
\label{sec:appr-weight-greedy}

We recap notation for ease of reference. Define the modular lower bound of the submodular function $l_{1}(S) = \sum_{j \in S} f(j| V \backslash \{j\})$. Additionally, define the totally normalized submodular function as $\fone(S) = f(S) - l_{1}(S)$. Note that the $\fone$ will always have curvature $\kappa_f = 1$.
$h (S) = f_{1}(S) + g (S) + l_{1}(S)$. Now, define the function
\begin{equation}
    \pi_{j} (v | A) = \left(1 - \frac{1}{k}\right)^{k - j - 1} f_{1} (v|A) + g(v|A) + l_{1}(v)
\end{equation}
Also define
\begin{equation}
    \pi_{j} (A) = \left(1 - \frac{1}{k}\right)^{k - j} f_1(A) + g(A) + l_1(A)
\end{equation}

The proof of Lemma 7 follows using the same outline as the approximation-guarantee of the distorted greedy algorithm for BP functions (Theorem 3 in \cite{distorted_greedy}). However, Algorithm~\ref{alg:mnnucb_dist} is not greedy, so we keep record of the deviation of Algorithm~\ref{alg:mnnucb_dist} from the best-in-hindsight choice at \emph{each stage}. This results in the new second term in Lemma~\ref{prop:robust_distorted_greedy}.

\lemmathree*

\begin{proof}
    Using the submodular and supermodular curvature definition, we can write:
\begin{align*}
    &l_1(S) = \sum_{j \in S} f (j| V \backslash \{j\})\geq (1-\kappa_f)f(S)\\
    &l_2(S) = \sum_{j \in S} g (j|\emptyset)\geq (1-\kappa^g)g(S)
\end{align*}
Define $l = l_1 + l_2$. Then, we can use the result of Lemma~\ref{lem:distorted_rj_lemma} to write:
\begin{align*}
    f(S) + g(S) &= f_1(S) + g_1(S) + l(S) \\
    &\geq \left(1 - \frac{1}{e} \right) f_1 (S^\ast) + l (S^\ast) - \rjsum \\
    &= \left(1 - \frac{1}{e} \right) (f (S^\ast)-l_1(S^\ast)) + l_1 (S^\ast) + l_2(S^\ast) - \rjsum \\
    &= \left(1 - \frac{1}{e} \right) f (S^\ast) + \frac{1}{e} l_1 (S^\ast) + l_2(S^\ast) - \rjsum \\ 
    &\geq \left(1 - \frac{1}{e} \right) f (S^\ast) + \frac{1-\kappa_f}{e} f(S^\ast) + (1-\kappa^g)g(S^\ast) - \rjsum \\ 
    &= \left(1 - \frac{\kappa_f}{e} \right) f (S^\ast) + (1-\kappa^g)g(S^\ast) - \rjsum \\ 
    &\geq \min \left\{ 1 - \frac{\kappa_f}{e}, \; 1 - \kappa^g \right\} \; h_q (S^\ast) - \rjsum.
\end{align*}
\end{proof}

\vspace{1em}

\begin{lemma}
\label{lem:dist_gain_bound}
\[
\pi_{j} (s_j| S_{j-1}) +  r_j \geq \frac{1}{k} \left(1 - \frac{1}{k} \right)^{k - (j +1)} (f(S^\ast) - f(S_j) + \frac{1}{k} l(S^\ast)
\]
\end{lemma}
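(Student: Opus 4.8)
The plan is to read the bound directly off the approximate distorted greedy selection rule and then average the distorted gain over the optimal set $S^\ast$, treating the three constituents of $\pi_j$ — the totally normalized submodular $f_1$, the supermodular $g$, and the modular $l_1$ — separately, since each obeys a different structural inequality. The $r_j$ on the left is exactly the slack permitted by the rule, so the first move is purely definitional.

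First I would invoke Equation~\eqref{eq:approximate_selection_rule_distorted}: because $s_j$ is chosen within slack $r_j$ of the distorted maximizer, $\pi_j(s_j\mid S_{j-1}) + r_j \geq \max_{\tilde v}\pi_j(\tilde v\mid S_{j-1})$. Since $f_1$, $g$, and $l_1$ are all monotone and normalized, $\pi_j(\cdot\mid S_{j-1})$ is nonnegative, so the maximum over still-available items is at least the average of $\pi_j(o\mid S_{j-1})$ over the (at most $k$) elements $o\in S^\ast$ that have not yet been selected; this gives $\pi_j(s_j\mid S_{j-1}) + r_j \geq \tfrac1k\sum_{o\in S^\ast}\pi_j(o\mid S_{j-1})$.

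Next I would expand $\pi_j(o\mid S_{j-1}) = \left(1-\tfrac1k\right)^{k-j-1} f_1(o\mid S_{j-1}) + g(o\mid S_{j-1}) + l_1(o)$ and bound each piece across $o\in S^\ast$. For the submodular part, submodularity of $f_1$ gives $\sum_{o\in S^\ast} f_1(o\mid S_{j-1}) \geq f_1(S^\ast\cup S_{j-1}) - f_1(S_{j-1}) \geq f_1(S^\ast) - f_1(S_j)$, where monotonicity both promotes $S^\ast\cup S_{j-1}$ to $S^\ast$ and absorbs the newly added $s_j$. For the modular part, $\sum_{o\in S^\ast} l_1(o) = l_1(S^\ast)$ exactly. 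The delicate piece is the supermodular one: here I would use increasing marginal returns, $g(o\mid S_{j-1}) \geq g(o\mid\emptyset)$, so that $\sum_{o\in S^\ast} g(o\mid S_{j-1}) \geq \sum_{o\in S^\ast} g(o\mid\emptyset) = l_2(S^\ast)$. Combining $l_1(S^\ast)+l_2(S^\ast) = l(S^\ast)$ and pulling out the distortion factor $\left(1-\tfrac1k\right)^{k-j-1}$ on the submodular term yields the claimed right-hand side.

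The main obstacle is precisely the supermodular term. Unlike the submodular case, where chaining diminishing marginals is routine, here I cannot telescope $g$ downward; instead I must bound each conditioned gain $g(o\mid S_{j-1})$ from \emph{below} by its unconditioned value $g(o\mid\emptyset)$, and this is exactly what forces the modular surrogate $l_2$ (rather than $g(S^\ast)$) to appear — so the BP structure is genuinely used at this step and nowhere else. A secondary technical point to handle carefully is the bookkeeping for elements of $S^\ast$ that already lie in $S_{j-1}$ (whose $f_1$- and $g$-marginals vanish, so the averaging must be taken over the correct index set), together with tracking the distortion exponent $k-j-1$ and the fact that it is the totally normalized $f_1$ that emerges, which is what makes this per-step estimate telescope cleanly into Lemma~\ref{lem:distorted_rj_lemma}.
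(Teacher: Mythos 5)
Your proposal is correct and follows essentially the same route as the paper's proof: lower-bound the slack-adjusted distorted gain by the average of $\pi_j(\cdot\mid S_{j-1})$ over $S^\ast$, then handle the three pieces via submodularity of $f_1$, the bound $g(o\mid S_{j-1})\geq g(o\mid\emptyset)=l_2(o)$ (equivalently, nonnegativity of the totally normalized $g_1$'s marginals, which is exactly the paper's fix to \citet{distorted_greedy}), and exact additivity of $l_1$. The only cosmetic difference is that you carry the decomposition as $g+l_1$ rather than $g_1+l$, and you are, if anything, more explicit than the paper about the index-set bookkeeping for elements of $S^\ast$ already in $S_{j-1}$ and about the fact that the statement's $f$ should be read as $f_1$.
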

\begin{proof}[Proof of \ref{lem:dist_gain_bound}]

From the definition of $r_j$,

\begin{align*}
  \pi_{j} (s_j| S_{j-1}) + r_j & \geq \frac{1}{k} \sum_{e \in S^\ast} \pi_j (e|S_{j-1}) \\
  & = \frac{1}{k} \sum_{e \in S^\ast} \left(1 - \frac{1}{k} \right)^{k - (j +1)} f_1 (e| S_{j-1}) + g_1 (e| S_{j-1}) + l (e) \\ 
  & \geq \frac{1}{k} \left(1 - \frac{1}{k} \right)^{k - (j +1)} \left(f_1(S^\ast) - f_1(S_{j-1}) \right) + \frac{1}{k} l (S^\ast)
\end{align*}

The inequality follows from the submodularity of $f_1$ and the supermodular curvature of $g_1$.

\end{proof}

\begin{lemma}
\label{lem:distorted_rj_lemma}
Any approximately weighted greedy procedure with constants $\{r_j\}_{j=1}^k$ returns a set $S$ of size $k$ such that
\[
f_1(S) + g_1(S) + l(S) + \rjsum \geq \left(1 - \frac{1}{e} \right) f_1 (S^\ast) + l (S^\ast)
\]
\end{lemma}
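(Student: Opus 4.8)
The plan is to mimic the distorted-greedy potential argument of \citet{distorted_greedy}, but to carry the per-step slack $r_j$ through the telescoping so that it only ever contributes an additive $\rjsum$. First I would introduce the distorted potential
\[
\Phi_j = \left(1 - \tfrac{1}{k}\right)^{k-j} f_1(S_j) + g_1(S_j) + l(S_j),
\]
and record the two boundary facts that make it useful: since $f_1,g_1,l$ are all normalized, $\Phi_0 = 0$, while at the last step the distortion weight equals $1$, so $\Phi_k = f_1(S) + g_1(S) + l(S)$ is the left-hand side of the claim minus the $\rjsum$ term. It therefore suffices to show $\Phi_k - \Phi_0 \ge \left(1 - \tfrac1e\right) f_1(S^\ast) + l(S^\ast) - \rjsum$, which I would obtain by summing per-step increments.

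Next I would compute a single increment exactly. Writing $S_j = S_{j-1} \cup \{s_j\}$, splitting $f_1(S_j) = f_1(S_{j-1}) + f_1(s_j \mid S_{j-1})$, and using that $g_1(\cdot\mid\cdot)$ is a genuine marginal while $l$ is modular, the change in distortion weight $(1-\tfrac1k)^{k-j} - (1-\tfrac1k)^{k-j+1} = \tfrac1k(1-\tfrac1k)^{k-j}$ produces
\[
\Phi_j - \Phi_{j-1} = \pi_j(s_j \mid S_{j-1}) + \tfrac{1}{k}\left(1 - \tfrac{1}{k}\right)^{k-j} f_1(S_{j-1}),
\]
where the first summand is precisely the distorted marginal gain that the approximate rule in Equation~\eqref{eq:approximate_selection_rule_distorted} nearly maximizes. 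The crucial move is to lower bound $\pi_j(s_j\mid S_{j-1})$ by Lemma~\ref{lem:dist_gain_bound}: its right-hand side contributes a $-\tfrac1k(1-\tfrac1k)^{k-j} f_1(S_{j-1})$ term that exactly cancels the leftover potential term above, leaving
\[
\Phi_j - \Phi_{j-1} \geq \tfrac{1}{k}\left(1 - \tfrac{1}{k}\right)^{k-j} f_1(S^\ast) + \tfrac{1}{k} l(S^\ast) - r_j.
\]

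Finally I would telescope over $j = 1, \ldots, k$. The geometric sum evaluates as $\sum_{j=1}^{k}(1-\tfrac1k)^{k-j} = k\big(1 - (1-\tfrac1k)^k\big)$, so the $f_1(S^\ast)$ coefficient collapses to $1 - (1-\tfrac1k)^k \geq 1 - e^{-1}$ via $(1-\tfrac1k)^k \leq e^{-1}$; the $l(S^\ast)$ terms sum to $l(S^\ast)$, and the slack accumulates to $\rjsum$. Substituting $\Phi_0 = 0$ and $\Phi_k = f_1(S) + g_1(S) + l(S)$ and rearranging yields the claim. The main obstacle is the bookkeeping in the second step: one must align the distortion exponent so that the leftover $f_1(S_{j-1})$ term from the potential increment and the $f_1(S_{j-1})$ term generated when lower bounding $\pi_j$ by its average over $S^\ast$ (through submodularity of $f_1$, monotonicity of $g_1$, and modularity of $l$, exactly as in Lemma~\ref{lem:dist_gain_bound}) cancel identically. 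If the exponent is off by one the boundary values $\Phi_0,\Phi_k$ no longer coincide with the desired quantities and the cancellation fails; securing this alignment, and confirming that the dropped marginal-gain terms are nonnegative (which is where the MNN hypotheses enter), is the only delicate part, after which the telescoping and the bound $(1-\tfrac1k)^k\le e^{-1}$ are routine.
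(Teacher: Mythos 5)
Your proposal is correct and follows essentially the same route as the paper: telescoping the distorted potential $\pi_j(S_j)$ from $\pi_0(\emptyset)=0$ to $\pi_k(S)=f_1(S)+g_1(S)+l(S)$, using the exact increment identity (the paper imports it as Lemma~4 of \citet{distorted_greedy}, you rederive it), lower bounding the distorted marginal gain via Lemma~\ref{lem:dist_gain_bound} so the $f_1(S_{j-1})$ terms cancel, and summing the geometric series with $(1-\tfrac1k)^k \le e^{-1}$. The exponent alignment you flag as the delicate point is indeed the only bookkeeping subtlety (the paper's own indexing of $\pi_j(\cdot\mid\cdot)$ versus $\pi_j(\cdot)$ is off by one in the same way), and your treatment of it is consistent with the paper's.
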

\begin{proof}
According to the definition of $\pi$, we have that $\pi_{0} (\emptyset) = 0$ and
\[
\pi_{k} (S) =  f_1(S) + g_1 (S) + l (S)
\]

Applying Lemma~4 from \cite{distorted_greedy}, we have

\begin{align*}
    &\pi_{j+1} (S_{j+1}) - \pi_j (S_j) \\
    & = \pi_j (s_{j+1}| S_j) + \frac{1}{k} \left(1 - \frac{1}{k} \right)^{k - (j +1)} f_1 (S_j) \\ 
    & \geq \frac{1}{k} \left(1 - \frac{1}{k} \right)^{k - (j +1)} f_1(S^\ast) + \frac{1}{k} l (S^\ast) - r_{j+1}
\end{align*}

Above, we applied Lemma~\ref{lem:dist_gain_bound} to obtain the inequality. Now, we have that
\begin{align*}
    & f_1 (S) + g_1 (S) + l (S) = \sum_{j=0}^{k-1} \pi_{j+1} (S_{j+1}) - \pi_j (S_j) \\
    & \geq \sum_{j=0}^{k-1} \frac{1}{k} \left(1 - \frac{1}{k} \right)^{k - (j +1)} f_1(S^\ast) + \frac{1}{k} l (S^\ast) - r_{j+1} \\
    & \geq \left (1 - \frac{1}{e}\right)f_1(S^\ast) + l(S^\ast) - \rjsum
\end{align*}

\end{proof}

%%%%%%%%%%%%%%%%%%%%%%%%%%%%%%%%%%%%%%%%%%%%%%%%%%%%%%%%%%%%%%%%%%%%%%%%%%%%%%%%%%%%%%%%%%
%%%%%%%%%%%%%%%%%%%%%%%%%%%%%%%%%%%%%%%%%%%%%%%%%%%%%%%%%%%%%%%%%%%%%%%%%%%%%%%%%%%%%%%%%%
%%%%%%%%%%%%%%%%%%%%%%%%%%%%%%%%%%%%%%%%%%%%%%%%%%%%%%%%%%%%%%%%%%%%%%%%%%%%%%%%%%%%%%%%%%
\section{Discussion and Proofs from Section~\ref{sec:alpha_regret} and Section~\ref{sec:separate_feedback}}
\label{sec:app_distorted}

%%%%%%%%%%%%%%%%%%%%%%%%%%%%%%%%%%%%%%%%%%%%%%%%%%%%%%%%%%%%%%%%%%%%%%%%%%%%%%%%%%%%%%%%%%
\subsection{Remarks on hyperparameters $(\eta, b)$}
\label{sec:remark-hyperparameters}

Note that $b$ refers to our budget fraction variable as it serves to limit the final size of $G_t$, while $\eta$ is an accuracy-computation tradeoff variable that tends to produce larger $G_t$'s. While $\eta$ and $b$ are somewhat related (and are partially redundant) we utilize the “budget” and “accuracy” notion as originally defined in \cite{zenati22a} to be consistent with that work

\subsection{Remarks on step size $\beta_\tvar$}
\label{sec:remark-steps-beta_t}

From on the analysis found in~\citet{zenati22a},
we set
\begin{equation}
\label{eq:beta}
\beta_\tvar = \sqrt{\lambda}B + \sqrt{4 \log(\Tvar) + \log\left(e + \frac{e \tvar}{\lambda}\right) \deff}
\end{equation}
which enables our regret bounds to hold where $e = \exp(1)$, $\lambda$ is a hyperparameter,
and $B$ is our RKHS norm bound.

In our empirical simulations, however, we found it much more effective
to set $\beta_\tvar$ to a constant which is then tuned as a
hyperparameter. In fact, \citet{zenati22a} found this to be the case
in their simulations as well.

\subsection{Remark on role of kernel parameters on $\deff$}
\label{sec:kernel_deff}

Consider the RBF kernel $\kernel(x, x') = \exp(-b \|x - x'\|^2)$.
If the parameter $b$ is very large, then the kernel function will be very close to zero for all $x \neq x'$.
Hence, the kernel matrix  $K_T$ will be close to the identity matrix,
and the eigenvalues will decay very slowly.
Hence the effective dimension $\deff$ is likely to be large.
Our current regret bound does not capture this, because we wanted to focus on the scaling of regret with $T$.
However, there is a constant in front that scales as $b$,
which effectively changes the base of the $\log(T)$ in the regret bound \citep[Section~4.B]{seeger2008information}.
In Figure~\ref{fig:deff_b}, we see that if the horizon $T$ is quite small, this effect can dominate and make
the $T$-scaling appear almost linear.
On the other hand, if we make $b$ very small, then the quantity $B$ would increase;
this is because $\kernel(x, x')$ being large is not very informative about the function values at $x$ and $x'$.
Hence, some care is required to tune the kernel parameters correctly.
This applies to other kernel functions as well.
This effect is present in prior works \citep{icml2010_srinivas, nips2011_krause, zenati22a} as well,
but these do not address it explicitly which is why we wanted to offer some clarity about this point.

\begin{figure}[t]
  \centering
  \includegraphics[width=0.7\textwidth]{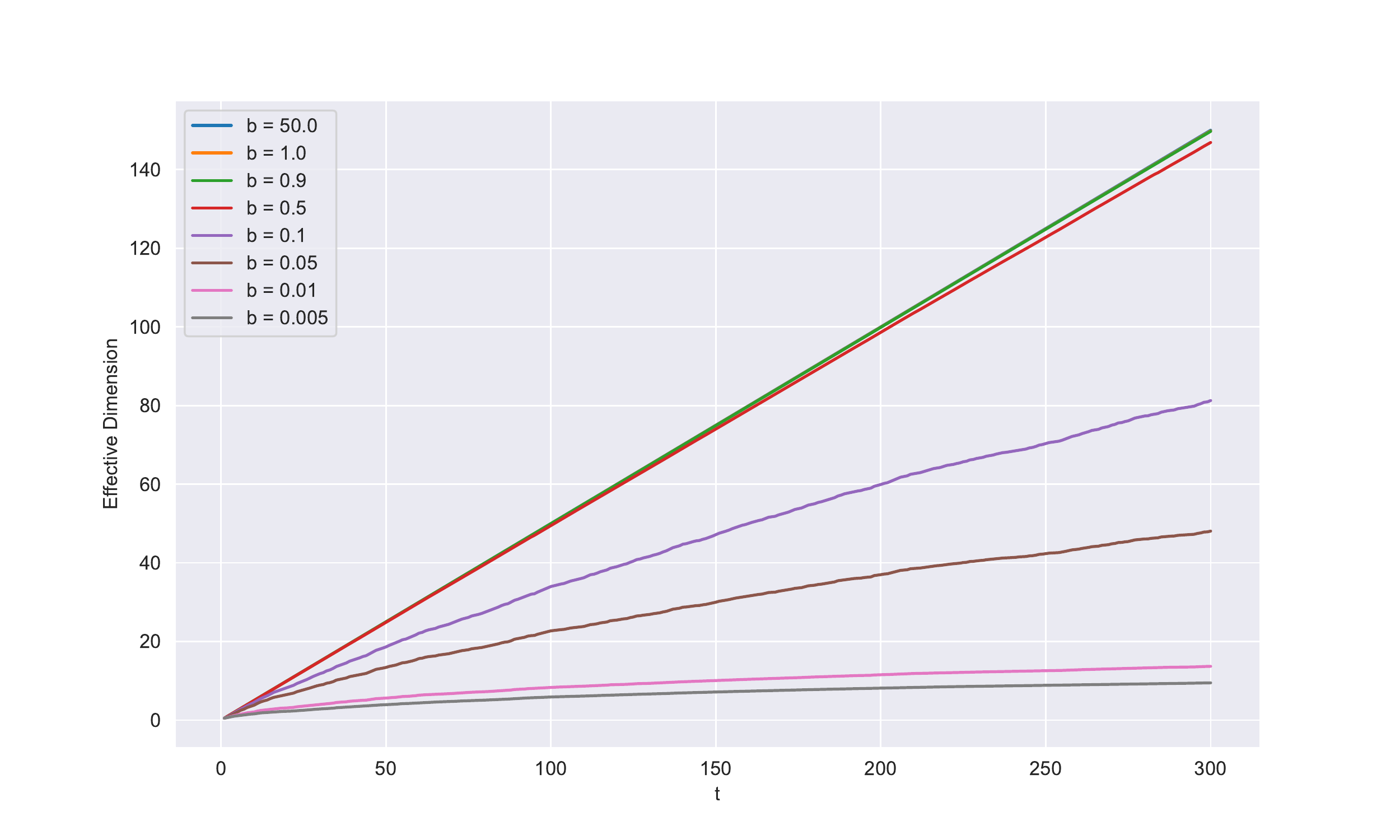}
  \caption{The dependence of effective dimension $\deff$ as on the parameter $b$ in the RBF kernel.}
  \label{fig:deff_b}
\end{figure}

\subsection{Proof of Theorem~\ref{thm:bp_regret_bound}(b)}
\label{sec:proof-theor-refthm:b-1}

The proof follows the same LP construction of \cite{conforti1984submodular} as Theorem~\ref{thm:bp_regret_bound}. The main contribution lies showing Lemma~\ref{lem:sr_mistake_bound}; this shows that the offline counterpart, Lemma 1 from \cite{bian2019guarantees},  holds in the online setting as well.

\begin{proof}
Define $r_\tvar$ and $R_\tvar$ as in the proof of Theorem~\ref{thm:bp_regret_bound}.
From Lemma~\ref{lem:robust_greedy_sr} applied to each $h_q$, it follows that
\begin{equation*}
    \mathcal{R}_{\text{WS}}(\Tvar) \leq \sum_{k=1}^m \sum_{\tvar=1}^\Tvar \mathbb{I}(u_\tvar = k) r_\tvar = R_\Tvar
\end{equation*}

\track{As in the proof of Theorem~\ref{thm:bp_regret_bound}, combining Theorem~4.1 of \cite{zenati22a} with the above inequality, our argument is complete.}
\end{proof}

%%%%%%%%%%%%%%%%%%%%%%%%%%%%%%%%%%%%%%%%%%%%%%%%%%%%%%%%%%%%%%%%%%%%%%%%%%%%%%%%%%%%%%%%%%

\subsection{Remarks on  \cite{distorted_greedy}}
\label{sec:app_error}

\begin{figure}[t]
     \centering
     \includegraphics[width=0.8\columnwidth]{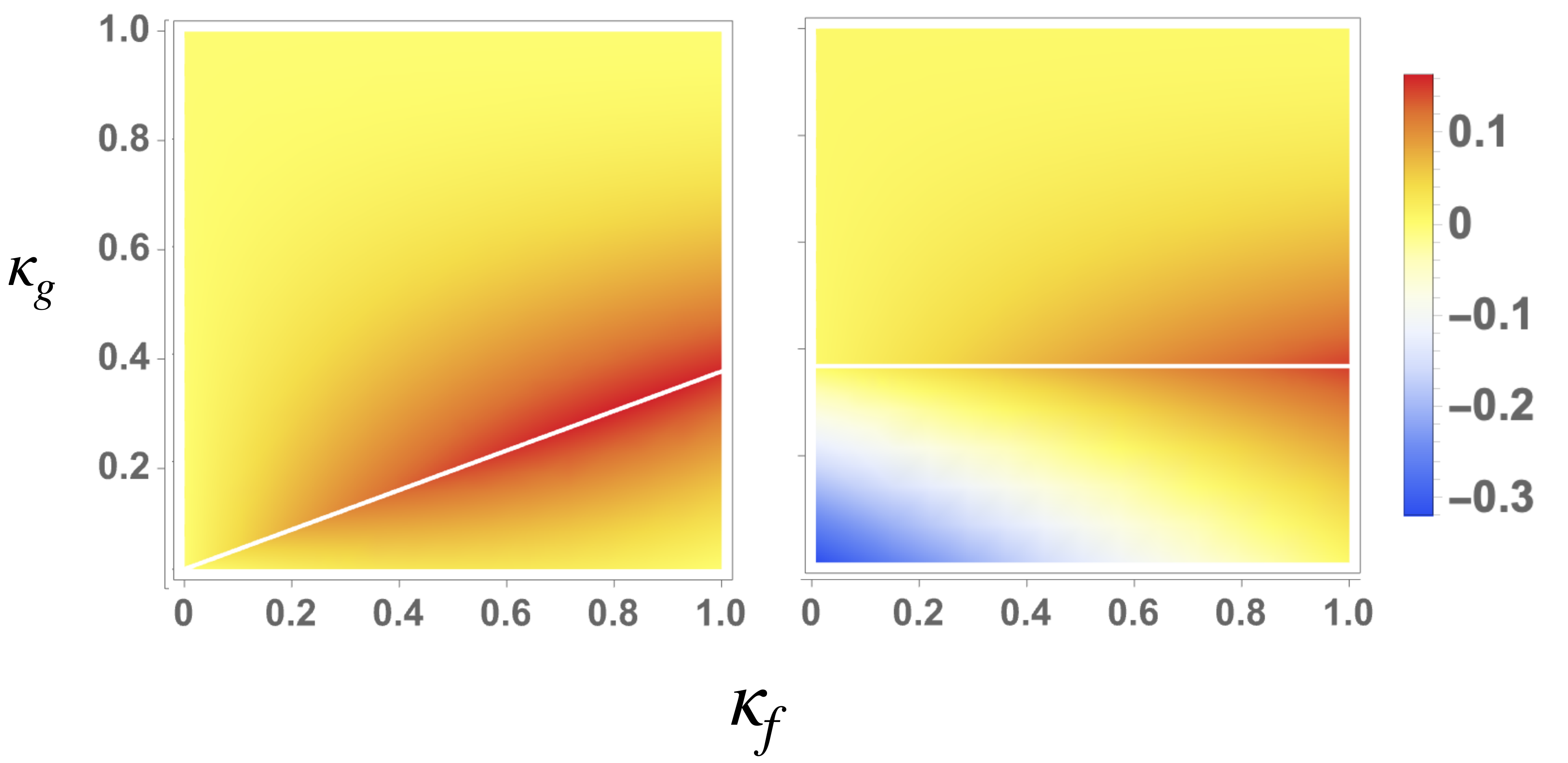}
    \caption{Contour plot of (left) $F_1(\kappafk, \kappagk) = \min \left\{ 1 - \frac{\kappafk}{e}, \; 1 -\kappagk \right\} -  \frac{1}{\kappa_{q,f}} \left[1 - e^{-(1 - \kappagk) \kappa_{q,f}} \right]$ and (right) $F_2(\kappafk, \kappagk) = \min \left\{ 1 - \frac{1}{e}, \; 1 -\kappagk \right\} -  \frac{1}{\kappa_{q,f}} \left[1 - e^{-(1 - \kappagk) \kappa_{q,f}} \right]$. (Left) compares the $\alpha$ from Theorem~\ref{thm:bp_regret_bound_dist_strong} with that from Theorem~\ref{thm:bp_regret_bound}, and (right) compares $\alpha$ from Proposition~\ref{prop:bp_regret_bound_dist_weaker} with that from Theorem~\ref{thm:bp_regret_bound}.
    }
\label{fig:contour}
\end{figure}

\paragraph{Comparison of $\alpha$ for greedy vs weighted Greedy.} In Figure~\ref{fig:contour}, we compare the $\alpha$ of the greedy optimization of the BP function in \cite{jeff_bp} with the distorted greedy variant in Equation~\eqref{eq:bp_regret_strong}. In the left panel, we see that the $\alpha$ in Equation~\eqref{eq:bp_regret_strong} is everywhere greater.

\paragraph{Error in \cite{distorted_greedy} and proposed fix.}
In \cite{distorted_greedy} on the bottom of Pg.188, 
the authors use the inequality:
$$ \sum_{e \in \text{OPT}} g_1 (e | S_t) \geq (1 - \kappa^g) (g_1(OPT) - g_1(S_t)) $$
Consider the following counterexample with $|V| = 3$ and $k = 2$ as the cardinality constraint.
Define $g(S) = |S|^2$, which is a concave over modular function, 
so it is supermodular. 
We can verify from definitions that $\kappa^g = 0.8$ 
and the modular lower bound $l_2(S) = |S|$, so that $g_1(S) = |S|^2 - |S|$. 
For simplicity, consider the case where $t=0$, so that $S_t = \emptyset$. 
Then, plugging into the equation, we see that the LHS is $0$, whereas the RHS is $0.2 \times 4 = 0.8 > 0$. 
Hence, this is a contradiction.
This example can be easily generalized to any concave over modular function, larger ground set sizes or different $t$. 

We rectify this by swapping this inequality with
\[
g_1 (e|S_\tvar) \geq (1 - \kappa^{g_1}) (g_1 (\text{OPT}) - g_1 (S_\tvar)) = 0
\]
The equality above holds because $\kappa^{g_1} = 1$ by construction. Hence, the $g_1$ term disappears from the analysis. In the analysis below, we make this fix and propagate the consequences; the modified algorithm, 
analysis and result applies to the offline setting of \cite{distorted_greedy} as well.

\begin{proof}[Proof of Theorem~\ref{thm:bp_regret_bound_dist_strong}]
First, we define some notation.
\begin{align*}
    &l_{q,1}(S) = \sum_{j \in S} f_q(j| V \backslash \{j\}) \\
    &\fkone(S) = \fk(S) - l_{q,1}(S)\\
    &l_{q,2}(S) = \sum_{j \in S} g_q(j|\emptyset) \\
    &\gkone(S) = \gk(S) - l_{q,2}(S)\\ 
    &l_q(S)=l_{q,1}(S)+l_{q,2}(S)
\end{align*}
% Recall that we use $S_{q, \tvar}$ to refer to the ordered set of elements chosen for function $h_q$ until round $\tvar$ and $S_q$ to refer to the ordered final set of items chosen for function $h_q$ until round $\Tvar$.  Let the ordered set $M = \{m_1, m_2, \ldots m_{\Tvar_q}\}$ denote the set of time steps $\{\tvar \in [T]: u_\tvar = q\}$ where the $q$-th user arrived to the optimizer. Hence,  
We restrict attention to the $q$-th function $h_q$. Recall that $S_{j, q}$ refers to the first $j$ elements chosen for $h_q$.

Let the distorted objective for user $q$ when selecting the $j$-th item in the set be:
\[
\pi_{j, q} (S) = \left(1 - \frac{1}{\Tvar_q} \right)^{\Tvar_q - j} f_{q,1} (S) + g_{q, 1} (S) + l_q(S)
\]
Additionally, define
\[
\Lambda_{j, q} (x, A) = \left(1 - \frac{1}{\Tvar_q} \right)^{\Tvar_q - (j+1)} f_{q,1}(x | A) + g_{q,1}(x | A) + l_q (x)
\]

As previously, we define the instantaneous regret at round $\tvar$ as the difference between the maximum possible utility that is achievable in the round and the actual received utility. However, this time, $r_\tvar$ is defined in terms of the distorted objective. \emph{This is a key difference from the earlier arguments that is crucial to the current proof.}
\[
r_\tvar = \sup_{v \in V} \Lambda_{q, t_{u_t}} (v, S_{u_\tvar, \tvar-1}) -  \Lambda_{q, t_{u_t}} (v_\tvar, S_{u_\tvar, \tvar-1})
\]
Define the accumulated instantaneous regret until round $\tvar$ as
\[
R_\tvar = \sum_{j=1}^\tvar r_j
\]
Recognize that $R_\tvar$ is different than $\Rt$. From Lemma~\ref{prop:robust_distorted_greedy} applied to each $h_q$, it follows that
\begin{equation}
\label{eq:bp_bound_cal_by_reg_dist}
    \RT \leq \sum_{q=1}^m \sum_{\tvar=1}^\Tvar \mathbb{I}(u_\tvar = q) r_\tvar = R_\Tvar
\end{equation}

Now, we can model the problem of the present work as a contextual bandit problem in the vein of \cite{zenati22a}. Here, the context in the $\tvar$-th round is $z_\tvar = (\phi_{u_\tvar}, S_{t_{u_t}, u_t})$. Now we invoke Theorem~4.1 in \cite{zenati22a}, Thus, we have that
\[
  \mathbb{E} [R_\Tvar] \leq O\left(\sqrt{\Tvar} \left(B \sqrt{\lambda \deff} + \deff\right) \right)
\]
Combining this with Inequality~\eqref{eq:bp_bound_cal_by_reg_dist}, our argument is complete.

\end{proof}

%%%%%%%%%%%%%%%%%%%%%%%%%%%%%%%%%%%%%%%%%%%%%%%%%%%%%%%%%%%%%%%%%%%%%%%%%%%%%%%%%%%%%%%%%%
\subsection{Analysis without \ref{assump:1.1}}
\label{sec:analys-with-refass}

In certain applications, \ref{assump:1.1} on $l_{q,1}$ may not be reasonable. For these cases, we may modify the algorithm slightly, and provide an alternative bound, that is slightly weaker. Consider a modified version of Algorithm~\ref{alg:mnnucb_dist}, where line $3$
is substituted with:
\begin{equation}
\label{eq:modified_dist_alg}
    \text{Set } y_\tvar = \left(1 - 1/T_{u_\tvar} \right)^{\Tvar_{u_\tvar} - (t_{u_t}+1)} y_{f, \tvar}  + y_{g, \tvar}
\end{equation}
Recognize that this Algorithm does not require \ref{assump:1.1}.

Define
\begin{equation}
  \label{eq:bp_regret_weaker}
    \mathcal{R}_{\text{BP, 3}} (\Tvar) := \sum_{q=1}^m \min \left\{ 1 - \frac{1}{e}, \; 1 -\kappagk \right\} h_q (S_q^\ast) - h_q(S_q).
\end{equation}
Observe from the right panel of Figure~\ref{fig:contour} that the $\alpha$ in the definition above is still better than that of \cite{jeff_bp} for most choices of $\kappafk, \kappagk$. Now, we can state our modified result. The proof follows similarly to Theorem~\ref{thm:bp_regret_bound_dist_strong}.
\begin{proposition}
  \label{prop:bp_regret_bound_dist_weaker}
Let Assumption~\ref{ass:rkhs} and \ref{assump:1.2} and \ref{assump:1.3} hold. Additionally, let the conditions on $\epsilon_\tvar$ hold as in Theorem~\ref{thm:bp_regret_bound_dist_strong}. Then Algorithm~\ref{alg:mnnucb_dist} with the modification above yields
$
  \mathbb{E} [\mathcal{R}_{\text{BP, 3}} (\Tvar)] \leq O\left(\sqrt{\Tvar} \left(B \sqrt{\lambda \deff} + \deff\right) \right)
$
\end{proposition}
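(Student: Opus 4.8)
The plan is to follow the same three-layer structure as the proof of Theorem~\ref{thm:bp_regret_bound_dist_strong}: an offline robustness lemma for a distorted greedy rule, a reduction of $\mathcal{R}_{\text{BP, 3}}(\Tvar)$ to an accumulated instantaneous regret $R_\Tvar$, and finally an appeal to the kernel-bandit regret bound of \cite{zenati22a}. The one conceptual change is that, because \ref{assump:1.1} is dropped, we can no longer strip off the modular lower bound $l_{q,1}$ from $f_q$ before distorting. The substitution \eqref{eq:modified_dist_alg} therefore makes the algorithm approximately greedy with respect to the \emph{undecomposed} distorted objective
\[
\tilde\pi_{j,q}(v|A) = \left(1 - \frac{1}{\Tvar_q}\right)^{\Tvar_q - j - 1} f_q(v|A) + g_q(v|A),
\]
in which the entire submodular term (not just its totally normalized part) is down-weighted and $g_q$ is left untouched. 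Note that $(1-1/\Tvar_{u_\tvar})^{\Tvar_{u_\tvar}-(t_{u_\tvar}+1)}$ is computable under \ref{assump:1.3}, so this objective is realizable without knowing $l_{q,1}$.

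First I would establish the offline counterpart: for any output $S$ of the approximate distorted greedy rule associated with $\tilde\pi_{j,q}$, it holds that $h_q(S) \ge \min\{1 - \tfrac1e,\,1-\kappagk\}\,h_q(S^\ast) - \sum_{j=1}^{\Tvar_q} r_j$. This is obtained by rerunning the argument of Lemma~\ref{prop:robust_distorted_greedy} together with its supporting Lemmas~\ref{lem:dist_gain_bound} and~\ref{lem:distorted_rj_lemma}, under the formal substitution $l_{q,1}\equiv 0$, i.e.\ treating $f_q$ itself as the totally normalized submodular function. The only place this matters is the line of the proof of Lemma~\ref{prop:robust_distorted_greedy} that converts $\tfrac1e\, l_{q,1}(S^\ast)$ into $\tfrac{1-\kappafk}{e} f_q(S^\ast)$ via the submodular curvature bound: with $l_{q,1}$ no longer available to be carried undistorted, this term vanishes and the submodular part earns only the factor $1 - \tfrac1e$ rather than $1 - \tfrac{\kappafk}{e}$, while the supermodular bound $l_{q,2}(S^\ast)\ge(1-\kappagk)g_q(S^\ast)$ is unchanged. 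Taking the minimum of the two surviving coefficients and using $f_q(S^\ast),g_q(S^\ast)\ge 0$ (both are MNN) yields the ratio in~\eqref{eq:bp_regret_weaker}.

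Next I would define, exactly as in the proof of Theorem~\ref{thm:bp_regret_bound_dist_strong}, the instantaneous regret $r_\tvar$ as the gap between the best achievable and the realized value of $\tilde\pi_{j,q}$ at round $\tvar$, and set $R_\Tvar = \sum_{j=1}^\Tvar r_j$. Applying the offline lemma above to each $h_q$ gives $\mathcal{R}_{\text{BP, 3}}(\Tvar)\le R_\Tvar$, mirroring Inequality~\eqref{eq:bp_bound_cal_by_reg_dist}. It then remains to observe that the modified feedback $y_\tvar$ from~\eqref{eq:modified_dist_alg} is precisely a noisy query of a contextual marginal-gain function, so that minimizing $R_\Tvar$ is a kernel contextual bandit problem in the sense of \cite{zenati22a} with context $z_\tvar = (\phi_{u_\tvar}, S_{t_{u_\tvar}, u_\tvar})$; invoking their Theorem~4.1 yields $\mathbb E[R_\Tvar]\le O(\sqrt\Tvar(B\sqrt{\lambda\deff}+\deff))$, and combining with $\mathcal{R}_{\text{BP, 3}}(\Tvar)\le R_\Tvar$ completes the proof.

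The main obstacle I anticipate is verifying that the distorted target function underlying~\eqref{eq:modified_dist_alg} still satisfies Assumption~\ref{ass:rkhs} with bounded norm, so that the \cite{zenati22a} machinery applies verbatim. The distortion coefficient is round- and context-dependent, so the estimated object is not literally $\Delta(\cdot)$ but a time-varying reweighting of its $f$- and $g$-components; one must argue that this coefficient is a known, deterministic scalar in $[0,1]$ (using \ref{assump:1.3}) that can be folded into the kernel/context without inflating the RKHS norm beyond a constant multiple of $B$. This is the same subtlety already resolved in Theorem~\ref{thm:bp_regret_bound_dist_strong}, so its resolution carries over, but it is the step that must be checked rather than asserted.
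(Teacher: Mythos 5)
Your proposal matches the paper's proof: the paper likewise reruns Lemmas~\ref{lem:distorted_rj_lemma} and~\ref{lem:dist_gain_bound} with the modular lower bound $l_{q,1}$ dropped from the distortion (so the submodular term earns only $1-\tfrac{1}{e}$ while $l_{q,2}(S^\ast)\ge(1-\kappagk)g_q(S^\ast)$ is retained), then follows the argument of Lemma~\ref{prop:robust_distorted_greedy} to get the $\min\{1-\tfrac{1}{e},\,1-\kappagk\}$ ratio and reduces to the bandit bound of \cite{zenati22a} exactly as in Theorem~\ref{thm:bp_regret_bound_dist_strong}. Your closing caveat about the RKHS assumption for the time-varying distorted target is a fair observation, but it is treated no differently here than in the paper's own (terse) proof.
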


\begin{proof}[Proof of Proposition~\ref{prop:bp_regret_bound_dist_weaker}]
    For the modified version of the algorithm described in equation \eqref{eq:modified_dist_alg}, the analysis is almost identical. Repeating the analysis of Lemma~\ref{lem:distorted_rj_lemma} and Lemma~\ref{lem:dist_gain_bound}, we obtain:
\[
\fk(\Sk) + \gkone(\Sk) + l_{q,2}(\Sk) + \rmjsum \geq \left(1 - \frac{1}{e} \right) \fk (\Skstar) + l_{q,2} (\Skstar)
\]
Then, we can follow the same arguments as in Lemma~\ref{prop:robust_distorted_greedy} to conclude:
\[
    \fk(S) + \gk(S)\geq \min \left\{ 1 - \frac{1}{e}, \; 1 - \kappagk \right\} \; h_q (\Sk^\ast) - \rmjsum.
\]
\end{proof}

%%%%%%%%%%%%%%%%%%%%%%%%%%%%%%%%%%%%%%%%%%%%%%%%%%%%%%%%%%%%%%%%%%%%%%%%%%%%%%%%%%%%%%%%%%
\subsection{Remarks on Guess-and-double technique to replace \ref{assump:1.3}}
\label{sec:remarks-guess-double}

In this section, we provide a heuristic argument for why we expect that guess-and-double techniques should not affect the overall regret scaling in Theorem~\ref{thm:bp_regret_bound_dist_strong}.

In the traditional multi-armed bandit, when the time horizon $\Tvar$ is unknown, the proposed method of dealing with this is to start with an initial guess $\widehat{\Tvar} = 1$ and then double each time the current time step crosses our latest guess. Any parameters in the algorithm that depend on $\Tvar$ (step size for e.g) are set based on $\widehat{\Tvar}$ instead. This divides the entire horizon into phases, one for each guess $\widehat{\Tvar}$. Then, for each phase, the regret must be sublinear because this is equivalent to playing a shorter game with known horizon. Since the regret is the accumulation of the regrets of each phase, the overall regret must be sublinear as well.

However, in our case, the situation is more intricate because the overall regret is not expressible as the summation of regret over phases. Hence, the original style of argument does not apply. What we do then, is to keep track of the change in regret due to setting the distortion co-efficient in terms of $\widehat{\Tvar_q}$ instead of $\Tvar_q$. We choose $\hatTq = \min \{ 2^j: j > \tvar \}$.

When $T_q$ is known, the distortion $D_\tvar = \left(1 - \frac{1}{\Tvar_{u_\tvar}} \right)^{\Tvar_{u_\tvar} - t_{u_t} - 1}$ increases monotonically from $\left(1 - \frac{1}{\Tvar_{u_\tvar}} \right)^{\Tvar_{u_\tvar} - 1}$ to $1$ with $t_{u_t}$ i.e as more elements are added. This monotonicity is used in the original argument to obtain the sublinear regret guarantee.

However, when the guess-and-double technique is used, the distortion is no longer monotonic in $t_{u_t}$. Within each phase, $D_\tvar$ increases from $\left(1 - \frac{1}{\widehat{\Tvar_{u_\tvar}}} \right)^{\widehat{\Tvar_{u_\tvar}} - 1}$ to $1$ but then reduces once $\widehat{\Tvar_{u_\tvar}}$ is updated at the end of the phase. It turns out that the regret actually decreases within the phase (compared to the situation where we know $T_q$) due to the increased distortion, but increases in the transitions between the phases. Below, we characterize the changes in regret in the two cases.

Define
\[
\widehat{\Lambda}_{j, q} (x, A) = \left(1 - \frac{1}{\hatTq} \right)^{\hatTq - (j+1)} f_{q,1}(x | A) + g_{q,1}(x | A) + l_q (x)
\]

Analogously, we can define
\[
\hatpi_{j, q} (S) = \left(1 - \frac{1}{\hatTq} \right)^{\hatTq - j} f_{q,1} (S) + g_{q, 1} (S) + l_q(S)
\]

\paragraph{Case 1: Within phase}

Previously Lemma~4 from \cite{distorted_greedy}, we had
\begin{align*}
    &\pi_{j+1, q} (\Skjnext) - \pi_{j, q} (\Skj) \\
    & = \Lambda_{j, q} (s_j, \Skj) + \frac{1}{\Tk} \left(1 - \frac{1}{\Tk} \right)^{\Tk - (j +1)} f_{q,1} (\Skj)
\end{align*}

Now, we can replace this conclusion with
\begin{align*}
    &\hatpi_{j+1, q} (\Skjnext) - \hatpi_{j, q} (\Skj) \\
    & = \widehat{\Lambda}_{j, q} (s_j, \Skj) + \frac{1}{\hatTq} \left(1 - \frac{1}{\hatTq} \right)^{\hatTq - (j +1)} f_{q,1} (\Skj) \\
    & = \widehat{\Lambda}_{j, q} (s_j, \Skj) + \frac{1}{\hatTq} \left(1 - \frac{1}{\hatTq} \right)^{\hatTq - (j +1)}  f_{q,1} (\Skj) + \underbrace{\left(\frac{1}{\hatTq} - \frac{1}{\Tk} \right) \left(1 - \frac{1}{\hatTq} \right)^{\hatTq - (j +1)} f_1 (\Skj)}_{N_{\text{within}, j}}
\end{align*}

The term $N_{\text{within}, j}$ is a new term. The remainder of the proof goes through as expected, while these additional terms propagate through the proof.

\paragraph{Case 2: Between phase}

% TODO: is this j a time variable, change
Note that if step $j$ is in a different phase than step $j+1$, it follows that the distortion at step $j$ is
\[
\left(1 - \frac{1}{\hatTq} \right)^{\hatTq - \hatTq} = 1.
\]
Since step $\tvar+1$ is the first time step in a phase, it follows that the guess for $\hatTq$ just doubled, and is $\hatTq = 2i$. Then, the distortion for step $j+1$ is
\[
\left(1 - \frac{1}{2i} \right)^{\tvar-1} % TODO: should this i be \tvar?
\]
As in the Case $1$, we can track the extra term from Lemma~4, which in this case is
\[
N_{\text{between}, j} = - \left(1 - \left(1 - \frac{1}{2i} \right)^{\tvar-1} \right) f_1 (\Skj)
\]
As before, this new term propagates through the proof.

\paragraph{Putting it together}

Accounting for the new terms, our modified final statement of Lemma~\ref{prop:robust_distorted_greedy}
\[
\hk(\Sk) \geq \min \left\{ 1 - \frac{\kappafk}{e}, \; 1 - \kappagk \right\} \; h_q (\Sk^\ast) - \rmjsum + \sum_{j: \text{change}} N_{\text{between}, j} + \sum_{j: \text{no change}} N_{\text{within}, j}
\]
Above the indices $(j: \text{change})$ include the $\log(\Tk)$ time steps, which are the first time step in a phase i.e the first time step after our guess $\hatTq$ was recently updated; the indices $(j: \text{no change})$ include all other time steps. Hence, the new term
\[
N = \sum_{j: \text{change}} N_{\text{between}, j} + \sum_{j: \text{no change}} N_{\text{within}, j}
\]
gets subtracted from the regret. We observe that each of the $N_{\text{between}, j}$ terms are positive and there are many of these: $\Tk - \log(\Tk)$ to be precise. However, the $N_{\text{within}, j}$ terms are negative and increase the regret; however, there are only $\log(\Tk)$ of these. While it is difficult to quantify the terms exactly, there is no strong reason to believe that the few negative terms greatly outweigh the positive terms. From preliminary simulations, we find that the regret remains roughly the same with the doubling trick; we leave an extensive experimental investigation of this to future work.

%%%%%%%%%%%%%%%%%%%%%%%%%%%%%%%%%%%%%%%%%%%%%%%%%%%%%%%%%%%%%%%%%%%%%%%%%%%%%%%%%%%%%%%%%%
%%%%%%%%%%%%%%%%%%%%%%%%%%%%%%%%%%%%%%%%%%%%%%%%%%%%%%%%%%%%%%%%%%%%%%%%%%%%%%%%%%%%%%%%%%
%%%%%%%%%%%%%%%%%%%%%%%%%%%%%%%%%%%%%%%%%%%%%%%%%%%%%%%%%%%%%%%%%%%%%%%%%%%%%%%%%%%%%%%%%%
\section{Details on Experiments}
\label{sec:app_numerical}

\paragraph{Details for Table~\ref{tab:movie_illustrate_ground_set}, Table~\ref{tab:movie_illustrate}} The chosen toy ground set of $23$ elements is detailed in Table~\ref{tab:movie_illustrate_ground_set}. The submodular function is the facility location function; we chose this function because it is used in prior work \cite{NIPS2017_krause} for the task of movie recommendation. The supermodular part is the sum-sum-dispersion function, and the weights that capture the complementarity between movies are specified in the python notebook 
\url{code/table-1.ipynb} in the attached code.

From Table~\ref{tab:movie_illustrate}, we notice that with the submodular objective, the greedy algorithm chooses the first two movies in the Godfather series but does not choose the third. Similarly, it chooses the first Harry Potter but not the subsequent ones. In contrast, with the BP function, the greedy algorithm chooses all elements from the series in both cases. This behavior cannot be encoded using solely a submodular function, but it is very easy to do so with a BP function.

\paragraph{Setup for movie recommendation in Figure~\ref{fig:movie_performance}}

From MovieLens and using the matrix-completion approach in
\cite{cai_svt}, we obtain a ratings matrix
$M \in \mathbb{R}^{900 \times 1600}$, where $M_{i,j}$ is the rating of
the $i_{th}$ user for the $j^\text{th}$ movie; for density of data, we
consider the most active users and most popular movies.

%Using this dataset, we setup an interactive BP maximization problem, as formulated in Vignette~\ref{vignette:movie}.
We cluster the users into $m = 10$ groups using the $k$-means algorithm and design a BP objective for each user-group. The objective for the $q_{th}$ group is decomposed as $h_q(A) = \sum_{v \in A} m_q(v) + \lambda_1 f_q(A) + \lambda_2 g_q(A)$, where the modular part $m_q(v)$ is the average rating for movie $v$ amongst all users in group $k$.

Let the set $L$ refer to the collection of all genres in the ground set. The concave-over-modular submodular part encourages the recommender to maintain a balance across genres in chosen suggestions:
$
f_q (A) = \sum_{g \in L} \sqrt{1 + u_{q,g} (A)}.
$
The set $L$ is the collection of all genres. We now specify what $u_{q,g} (\cdot)$ is. For each element $v \in V$, define a vector $r(v) \in \{0,1\}^{|L|}$. Here, each entry corresponds to a genre and is $1$ if the genre is associated with the movie $v$. Then let $N_v = r(v)^\top \mathbf{1}$ denote the number of genres for movie $v$. In $f_q(\cdot)$, we specify
\[
u_{q,g}  (A) =  \sum_{v \in A}  \mathbf{1}(m_q (v) > \tau) \frac{\mathbf{1}(\text{v has genre g})}{N_v}
\]
Above, $\mathbf{1}$ is the indicator function.

The supermodular function, in contrast is designed to encourage the optimizer to exploit complementarities within genres
$
g_q (A) = \sum_{g \in L} \left(1 + \tilde{u}_{q,g} (A) \right)^2,
$
where we define
\[
\Tilde{u}_{q,g}  (A) =  \sum_{v \in A}  \mathbf{1}(\text{v has genre g}(m_k (v) > \tau) ) \frac{m_q (v)}{N_v}
\]
We want the complementarities to be amplified when the movies have higher ratings, so notice that each term in $\Tilde{u}_{q,g}$ is scaled by $m_q (v)$ relative to each term of $u_{q,g}$.

%which measures how well set $A$ does at including movies from genre $g$ that are highly rated by user group $q$.
The constants $\lambda_1, \lambda_2$ were chosen such that the supermodular part slightly dominates the submodular part, since previous works already study functions that are primarily submodular. The code is contained in notebook ``Figure 2.''

% \underline{Construction of matrix $C$:} The matrix $C$ is specified as follows:
% \[
% C_{i,j} = \cos \theta_{i,j},
% \]
% where $\theta_{i,j}$ is the angle between vector $r(v_\tvar)$ and $r(v_j)$. Hence, the supermodular part $g_k (A)$ is the same for all $k$, and the function definition does not change for each user group $k$.

\paragraph{Kernel Estimation for Figure~\ref{fig:movie_performance}}

For Algorithm~\ref{alg:smucb}, we choose the RBF kernel for movies, the linear kernel for users and the Jaccard kernel for a history of recommendations. The composite kernel $\kernel((u, v, A), (u', v', A')) = \kappa_1 \kernel_{\text{user}}(u, u') + \kappa_2 \kernel_{\text{movie}}(v, v') + \kappa_3 \kernel_{\text{history}}(A, A')$ for $\kappa_1, \kappa_2, \kappa_3 > 0$. For Algorithm~\ref{alg:mnnucb_dist}, we choose the RBF kernel for $o_\tvar$.

\paragraph{Active Learning.} This corresponds to Vignette~\ref{vignette:active_learning} with $m=1$ tasks.
We apply the  Naive-Bayes formulation of active learning in Equation~(5) of \cite{wei_active_learning} and set the submodular part as $f(A) = f^{\text{NB}}(A)$.
The supermodular part is the sum-sum-dispersion function as above $g (A) = \sum_{v_\tvar \in A} \sum_{v_j \in A: v_j \neq v_\tvar} B_{\tvar, j}$.
Here $B_{\tvar,j} = 0$ if $(v_\tvar, v_j)$ are from the same class, and $B_{\tvar,j} = 1/\text{dist}(v_\tvar, v_j)$ if $(v_\tvar, v_j)$ are from the opposite class; this encourages the selection of proximal points from different classes.

Here, we elaborate on the choice of submodular function. Assume our features are discrete - each point $v \in V$ has features $x_v \in \mathcal{X}$ (where $\mathcal{X}$ is some finite set) and binary label $y_v \in \{0,1\}$, denoted by the orange and blue colors in Figure~\ref{fig:active_learn}. Then, for any $(x \in \mathcal{X}, y \in \{0,1\})$ and for any subset of training points $S \subseteq V$, we can define
\[
m_{x, y} (S) = \sum_{v \in S} \mathbf{1} (x_v = x \land y_v = y)
\]
as the empirical count of the joint occurrence of $(x, y)$ in $S$. Then, inspired by the construction in \citeauthor{wei_active_learning}, we define the submodular part $f$ as
\[
f(S) = \sum_{x \in \mathcal{X}} \sum_{y \in \{0, 1\}} \sqrt{m_{x, y} (V) }\log(m_{x, y} (S))
\]
To obtain the finite set $\mathcal{X}$, we discretize our $2$-dimensional features into $56$ boxes. The square-root in the expression above does not occur in the original paper and was introduced by us due to better empirical performance. The intuition for constructing $f(\cdot)$ in this way is that the feature $x$ should appear alongside label $y$ in the chosen subset with roughly the same frequency as in the ground training set.

\end{document}